\spnewtheorem{definition}{Definition}{\bfseries}{\itshape}
\spnewtheorem{example}{Example}{\itshape}{\rmfamily}
\newcommand*\Let[2]{\State #1 $\gets$ #2}
\algrenewcommand\algorithmicrequire{\textbf{Precondition:}}
\algrenewcommand\algorithmicensure{\textbf{Postcondition:}}
\spnewtheorem{myexp}{Example}{\itshape}{\rmfamily}
\newcounter{running}
\newenvironment{myexpcont}
{\begin{myexp}\hspace{-0.5em}{\it \arabic{running}.}\hspace{0.5em}}
  {{\unskip\nobreak\hfil\penalty50\hskip1em\null\nobreak\hfil$\lozenge$\parfillskip=0pt\finalhyphendemerits=0\endgraf}\end{myexp}\addtocounter{myexp}{-1}\addtocounter{running}{1}}
\DeclareMathOperator{\Posi}{Posi}
\DeclareMathOperator{\Rs}{Rs}
\DeclareMathOperator{\Ex}{Ex}
\newcommand{\R}{\mathbb{R}}
\newcommand{\N}{\mathbb{N}}
\newcommand{\V}{\mathscr{V}}
\newcommand{\Q}{\mathscr{Q}}
\newcommand{\QQ}{\mathscr{A}}
\newcommand{\Ka}{\bar{\mathbf{K}}}
\newcommand{\A}{\mathcal{A}}
\newcommand{\AAA}{A_1,\dots,A_n}
\newcommand{\Lamp}{\Lambda^+}
\newcommand{\lala}{\boldsymbol{\lambda}}
\newcommand{\mumu}{\boldsymbol{\mu}}
\newcommand{\X}{\mathcal{X}}
\newcommand{\succc}{>}
\newcommand{\precc}{<}
\newcommand{\precceq}{\le}
\newcommand{\uu}{\mathbf{u}}
\newcommand{\vv}{\mathbf{v}}
\newenvironment{proofof}[1]{\par\noindent{\bfseries\upshape Proof of #1\ }}{\qed\medskip}
\begin{document}
\title{Inference with Choice Functions Made Practical}
\author{Arne Decadt 
\and
Jasper De Bock
\and
Gert de Cooman
}
\authorrunning{A. Decadt et al.}
\institute{
FLip, ELIS, Ghent University, Belgium\\
\email{\{arne.decadt,jasper.debock,gert.decooman\}@ugent.be}}
\maketitle             
\begin{abstract}
We study how to infer new choices from previous choices in a conservative manner.
To make such inferences, we use the theory of choice functions: a unifying mathematical framework for conservative decision making that allows one to impose axioms directly on the represented decisions.
We here adopt the coherence axioms of De Bock and De Cooman (2019).
We show how to naturally extend any given choice assessment to such a coherent choice function, whenever possible, and use this natural extension to make new choices.
We present a practical algorithm to compute this natural extension and provide several methods that can be used to improve its scalability.

\keywords{Choice functions  \and Natural extension \and Algorithms \and Sets of desirable option sets.}
\end{abstract}
\section{Introduction}
\label{sec:intro}
In classical probability theory, decisions are typically made by maximizing expected utility.
This leads to a single optimal decision, or a set of optimal decisions all of which are equivalent.
In imprecise probability theory, where probabilities are only partially specified, this decision rule can be generalized in multiple ways;
Troffaes \cite{troffaes2007decision} provides a nice overview.
A typical feature of the resulting imprecise decision rules is that they will not always yield a single optimal decision, as a decision that is optimal in one probability model may for example be sub-optimal in another.

We will not focus on one particular imprecise decision rule though.
Instead, we will adopt the theory of choice functions: a mathematical framework for decision making that incorporates several (imprecise) decision rules as special cases, including the classical approach of maximizing expected utility \cite{de2020archimedean,de2019interpreting,seidenfeld2010coherent}.
An important feature of this framework of choice functions is that it allows one to impose axioms directly on the decisions that are represented by such a choice function \cite{de2019interpreting,seidenfeld2010coherent,van2018natural}.
We here adopt the coherence axioms that were put forward by De Bock and De Cooman \cite{de2019interpreting}.

As we will explain and demonstrate in this contribution, these coherence axioms can be used to infer new choices from previous choices.
For any given assessment of previous choices that is compatible with coherence, we will achieve this by introducing the so-called natural extension of this assessment: the unique most conservative coherent choice function that is compatible with the assessment.

We start in \cref{sec:choicef} with an introduction to choice functions and coherence and then go on to define the natural extension of choice assessments in \cref{sec:Natext}.
From then on, we work towards a method to compute this natural extension.
An important step towards this goal consists in translating our problem to the setting of sets of desirable option sets; we do this in \cref{sec:SODOS}.
In this setting, as we show in \cref{sec:choose}, the problem takes on a more manageable form.
Still, the complexity of the problem depends rather heavily on the size of the assessment that is provided.
To address this issue, \cref{sec:Ass} presents several methods that can be used to replace an assessment by an equivalent yet smaller one.
\cref{sec:prac} then presents a practical algorithm that implements our approach.
\cref{sec:conc} concludes the paper and provides some suggestions for future work.

\iftoggle{extendedversion}{
To adhere to the page-limit constraint, all proofs are omitted.
They are available in an extended on-line version~\cite{decadt2020inference}.
}{}

\section{Choice functions}\label{sec:choicef}
A choice function $C$ is a function that, when applied to a set of options, chooses one or more options from that set.
Usually the options are actions that have a corresponding reward.
This reward furthermore depends on the state of an unknown---uncertain---variable $X$ that takes values in a set $\X$.
We will assume that the rewards can be represented by real numbers, on a real-valued utility scale.
In this setting, an option $u$ is thus a function from states $x$ in $\X$ to $\R$.
We will denote the set of all possible options by $\V$.
Evidently, $\V$ is a vector space with vector addition $u+v\colon x\mapsto u(x)+v(x)$ and scalar multiplication $\lambda u\colon x\mapsto \lambda u(x)$ for all $x\in \X$, $\lambda\in\R$ and $u,v\in\V$.
Moreover, we endow $\V$ with the partial order $\precceq$ defined by $u\precceq v\Leftrightarrow (\forall x\in \X) u(x)\le v(x)$ for all $u,v\in\V$ and a corresponding strict version $\precc$, where $u\precc v$ if both $u\neq v$ and $u\precceq v$.\footnote{Our results in section 2 and 3 also work for any ordered vector space over the real numbers; we restrict ourselves to the particular order $\precc$ for didactic purposes.}
To make this more tangible, we consider the following toy problem as a running example.
\begin{myexpcont}
A farming company cultivates tomatoes and they have obtained a large order from a foreign client.
However, due to government regulations they are not sure whether they can deliver this order.
So, the state space $\X$ is $\{$order can be delivered, order cannot be delivered$\}$.
The company now has multiple options to distribute their workforce.
They can fully prepare the order, partially prepare the order or not prepare the order at all.
Since $\X$ only has two elements, we can identify the options with vectors in $\R^2$.
We will let the first component of these vectors correspond to the reward if the order can be delivered.
For example, the option of fully preparing the order could correspond to the vector $v_1\coloneqq(5,-3)$.
If the order goes through, then the company receives a payment---or utility---of $5$ for that order.
However, if the order does not go through, the company ``receives'' a negative reward $-3$, reflecting the large amount of resources that they spent on an order that could not be delivered in the end.
\end{myexpcont}

We will restrict ourselves to choices from finite sets of options. That is, the domain of our choice functions will be $\Q\coloneqq\{A\subset\V:n\in\N,|A|=n\}\cup \{\emptyset\}$: the set of all finite subsets of $\V$, including the empty set.
Formally, a choice function is then any function $C\colon \Q\to\Q$ such that $C(A)\subseteq A$ for all $A\in\Q$.
We will also consider the corresponding rejection function $R_C\colon \Q\to\Q\colon A\mapsto A\setminus C(A)$. 

\begin{myexpcont}
We will let the choice function $C$ correspond to choices that the strategic advisor of the company makes or would make for a given set of options, where these choices can be multi-valued whenever he does not choose a single option. 
Suppose for example that he has rejected $v_3$ and $v_4$ from a set $B_1\coloneqq\{v_1,v_2,v_3,v_4\}$, with
$
v_1\coloneqq(5,-3),$ $v_2\coloneqq(3,-2),$ $v_3\coloneqq(1,-1),$ and $v_4\coloneqq(-2,1),
$
but remains undecided about whether to choose $v_1$ or $v_2$.
This corresponds to the statement $C(B_1)=\{v_1,v_2\}$, or equivalently, $R_C(B_1)=\{v_3,v_4\}$.
\end{myexpcont}

We will give the following interpretation to these choice functions.
For every set $A\in \Q$ and option $u\in A$, we take $u\in C(A)$---$u$ is chosen---to mean that there is no other option in $A$ that is preferred to $u$.
Equivalently, $u\in R_C(A)$---$u$ is rejected from $A$---if there is an option in $A$ that is preferred to $u$.
The preferences in this interpretation are furthermore taken to correspond to a strict partial vector order $\prec$ on $\V$ that extends the original strict order $<$.
This implies that it should have the following properties: for all $u,v,w\in\V$ and $\lambda>0$, 
\begin{flalign*}
&O_1. \quad\text{$u\not\prec v$ or $v\not\prec u$}, &\text{(antisymmetry)}&\\
&O_2. \quad\text{if $u\prec v$ and $v\prec w$ then also $u\prec w$}, &\text{(transitivity)}&\\
&O_3. \quad\text{if $u\prec v$ then also $u+w\prec v+w$}, &\text{(translation invariance)}&\\
&O_4. \quad\text{if $u\prec v$ then also $\lambda u\prec \lambda v$}. &\text{(scaling invariance)}
\end{flalign*}
Crucially, however, the strict partial order $\prec$ need not be known.
Instead, in its full generality, our interpretation allows for the use of a set of strict partial orders, only one of which is the true order $\prec$.
As shown in reference \cite{de2019interpreting}, this interpretation can be completely characterized using five axioms for choice functions.
Rather than simply state them, we will motivate them one by one starting form our interpretation.

The first axiom states that we should always choose at least one option, unless we are choosing from the empty set:
\begin{flalign*}
\mathrm{C}_0. \quad C(A)\neq \emptyset \text{ for all $A\in\Q\setminus\{\emptyset\}$}. &&
\end{flalign*}
This follows directly from our interpretation. 
Indeed, if every option in $A$ would be rejected, then for every option $v$ in $A$, there would be some other option in $A$ that is preferred to $v$.
Transitivity $(O_2)$ would then imply that $A$ contains an option that is preferred to itself, contradicting antisymmetry $(O_1)$.
To understand the second axiom, first observe that it follows from translation invariance $(O_3)$ that $u\prec v$ if and only if $0\prec v-u$.
So if we let $A-u\coloneqq \{v-u\colon v\in A\}$,
then it follows from our interpretation that $u$ is chosen from $A$ if and only if $0$ is chosen from $A-u$:
\begin{flalign*}
\mathrm{C}_1. \quad u\in C(A)\Leftrightarrow 0\in C(A-u)\text{ for all $A\in\Q$ and $u\in A$}. &&
\end{flalign*}
An important consequence of this axiom, is that knowing from which sets zero is chosen suffices to know the whole choice function.
To formalize this, we introduce for any choice function $C$ a corresponding set of option sets
\begin{equation}\label{eq:KC}
K_C\coloneqq\{A\in \Q \colon 0\notin C(A\cup\{0\})\}.
\end{equation}
For any option set $A\in K_C$, it follows from our interpretation for $C$ that $A\cup\{0\}$ contains at least one option that is preferred to zero.
Or equivalently, since zero is not preferred to itself because of $O_1$, that $A$ contains at least one option that is preferred to zero.
We call such an option set $A$ a \emph{desirable option set} and will therefore refer to $K_C$ as the \emph{set of desirable option sets} that corresponds to $C$.
Since it follows from $\mathrm{C}_1$ that
\begin{equation}\label{eq:goback}
(\forall A \in \Q)(\forall u\in A)\; u\not\in C(A) \Leftrightarrow A-u\in K_C,
\end{equation}
We see that the set of desirable option sets $K_C$ fully characterizes the choice function $C$.
Whenever convenient, we can therefore express axioms for $C$ in terms of $K_C$ as well.
The next axiom is a first example where this is convenient.
Since the preference order~$\prec$ is taken to extend the order~$<$, it follows that we must prefer all elements of $\V_{>0}\coloneqq\{u\in\V\colon 0< u\}$ to zero.
Hence, $\{u\}$ is a desirable option set for all $u\in \V_{>0}$:
\begin{flalign*}
\mathrm{C}_2. \quad \{u\}\in K_C\text{ for all }u\in\V_{>0}. &&
\end{flalign*}
Or to state it in yet another way: the set of positive singletons $\V^s_{>0}\coloneqq\{\{u\}\colon u\in \V_{>0}\}$ should be a subset of $K_C$.

Another axiom that is easier to state in terms of $K_C$ follows from the fact that we can take positive linear combinations of preferences. 
For example, if we have two non-negative real numbers $\lambda_1,\lambda_2$ and they are not both zero, and we know that $0\prec u_1$ and $0\prec u_2$, then it follows from $O_2, O_3$ and $O_4$ that also $0\prec \lambda_1 u_1 + \lambda_2 u_2$.
To state this more compactly, we let\footnote{We will denote tuples in boldface and their elements roman with a positive integer index corresponding to their position.}
\[
\textstyle
\R^{n,+}\coloneqq \{\lala\in\R^n:(\forall j\in \{1,\dots,n\})(\lambda_j\geqslant0),\sum_{j=1}^n \lambda_j>0\}
\]
for any positive integer $n$ and introduce a product operation $\lala \uu\coloneqq \sum_{j=1}^n \lambda_j u_j$ for tuples $\lala=(\lambda_1,\ldots,\lambda_n)\in\R^{n}$ with tuples of vectors $\uu=(u_1,\ldots,u_n)\in\mathscr{V}^n$.
Then for any $\lala\in \R^{2,+}$ and any $\uu\in\V^2$ such that $0\prec u_1$ and $0\prec u_2$, we have that $0\prec \lala \uu$.
Consider now two sets $A,B\in K_C$. 
Then as we explained after \cref{eq:KC}, each of them contains at least one option that is preferred to zero.
Hence, by the reasoning above, there will be at least one pair of options $\uu=(u_1,u_2)$ in $A\times B$ for which $0\prec u_1$ and $0\prec u_2$ and thus $0\prec \lala \uu=\lambda_1 u_1+\lambda_2 u_2$ for $\lala\in\R^{2,+}$.
Thus, the set of all such possible combinations---where $\lala$ can depend on $\uu$---will contain at least one option preferred to zero en must therefore belong to $K_C$:
\begin{flalign*}
\mathrm{C}_3. \quad \{\boldsymbol{\lambda}(\mathbf{u}) \mathbf{u}\colon \mathbf{u} \in A\times B\}\in K_C \text{ for all } \boldsymbol{\lambda}\colon A\times B \to \R^{2,+} \text{ and } A,B\in K_C. &&
\end{flalign*}

The final axiom states that if an option $u$ is rejected from an option set $A$, then it will also be rejected from any superset $B$:
\begin{flalign*}
\mathrm{C}_4. \quad A\subseteq B \Rightarrow R_C(A)\subseteq R_C(B) \text{ for all }A,B\in \Q. &&
\end{flalign*}
Once more, this follows from our interpretation for $C$.
If $u$ is rejected from $A$, then this means that there is an element $v\in A$ that is preferred to $u$. 
Since $v$ and $u$ also belong to $B$, it thus follows that $u$ is rejected from $B$ as well.
This axiom is also known as Sen's condition $\alpha$ \cite{sen1971choice}.

\begin{definition}
We call a choice function $C\colon\Q\to\Q$ \emph{coherent} if it satisfies axioms $\mathrm{C}_0-\mathrm{C}_4$. The set of all coherent choice functions is denoted by $\mathcal{C}$.
\end{definition}
A crucial point is that the axioms $\mathrm{C}_0-\mathrm{C}_4$ are the same as the axioms $\mathrm{R}_0-\mathrm{R}_4$ in \cite{de2019interpreting}, but tailored to our notation and in a different order; $\mathrm{C}_0$ corresponds to $\mathrm{R}_1$ whereas $\mathrm{C}_1$ corresponds to $\mathrm{R}_0$.
Interestingly, as proven in reference \cite{de2019interpreting}, the axioms $\mathrm{C}_0-\mathrm{C}_4$ are therefore not only necessary for a choice function to correspond to a set of strict partial orders, but sufficient as well.

\section{Natural extension}\label{sec:Natext}

Fully specifying a coherent choice function is hard to do in practice, because this would amount to specifying a set-valued choice for every finite set of options, while at the same time taking into account the coherence axioms.
Instead, a user will typically only be able to specify $C(A)$---or $R_C(A)$---for option sets $A$ in some small---and often finite---subset $\mathcal{O}$ of $\Q$.
For all other option sets $A\in \Q\setminus \mathcal{O}$, we can then set $C(A)=A$ because this adds no information to the choice function assessment.
The resulting choice function may not be coherent though.
To make this more concrete, let us go back to the example.

\begin{myexpcont}
Suppose that the strategic advisor of the farming company has previously rejected the options $v_3$ and $v_4$ from the option set $B_1$, as in {Example~1.2}, and has chosen $v_6$ from $B_2\coloneqq\{v_5,v_6\}$, where
$
v_5\coloneqq(3,1)$ and $v_6\coloneqq(-4,8).
$
This corresponds to the choice function assessments
\(
C(B_1)=\{v_1,v_2\} \text{ and } C(B_2)=\{v_6\}.
\)
Suppose now that the company's strategic advisor has fallen ill and the company is faced with a new decision problem that amounts to choosing from the set $B_3=\{(-3,4),(0,1),(4,-3)\}$.
Since no such choice was made before, the conservative option is to make the completely uninformative statement $C(B_3)=B_3$.
However, perhaps the company can make a more informative choice by taking into account the advisor's previous choices?
\end{myexpcont}

In order to make new choices based on choices that have already been specified---as in the last question of the example---we can make use of coherence.
Indeed, all the coherence axioms except $C_0$ allow one to infer new choices---or rejections---from other ones.
In this way, we obtain a new choice function $C'$ that is more informative than $C$ in the sense that $C'(A)\subseteq C(A)$ for all $A\in \Q$.
Any such choice function $C'$ that is more informative than $C$, we call an extension of $C$.
In order to adhere to the coherence axioms, we are looking for an extension $C'$ of $C$ that is coherent.
We denote the set of all such extensions by $\mathcal{C}$.

Whenever at least one such coherent extension exists, we call $C$ consistent.
The least informative such coherent extension of $C$ is then called its natural extension, as it is the only coherent extension of $C$ that follows solely from $C$ and the coherence axioms, without adding any additional unwarranted information.
In order to formalize this notion, we let
\[
\mathcal{C}_C\coloneqq \{C'\in \mathcal{C} \colon C'(A)\subseteq C(A) \text{ for all } A\in\Q \}
\] 
and let $\Ex(C)$ be defined by
\[
\Ex(C)(A)\coloneqq \bigcup_{C'\in \mathcal{C}_C} C'(A) \text{ for all $A\in\Q$},
\]
where, by convention, the union over an empty set is taken to be empty itself.

\begin{definition}
For any choice function $C$, we call $C$ \emph{consistent} if $\mathcal{C}_C\neq \emptyset$ and then refer to $\Ex(C)$ as the {\it natural extension} of $C$.
\end{definition}
\begin{theorem}\label{th:coh}
For any choice function $C$ that is consistent, $\Ex(C)$ is the least informative coherent extension of $C$. That is, $\Ex(C)\in\mathcal{C}_C$ and, for all $C'\in\mathcal{C}_C$, we have that $C'(A)\subseteq \Ex(C)(A)$ for all $A\in\Q$. If on the other hand $C$ is not consistent, then $\Ex(C)$ is incoherent and $\Ex(C)(A)=\emptyset$ for all $A\in \Q$.
\end{theorem}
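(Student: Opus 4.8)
The plan is to dispose of the structural claims immediately and then concentrate on the one substantive point, namely that $\Ex(C)$ is coherent when $C$ is consistent. First, for every $A\in\Q$ we have $\Ex(C)(A)=\bigcup_{C'\in\mathcal{C}_C}C'(A)\subseteq C(A)$, since each $C'\in\mathcal{C}_C$ satisfies $C'(A)\subseteq C(A)$; hence $\Ex(C)$ is an extension of $C$. The ``least informative'' clause is then immediate from the definition of $\Ex(C)$ as a pointwise union: for any $C'\in\mathcal{C}_C$ and any $A\in\Q$ we have $C'(A)\subseteq\bigcup_{C''\in\mathcal{C}_C}C''(A)=\Ex(C)(A)$. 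So everything reduces to showing $\Ex(C)\in\mathcal{C}$, which together with the extension property yields $\Ex(C)\in\mathcal{C}_C$, as wanted.

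To prove coherence of $D\coloneqq\Ex(C)$ I would verify $\mathrm{C}_0$--$\mathrm{C}_4$ directly, using that $u\in D(A)$ holds iff $u\in C'(A)$ for \emph{some} $C'\in\mathcal{C}_C$. Since each $C'(A)\subseteq A$, also $D(A)\subseteq A$, so $D$ is a choice function. For $\mathrm{C}_0$, consistency provides some $C'\in\mathcal{C}_C$, and then $\emptyset\neq C'(A)\subseteq D(A)$ for every $A\neq\emptyset$. For $\mathrm{C}_1$, the existential quantifier commutes with the $\mathrm{C}_1$-biconditional of each $C'$: $u\in D(A)\Leftrightarrow(\exists C'\in\mathcal{C}_C)\,0\in C'(A-u)\Leftrightarrow 0\in D(A-u)$.

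The remaining three axioms are where a little care is needed, because they are best handled after dualizing the union into an intersection. For $\mathrm{C}_4$, write $R_D(A)=A\setminus D(A)=\bigcap_{C'\in\mathcal{C}_C}\bigl(A\setminus C'(A)\bigr)=\bigcap_{C'\in\mathcal{C}_C}R_{C'}(A)$; then for $A\subseteq B$, applying $\mathrm{C}_4$ to each $C'$ gives $R_{C'}(A)\subseteq R_{C'}(B)$, and intersecting over $C'$ yields $R_D(A)\subseteq R_D(B)$. Dually, $A\in K_D$ iff $0\notin C'(A\cup\{0\})$ for \emph{every} $C'\in\mathcal{C}_C$, so $K_D=\bigcap_{C'\in\mathcal{C}_C}K_{C'}$; hence $\{u\}\in K_D$ for all $u\in\V_{>0}$ because $\{u\}\in K_{C'}$ for every $C'$ by $\mathrm{C}_2$, which gives $\mathrm{C}_2$, and if $A,B\in K_D$ then $A,B\in K_{C'}$ for every $C'$, so $\{\lala(\uu)\uu\colon\uu\in A\times B\}\in K_{C'}$ for every $C'$ and every $\lala\colon A\times B\to\R^{2,+}$ by $\mathrm{C}_3$, which gives $\mathrm{C}_3$. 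The main obstacle, such as it is, is exactly this bookkeeping: $\mathrm{C}_0$ and $\mathrm{C}_1$ go through with an ``$\exists C'$'', whereas $\mathrm{C}_2$, $\mathrm{C}_3$ and $\mathrm{C}_4$ require a ``$\forall C'$'' and therefore only work once one observes that $K_D$ and $R_D$ are intersections rather than unions.

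Finally, the inconsistent case is a one-liner: if $\mathcal{C}_C=\emptyset$ then, by the stated convention that an empty union is empty, $\Ex(C)(A)=\emptyset$ for every $A\in\Q$; in particular $\Ex(C)$ violates $\mathrm{C}_0$ on any nonempty option set, e.g.\ $\{0\}$, hence is incoherent.
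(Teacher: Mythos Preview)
Your proof is correct and follows essentially the same approach as the paper: handle $\mathrm{C}_0$ and $\mathrm{C}_1$ directly via the union, then dualize to $R_D=\bigcap_{C'}R_{C'}$ and $K_D=\bigcap_{C'}K_{C'}$ to get $\mathrm{C}_4$, $\mathrm{C}_2$ and $\mathrm{C}_3$, and treat the inconsistent case via the empty-union convention. If anything, you are slightly more explicit than the paper in separately verifying the extension property $\Ex(C)(A)\subseteq C(A)$ needed for $\Ex(C)\in\mathcal{C}_C$.
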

Given a choice function $C$ that summarizes earlier choices and a new decision problem that consists in choosing from some set of options $A$, we now have a principled method to make this choice, using only coherence and the information present in $C$.
First we should check if $C$ is consistent.
If it is not, then our earlier choices are not compatible with coherence, and would therefore better be reconsidered.
If $C$ is consistent, then the sensible choices to make are the options in $\Ex(C)(A)$, since the other options in $A$ can safely be rejected taking into account coherence and the information in $C$.
If $\Ex(C)(A)$ contains only one option, we arrive at a unique optimal choice.
If  not, then adding additional information is needed in order to be able to reject more options.
The problem with this approach, however, is that it requires us to check consistency and evaluate $\Ex(C)(A)$.
The rest of this contribution is devoted to the development of practical methods for doing so.
We start by translating these problems to the language of sets of desirable option sets.

\section{Sets of Desirable Option Sets}\label{sec:SODOS}
As explained in \cref{sec:choicef}, every coherent choice function $C$ is completely determined by its corresponding set of desirable option sets $K_C$.
Conversely, with any set of desirable option sets $K\subseteq \Q$, we can associate a choice function $C_K$, defined by
\begin{equation}\label{eq:CK}
C_K(A)\coloneqq \{u\in A \colon (A-u)\setminus \{0\}\notin K\}\text{ for all $A\in\Q$.}
\end{equation}
In order for $C_K$ to be coherent, it suffices for $K$ to be coherent in the following sense.

\begin{definition}\label{def:CohK}
A set of desirable option sets $K\subseteq \Q$ is called \emph{coherent} \cite{de2018desirability} if for all $A,B\in K$:  
\begin{flalign*}
&\mathrm{K}_0. \quad A\setminus\{0\}\in K;&&\\
&\mathrm{K}_1. \quad \{0\}\notin K;&&\\
&\mathrm{K}_2. \quad \V^s_{>0} \subseteq K;&&\\
&\mathrm{K}_3. \quad \{\boldsymbol{\lambda}(\mathbf{u}) \mathbf{u}\colon \mathbf{u} \in A\times B\} \subseteq K \text{ for all }\boldsymbol{\lambda}\colon A\times B \to \R^{2,+};&&\\
&\mathrm{K}_4. \quad A \cup Q \in K \text{ for all }Q\in\Q.&&
\end{flalign*}
We denote the set of all coherent $K$ by $\Ka$.
\end{definition}
\begin{proposition}\label{prop:KcohCK}
If a set of desirable option sets $K\subseteq \Q$ is coherent, then $C_K$ is coherent too.
\end{proposition}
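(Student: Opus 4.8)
The plan is to verify directly that $C_K$ satisfies each of the five coherence axioms $\mathrm{C}_0$--$\mathrm{C}_4$, using the corresponding axiom $\mathrm{K}_0$--$\mathrm{K}_4$ for $K$. The two identities I would lean on throughout are the defining equation~\eqref{eq:CK}, which for $u\in A$ reads $u\in C_K(A)\Leftrightarrow (A-u)\setminus\{0\}\notin K$, and the resulting characterisation $Q\in K_{C_K}\Leftrightarrow Q\setminus\{0\}\in K$ for every $Q\in\Q$, obtained by unfolding the condition $0\notin C_K(Q\cup\{0\})$.

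Four of the axioms are then short. Axiom $\mathrm{C}_1$ needs no coherence at all: for $u\in A$, both $u\in C_K(A)$ and $0\in C_K(A-u)$ unfold to the single condition $(A-u)\setminus\{0\}\notin K$. Axiom $\mathrm{C}_2$ is immediate from $\mathrm{K}_2$, since for $u\in\V_{>0}$ we have $\{u\}\setminus\{0\}=\{u\}\in K$ and hence $\{u\}\in K_{C_K}$. Axiom $\mathrm{C}_4$ follows from $\mathrm{K}_4$: if $u\in R_{C_K}(A)$ and $A\subseteq B$, then $(A-u)\setminus\{0\}\in K$ and $(B-u)\setminus\{0\}$ is a superset of it, hence in $K$ by $\mathrm{K}_4$, so $u\in R_{C_K}(B)$. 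For $\mathrm{C}_3$, given $A,B\in K_{C_K}$ and $\lambda\colon A\times B\to\R^{2,+}$, we have $A\setminus\{0\},B\setminus\{0\}\in K$, so $\mathrm{K}_3$ applied to these two sets (with $\lambda$ restricted to their product) produces an option set in $K$ that is contained in $\{\lambda(\uu)\uu\colon\uu\in A\times B\}$; stripping a possible $0$ off it with $\mathrm{K}_0$ and then enlarging it with $\mathrm{K}_4$ gives $\{\lambda(\uu)\uu\colon\uu\in A\times B\}\setminus\{0\}\in K$, which is exactly $\{\lambda(\uu)\uu\colon\uu\in A\times B\}\in K_{C_K}$.

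The real work, and the step I expect to be the main obstacle, is axiom $\mathrm{C}_0$: that $C_K(A)\neq\emptyset$ whenever $A\neq\emptyset$. I would prove this by induction on $|A|$. The base case $|A|=1$ reduces to $\emptyset\notin K$, which holds because $\emptyset\in K$ would force $\{0\}\in K$ via $\mathrm{K}_4$, contradicting $\mathrm{K}_1$. For the induction step, suppose $A=\{u_1,\dots,u_{n+1}\}$ has $C_K(A)=\emptyset$, i.e.\ $(A-u_i)\setminus\{0\}\in K$ for every $i$, and set $A'\coloneqq A\setminus\{u_{n+1}\}$. Fixing $i\le n$, I would apply $\mathrm{K}_3$ to the two sets $(A-u_i)\setminus\{0\}$ and $(A-u_{n+1})\setminus\{0\}$, both in $K$, with the coefficient map chosen as follows: on a pair whose first component is $u_j-u_i$ with $j\le n$ take $(1,0)$, so that component is kept; on a pair whose first component is $u_{n+1}-u_i$ take $(1,1)$, so the $u_{n+1}$ terms cancel and only some $u_k-u_i$ with $k\le n$ survives. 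The option set produced is exactly $((A'-u_i)\setminus\{0\})\cup\{0\}$, and $\mathrm{K}_0$ then yields $(A'-u_i)\setminus\{0\}\in K$. As this holds for all $i\le n$, we get $C_K(A')=\emptyset$ with $|A'|=n$, contradicting the induction hypothesis. The delicate point is precisely this reduction: realising that combining with the desirable option set based at the ``extra'' option $u_{n+1}$, under a suitably chosen selection in $\R^{2,+}$, collapses the $(n+1)$-option instance of $\mathrm{C}_0$ to the $n$-option one.
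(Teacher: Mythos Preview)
Your argument is correct. The verifications of $\mathrm{C}_1$--$\mathrm{C}_4$ are straightforward, and your inductive treatment of $\mathrm{C}_0$ is sound: the coefficient map you describe is well-defined (the elements $u_j-u_i$ of $(A-u_i)\setminus\{0\}$ are distinct because the $u_j$ are), and the set produced by $\mathrm{K}_3$ is indeed $(A'-u_i)$, from which $\mathrm{K}_0$ gives $(A'-u_i)\setminus\{0\}\in K$, closing the induction.

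Your route, however, differs substantially from the paper's. The paper does not verify $\mathrm{C}_0$--$\mathrm{C}_4$ directly. Instead it invokes an external result---Proposition~4 of \cite{de2019interpreting}, restated here as \cref{prop:HeTBeSTaAtNoGnIeTPeRsE}---which says that whenever a choice function $C$ and a set of desirable option sets $K$ are linked by
\[
(\forall u\in\V)(\forall A\in\Q)\bigl(u\notin C(A\cup\{u\})\Leftrightarrow A-u\in K\bigr),
\]
coherence of one is equivalent to coherence of the other. The paper's proof then reduces to checking this linking condition for $K$ and $C_K$, which is a two-line computation using $\mathrm{K}_0$ and $\mathrm{K}_4$. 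So the paper's argument is much shorter but delegates all the substance---in particular, the non-trivial $\mathrm{C}_0$ step that you worked out explicitly---to the cited reference. Your proof is longer but fully self-contained, and the inductive ``collapse'' via $\mathrm{K}_3$ that you identified is exactly the kind of argument hidden inside the cited result.
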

\begin{proposition}\label{prop:CcohKC}
If a choice function $C$ is coherent, then $K_C$ is coherent too.
\end{proposition}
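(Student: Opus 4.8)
\section*{Proof proposal}

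The plan is to verify the five coherence conditions $\mathrm{K}_0$--$\mathrm{K}_4$ of \cref{def:CohK} directly for $K = K_C$, where $K_C$ is the set of desirable option sets associated with $C$ through \cref{eq:KC}, i.e. $A\in K_C \Leftrightarrow 0\notin C(A\cup\{0\})$. Two of the conditions come for free: since axioms $\mathrm{C}_2$ and $\mathrm{C}_3$ were already formulated in terms of $K_C$, they coincide verbatim with $\mathrm{K}_2$ and $\mathrm{K}_3$ (in $\mathrm{K}_3$, the option set $\{\lala(\uu)\uu\colon \uu\in A\times B\}$ is exactly the member of $K_C$ produced by $\mathrm{C}_3$). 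So the whole argument reduces to deriving $\mathrm{K}_0$, $\mathrm{K}_1$ and $\mathrm{K}_4$ from the remaining choice-function axioms, together with the defining equivalence above.

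For $\mathrm{K}_0$, I would observe that adjoining $0$ is idempotent in the relevant sense: for every $A\in\Q$ one has $(A\setminus\{0\})\cup\{0\} = A\cup\{0\}$. Hence the two statements $0\notin C((A\setminus\{0\})\cup\{0\})$ and $0\notin C(A\cup\{0\})$ are literally identical, so $A\in K_C$ forces $A\setminus\{0\}\in K_C$ with no appeal to the axioms at all---this is purely a feature of the definition of $K_C$. For $\mathrm{K}_1$, I would apply $\mathrm{C}_0$ to the singleton $\{0\}$: since $C(\{0\})\subseteq\{0\}$ and $C(\{0\})\neq\emptyset$, we get $C(\{0\})=\{0\}$, i.e. $0\in C(\{0\}\cup\{0\})$, which is exactly the assertion $\{0\}\notin K_C$.

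The one condition genuinely requiring the monotonicity axiom is $\mathrm{K}_4$. Given $A\in K_C$ and an arbitrary $Q\in\Q$, I want $A\cup Q\in K_C$, that is $0\in R_C((A\cup Q)\cup\{0\})$. Here I would use the inclusion $A\cup\{0\}\subseteq (A\cup Q)\cup\{0\}$ and invoke $\mathrm{C}_4$ (Sen's condition $\alpha$) applied to the pair $A\cup\{0\}$ and $(A\cup Q)\cup\{0\}$, which yields $R_C(A\cup\{0\})\subseteq R_C((A\cup Q)\cup\{0\})$. Since $A\in K_C$ means precisely $0\in R_C(A\cup\{0\})$, this inclusion transports the rejection of $0$ to the larger set, giving $0\in R_C((A\cup Q)\cup\{0\})$ as desired.

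I do not expect any of these steps to present a real obstacle; the proposition is essentially a bookkeeping exercise, precisely because the axiom system $\mathrm{C}_0$--$\mathrm{C}_4$ was designed so that its two substantive ``desirability'' axioms are already phrased through $K_C$. The only point demanding care is the repeated use of the identity $(A\setminus\{0\})\cup\{0\}=A\cup\{0\}$ and keeping track of whether $0$ already belongs to the sets involved. Note that $\mathrm{C}_1$ is not invoked directly: it is the ingredient that justifies $K_C$ (via \cref{eq:goback}) as the correct encoding of $C$, rather than something needed to check the $\mathrm{K}$-axioms themselves.
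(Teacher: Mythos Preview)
Your proof is correct, but it takes a different route from the paper's. The paper does not verify $\mathrm{K}_0$--$\mathrm{K}_4$ one by one; instead it invokes an auxiliary result (\cref{prop:HeTBeSTaAtNoGnIeTPeRsE}, quoted from \cite{de2019interpreting}) stating that whenever a choice function $C$ and a set of option sets $K$ are linked by the relation
\[
(\forall u\in\V)(\forall A\in\Q)\big(u\notin C(A\cup\{u\})\Leftrightarrow A-u\in K\big),
\]
coherence of $C$ and coherence of $K$ are equivalent. The paper then simply checks, using $\mathrm{C}_1$ and the definition of $K_C$, that $C$ and $K_C$ satisfy this relation. Your argument is more elementary and self-contained: it never leaves the present paper and shows transparently which $\mathrm{C}$-axiom is responsible for which $\mathrm{K}$-axiom (in particular that $\mathrm{K}_0$ needs no axiom at all, $\mathrm{K}_1$ comes from $\mathrm{C}_0$, and $\mathrm{K}_4$ from $\mathrm{C}_4$). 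The paper's approach, by contrast, packages both \cref{prop:KcohCK} and \cref{prop:CcohKC} into a single equivalence and reuses a result already established elsewhere, at the cost of an external dependency.
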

\begin{theorem}\label{th:unique}
The map $\Phi\colon \mathcal{C} \to \Ka\colon C\mapsto K_C$ is a bijection and has inverse $\Phi^{-1}(K)=C_K$ for all $K\in\Ka$.
\end{theorem}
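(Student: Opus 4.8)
The plan is to show that $\Phi$ and the candidate inverse $\Psi\colon K\mapsto C_K$ are well-defined maps between $\mathcal{C}$ and $\Ka$, and that they compose to the identity in both directions. Well-definedness is already handled by the two preceding propositions: \Cref{prop:CcohKC} gives that $\Phi$ maps $\mathcal{C}$ into $\Ka$, and \Cref{prop:KcohCK} gives that $\Psi$ maps $\Ka$ into $\mathcal{C}$. So the core of the argument reduces to two identities: $K_{C_K} = K$ for every coherent $K$, and $C_{K_C} = C$ for every coherent $C$.

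First I would prove $C_{K_C} = C$. Fix a coherent $C$ and an option set $A\in\Q$. By definition~\eqref{eq:CK}, $u\in C_{K_C}(A)$ iff $(A-u)\setminus\{0\}\notin K_C$. On the other hand, by the characterization~\eqref{eq:goback} derived from axiom $\mathrm{C}_1$, $u\in C(A)$ iff $A-u\notin K_C$. So the identity will follow once I show that, for a coherent $K$, an option set $D$ lies in $K$ if and only if $D\setminus\{0\}$ lies in $K$; indeed, applying this with $D = A-u$ bridges the two conditions. One direction is exactly axiom $\mathrm{K}_0$. The converse direction, $D\setminus\{0\}\in K \Rightarrow D\in K$, follows from axiom $\mathrm{K}_4$ (take $Q=\{0\}$, or $Q=\emptyset$ if $0\notin D$, so that $(D\setminus\{0\})\cup Q = D$). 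This handles $C_{K_C}=C$; a small point to be careful about is that~\eqref{eq:goback} only speaks about $u\in A$, whereas~\eqref{eq:CK} ranges over $u\in A$ as well, so the domains match and there is nothing extra to check for $u\notin A$.

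Next I would prove $K_{C_K} = K$ for coherent $K$. By definition~\eqref{eq:KC}, $A\in K_{C_K}$ iff $0\notin C_K(A\cup\{0\})$, and by~\eqref{eq:CK} this says $((A\cup\{0\}) - 0)\setminus\{0\} \in K$, i.e.\ $(A\cup\{0\})\setminus\{0\} = A\setminus\{0\} \in K$. So $A\in K_{C_K}$ iff $A\setminus\{0\}\in K$, and again the equivalence $A\setminus\{0\}\in K \Leftrightarrow A\in K$ (from $\mathrm{K}_0$ and $\mathrm{K}_4$ as above) closes the loop and gives $A\in K_{C_K}\Leftrightarrow A\in K$.

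Having both compositions equal to the identity, and both maps well-defined by the propositions, I conclude that $\Phi$ is a bijection with $\Phi^{-1}=\Psi$, i.e.\ $\Phi^{-1}(K)=C_K$. The main obstacle here is not any deep argument but rather bookkeeping around the element $0$: the definitions of $K_C$, $C_K$, and the bridge~\eqref{eq:goback} each treat $0$ slightly differently (adjoining it, removing it, or translating by it), and the whole proof hinges on the single observation that coherence makes membership in $K$ insensitive to whether $0$ is included in an option set. Once that observation is isolated as a lemma-like remark, the rest is a direct unfolding of definitions.
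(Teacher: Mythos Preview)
Your proof is correct. The computation $K_{C_K}=K$ via $\mathrm{K}_0$ and $\mathrm{K}_4$ is exactly what the paper does for surjectivity. The only difference is in how injectivity is handled: the paper argues by contradiction (if $C\neq C'$ but $K_C=K_{C'}$, pick $u\in C(A)\setminus C'(A)$ and use $\mathrm{C}_1$ to produce a set in $K_{C'}\setminus K_C$), whereas you instead verify the other composition $C_{K_C}=C$ directly, again reducing to the same ``membership in a coherent $K$ is insensitive to $0$'' observation. Your two-sided-inverse approach is a bit more symmetric and makes the role of that observation more explicit; the paper's version avoids invoking coherence of $K_C$ for the injectivity step. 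Both are equally short and rest on the same idea.
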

In other words: every coherent choice function $C$ corresponds uniquely to a coherent set of desirable option sets $K$, and vice versa.

The plan is now to use this connection to transform the problem of computing the natural extension of a choice function $C$ to a similar problem for sets of desirable option sets.
We start by transforming the choice function $C$ into a set of option sets.
One way to do this would be to consider the set of desirable option sets $K_C$.
However, there is also a smarter way to approach this that yields a more compact representation of the information in $C$.
\begin{definition}\label{def:assessment}
An \emph{assessment} is any subset of $\Q$.
We denote the set of all assessments, the power set of $\Q$, by $\QQ$.
In particular, with any choice function $C$, we associate the assessment
\[
\A_{C}\coloneqq\{C(A)-u\colon A\in\Q,u\in R_C(A)\}.
\]
\end{definition}

\begin{myexpcont}
In our running example, the assessment that corresponds to $C$ is $\A_C=\{A_1,A_2,A_3\}$, with
\(
A_1\coloneqq\{w_1,w_2\}\), \(A_2\coloneqq\{w_3,w_4\}\) and \( A_3\coloneqq\{w_5\},
\) 
where
\(
w_1\coloneqq v_1-v_3=(4,-2)
\) and similarly
\(
w_2\coloneqq v_2-v_3=(2,-1),
\)
$w_3\coloneqq v_1-v_4=(7,-4)$,
$w_4\coloneqq v_2-v_4=(5,-3)$ and $w_5\coloneqq v_6-v_5=(-7,7)$.
\end{myexpcont}

An assessment such as $\A_C$ may not be coherent though.
To extend it to a coherent set of desirable option sets, we will use the notions of consistency and natural extension that were developed for sets of desirable option sets by De Bock and De Cooman \cite{de2018desirability}.
To that end, for any assessment $\A\in\QQ$, we consider the set $\Ka(\A)\coloneqq\{K\in\Ka\colon\A\subseteq K\}$ of all coherent sets of desirable option sets that contain $\A$ and let
\begin{equation}\label{eq:defEx}
\Ex(\A)\coloneqq\bigcap\Ka(\A),
\end{equation}
where we use the convention that $\bigcap \emptyset=\Q$.

\begin{definition}\label{def:natext}
For any assessment $\A\in \QQ$, we say that $\A$ is \emph{consistent} if $\Ka(\A)\neq\emptyset$ and we then call $\Ex(\A)$ the \textit{natural extension} of $\A$ {\cite[definition~9]{de2018desirability}}.
\end{definition}
As proven by De Bock and De Cooman \cite{de2018desirability}, the consistency of $\A$ implies the coherence of $\Ex(\A)$.
Our next result establishes that the converse is true as well.
\begin{theorem}\label{th:consifandonlyifcoh}
For any assessment $\A\in\QQ$ the following are equivalent:
\begin{enumerate}
\item $\A$ is consistent;
\item $\Ex(\A)$ is coherent;
\item $\emptyset\notin\Ex(\A)$.
\end{enumerate}
\end{theorem}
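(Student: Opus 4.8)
The plan is to prove the three statements equivalent by establishing the cycle of implications $(1)\Rightarrow(2)\Rightarrow(3)\Rightarrow(1)$. Two of the three links are essentially immediate once one unpacks the definitions, so the real content is small; the point of the result is to record that consistency, coherence of the natural extension, and the handy test ``$\emptyset\notin\Ex(\A)$'' all amount to the same thing.

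For $(1)\Rightarrow(2)$ there is nothing new to do: this is exactly the fact recalled just before the statement, namely that De Bock and De Cooman \cite{de2018desirability} already prove that consistency of $\A$ implies coherence of $\Ex(\A)$. For $(2)\Rightarrow(3)$ I would first observe that \emph{no} coherent set of desirable option sets contains the empty set. Indeed, if $\emptyset\in K$ for some coherent $K$, then applying $\mathrm{K}_4$ with $A=\emptyset$ yields $Q=\emptyset\cup Q\in K$ for every $Q\in\Q$, so $K=\Q$; but then $\{0\}\in K$, contradicting $\mathrm{K}_1$. Hence $\emptyset\notin K$ for every coherent $K$, and in particular $\emptyset\notin\Ex(\A)$ whenever $\Ex(\A)$ is coherent.

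For $(3)\Rightarrow(1)$ I would argue by contraposition, using the empty-intersection convention in \cref{eq:defEx}. If $\A$ is not consistent, then $\Ka(\A)=\emptyset$ by \cref{def:natext}, and hence $\Ex(\A)=\bigcap\Ka(\A)=\Q$ by the stated convention $\bigcap\emptyset=\Q$. Since $\emptyset\in\Q$, this gives $\emptyset\in\Ex(\A)$, i.e.\ $(3)$ fails. Contraposing, $(3)$ implies $(1)$, closing the cycle.

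The argument is short and mostly bookkeeping, so there is no real obstacle; the only two things to be careful about are (i) not to conflate $\mathrm{K}_1$ (which says $\{0\}\notin K$) with the statement $\emptyset\notin K$, since the latter genuinely requires the little derivation via $\mathrm{K}_4$ above, and (ii) to keep the empty-intersection convention firmly in view, as it is precisely this convention that makes ``$\emptyset\notin\Ex(\A)$'' a faithful restatement of consistency rather than merely a consequence of coherence of $\Ex(\A)$.
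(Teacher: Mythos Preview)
Your proposal is correct and follows essentially the same circular argument as the paper: $(1)\Rightarrow(2)$ via the cited result of De Bock and De Cooman, $(2)\Rightarrow(3)$ by combining $\mathrm{K}_4$ (if $\emptyset\in K$ then $\{0\}\in K$) with $\mathrm{K}_1$, and $(3)\Rightarrow(1)$ via the empty-intersection convention in \cref{eq:defEx}. The paper's $(2)\Rightarrow(3)$ step is phrased slightly more tersely (deducing $\{0\}\in K$ directly rather than passing through $K=\Q$), but the substance is identical.
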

The connection with choice functions is established by the following result.

\begin{theorem}\label{th:natequiv}
Let $C$ be a choice function. Then $C$ is consistent if and only if $\emptyset \notin \Ex(\A_C)$, and if it is, then
\(
C_{\Ex(\A_{C})}(A)=\Ex(C)(A) \text{ for all $A\in\Q$}.
\) 
\end{theorem}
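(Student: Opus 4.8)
The plan is to transport the statement to the realm of sets of desirable option sets through the bijection $\Phi$ of \cref{th:unique}, the central observation being that $\Phi$ restricts to a bijection between $\mathcal{C}_C$ and $\Ka(\A_C)$; equivalently, $\mathcal{C}_C=\{C_K\colon K\in\Ka(\A_C)\}$. Granting this, both halves of the theorem drop out. The consistency equivalence is immediate: $C$ is consistent iff $\mathcal{C}_C\neq\emptyset$ iff $\Ka(\A_C)\neq\emptyset$ iff $\A_C$ is consistent, and by \cref{th:consifandonlyifcoh} this last condition is the same as $\emptyset\notin\Ex(\A_C)$. For the identity, assume $C$ is consistent, so $\Ex(\A_C)=\bigcap\Ka(\A_C)$ with $\Ka(\A_C)\neq\emptyset$; then \cref{eq:CK} gives $C_{\Ex(\A_C)}(A)=\{v\in A\colon(A-v)\setminus\{0\}\notin K\text{ for some }K\in\Ka(\A_C)\}=\bigcup_{K\in\Ka(\A_C)}C_K(A)$, while by definition $\Ex(C)(A)=\bigcup_{C'\in\mathcal{C}_C}C'(A)=\bigcup_{K\in\Ka(\A_C)}C_K(A)$ using the bijection; hence the two agree.

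Everything thus reduces to proving $\mathcal{C}_C=\{C_K\colon K\in\Ka(\A_C)\}$. Since $\Phi$ is a bijection of $\mathcal{C}$ onto $\Ka$ with $C'=C_{K_{C'}}$ for coherent $C'$, and $K_{C'}$ is coherent whenever $C'$ is by \cref{prop:CcohKC}, it suffices to prove, for every coherent choice function $C'$, that $C'(A)\subseteq C(A)$ for all $A\in\Q$ if and only if $\A_C\subseteq K_{C'}$. Writing $C'=C_{K_{C'}}$ through \cref{eq:CK}, or directly via \cref{eq:goback}, the left-hand side is equivalent to the statement that $A-u\in K_{C'}$ for every $A$ and every $u\in R_C(A)$. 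The implication ``$\A_C\subseteq K_{C'}\Rightarrow C'(A)\subseteq C(A)$'' is then easy: for $u\in R_C(A)$ we have $C(A)-u\in\A_C\subseteq K_{C'}$, and since $u\notin C(A)$ this set is contained in $A-u$, so $A-u\in K_{C'}$ by $\mathrm{K}_4$. The converse is where the work lies: assuming $A-v\in K_{C'}$ for every $v\in R_C(A)$, we must show $C(A)-u\in K_{C'}$ for every $u\in R_C(A)$; note $C(A)\neq\emptyset$ here, since $C'\in\mathcal{C}_C$ forces $C(A)\supseteq C'(A)\neq\emptyset$ for $A\neq\emptyset$ by $\mathrm{C}_0$.

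The obstacle I expect to dominate the proof is therefore the following lemma about coherent sets of desirable option sets: if $K\in\Ka$, $\emptyset\neq D\subseteq A\in\Q$, and $A-v\in K$ for every $v\in A\setminus D$, then $D-u\in K$ for every $u\in A\setminus D$ (apply it with $D=C(A)$ and $A\setminus D=R_C(A)$). I would prove the sharper claim that $(A\setminus S)-u\in K$ for every $S\subseteq A\setminus D$ and every $u\in(A\setminus D)\setminus S$, by induction on $|S|$. The base case $S=\emptyset$ is the hypothesis. For the step, pick $w\in S$, let $S'=S\setminus\{w\}$, invoke the induction hypothesis for $S'$ (with base points $u$ and $w$) to obtain $(A\setminus S')-u\in K$ and, using $\mathrm{K}_0$, $((A\setminus S')-w)\setminus\{0\}\in K$, and then apply $\mathrm{K}_3$ to these two sets with the weight map that sends a pair whose first coordinate is not $w-u$ to $(1,0)$ and a pair whose first coordinate is $w-u$ to $(1,1)$: a direct computation shows the resulting option set is exactly $(A\setminus S)-u$, whence it lies in $K$. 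Specialising to $S=R_C(A)\setminus\{u\}$ (legitimate since $u\notin S$) yields $(C(A)-u)\cup\{0\}\in K$, and one last use of $\mathrm{K}_0$ strips the $0$ to leave $C(A)-u\in K$, as required. The only genuine subtlety is the book-keeping in the inductive step — one must combine the already-reduced sets $(A\setminus S')-u$ and $(A\setminus S')-w$ rather than $A-u$ and $A-w$, so that options discarded at earlier stages are not resurrected — together with the routine verification that the weights $(1,0)$ and $(1,1)$ belong to $\R^{2,+}$.
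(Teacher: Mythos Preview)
Your proof is correct and follows the same high-level architecture as the paper: both reduce the theorem to the equivalence $C'\in\mathcal{C}_C\Leftrightarrow\A_C\subseteq K_{C'}$ for coherent $C'$ (the paper isolates this as a separate lemma), and both identify the substantive direction as showing that $C(A)-u\in K_{C'}$ whenever $A-v\in K_{C'}$ for all $v\in R_C(A)$. Where you and the paper diverge is in the proof of this combinatorial core (the paper's \cref{th:speedyAss}). The paper argues in one shot: for each tuple $\uu\in\times_{k=1}^m A_k$ it introduces an index map $\ell$, tracks iterates $\ell^j(k^*)$, and uses the pigeonhole principle plus a telescoping sum to manufacture weights $\lala(\uu)\in\R^{m,+}$ with $\lala(\uu)\uu\in(B-v_{k^*})\cup\{0\}$; it then invokes the $m$-ary closure $\Posi(K)=K$ from \cite{de2018desirability} to conclude. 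Your argument is instead inductive, peeling off one rejected option at a time and applying only the binary axiom $\mathrm{K}_3$ at each step, with the weight map $(1,0)/(1,1)$ doing the telescoping locally. Your route is more self-contained---it needs no external $\Posi$ machinery and stays entirely within $\mathrm{K}_0$--$\mathrm{K}_4$---at the cost of the bookkeeping you flag (combining the already-reduced sets $(A\setminus S')-u$ and $(A\setminus S')-w$, and checking the latter is non-empty because $u\neq w$ both lie in $A\setminus S'$). The paper's route is slicker once the $\Posi$ lemma is available. Both arrive at the same place.
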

By this theorem, we see that checking consistency and computing the natural extension of a choice function $C$ amounts to being able to check for any option set whether it belongs to $\Ex(\A_C)$ or not.
Indeed, given some choice function $C$, we can check its consistency by checking if $\emptyset\in\Ex(\A_C)$ and, taking into account \cref{eq:CK}, we can calculate $\Ex(C)(A)$ for a given set $A\in\Q$, by checking for any element $u\in A$ if $(A - u)\setminus\{0\} \in \Ex(\A_{C})$.

\section{Natural extension and consistency for finite assessments}
\label{sec:choose}
In practice, we will typically be interested in computing the natural extension $\Ex(C)$ of choice functions $C$ that are finitely generated, in the sense that $C(A)\neq A$ for only finitely many option sets $A\in\Q$.
In that case, if we let $\mathcal{O}_C\coloneqq\{A\in\Q\colon C(A)\neq A\}$, then
\[
\A_C=\{C(A)-u\colon A\in\Q,u\in R_C(A)\}=\{C(A)-u\colon A\in\mathcal{O}_C,u\in R_C(A)\}.
\]
Since $\mathcal{O}_C$ is finite and every $R_C(A)\subseteq A$ is finite because $A$ is, it then follows that $\A_C$ is finite as well.
Without loss of generality, for finitely generated choice functions $C$, $\A_C$ will therefore be of the form $\A_C=\{\AAA\}$, with $n\in\N\cup\{0\}$.
The list of option sets $\AAA$ may contain duplicates in practice, for example if different $A\in \mathcal{O}_C$ and $u\in A$ yield the same option set $C(A)-u$.
It is better to remove these duplicates, but our results will not require this.
The only thing that we will assume is $\A_C=\{\AAA\}$.

The following theorem is the main result that will allow us to check in practice whether an option set belongs to $\Ex(\A_C)$ or not.

\begin{theorem}\label{th:testnatext}
Let $\A=\{\AAA\}$, with $\AAA\in\Q$ and $n\in\N\cup\{0\}$.
An option set $S\in\Q$ then belongs to $\Ex(\A)$ if and only if either $S \cap \V_{\succc0}\neq \emptyset$ or, $n\neq 0$ and, for every $\uu \in \times_{j=1}^n A_j$, there is some $s\in S\cup\{0\}$ and $\lala\in \R^{n,+}$ for which $\lala \uu \leq s$.
\end{theorem}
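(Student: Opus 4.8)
The plan is to characterize membership in $\Ex(\A)=\bigcap\Ka(\A)$ by exhibiting, for the "only if'' direction, a single coherent superset $K^\star$ of $\A$ that fails to contain any $S$ violating the stated condition, and conversely, for the "if'' direction, by showing that any $S$ satisfying the condition must lie in every coherent $K\supseteq\A$. Throughout I would distinguish the degenerate case $n=0$ (so $\A=\emptyset$): then $\Ka(\A)=\Ka$ and $\Ex(\emptyset)$ is the smallest coherent set of desirable option sets, which by $\mathrm{K}_2$ and $\mathrm{K}_0$ is exactly $\{S\in\Q\colon S\cap\V_{\succc0}\neq\emptyset\}$; this matches the statement since the second disjunct is vacuous when $n=0$. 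So assume $n\geq 1$ from here on.

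For the "if'' direction, suppose $S\cap\V_{\succc0}\neq\emptyset$ or the combination condition holds, and let $K\in\Ka(\A)$ be arbitrary; I must show $S\in K$. If $S$ meets $\V_{\succc0}$, pick $u\in S\cap\V_{\succc 0}$: then $\{u\}\in K$ by $\mathrm{K}_2$, and $S=\{u\}\cup(S\setminus\{u\})\in K$ by $\mathrm{K}_4$. Otherwise, $n\geq1$ and for every $\uu\in\times_{j=1}^n A_j$ there are $s_\uu\in S\cup\{0\}$ and $\lala_\uu\in\R^{n,+}$ with $\lala_\uu\uu\le s_\uu$. The key step is an iterated application of $\mathrm{K}_3$: since $A_1,\dots,A_n\in\A\subseteq K$, repeatedly combining them via $\mathrm{K}_3$ (formally an induction on $n$, grouping $A_1\times\cdots\times A_n$ two factors at a time and choosing the $\R^{2,+}$-coefficients so that the net effect on the tuple $\uu=(u_1,\dots,u_n)$ is multiplication by the prescribed $\lala_\uu\in\R^{n,+}$) yields that $\{\lala_\uu\uu\colon \uu\in\times_{j=1}^n A_j\}\in K$. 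Now $\lala_\uu\uu\le s_\uu$ with $s_\uu\in S\cup\{0\}$; using $\mathrm{K}_2$ (the element $s_\uu-\lala_\uu\uu$ is in $\V_{\succceq 0}$, and if it is $\succ 0$ then $\{s_\uu-\lala_\uu\uu\}\in K$) together with $\mathrm{K}_3$ and $\mathrm{K}_4$, one replaces each $\lala_\uu\uu$ by the corresponding $s_\uu$, so $\{s_\uu\colon\uu\}\in K$; since this set is a subset of $S\cup\{0\}$, one $\mathrm{K}_4$ step and then $\mathrm{K}_0$ (to discard a possible $0$) give $S\in K$. As $K$ was arbitrary, $S\in\Ex(\A)$.

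For the "only if'' direction I would argue contrapositively: assume $S\cap\V_{\succc0}=\emptyset$ and (since the case $n=0$ is settled, $n\geq1$ and) there exists $\uu^0\in\times_{j=1}^n A_j$ such that no $s\in S\cup\{0\}$ and $\lala\in\R^{n,+}$ satisfy $\lala\uu^0\le s$. I must produce a coherent $K^\star\supseteq\A$ with $S\notin K^\star$. The natural candidate is built from a single strict partial vector order, or equivalently from a coherent set of desirable options $D^\star$ in the sense of the cited desirability framework: take $D^\star$ to be the natural extension (in the options setting) of $\{u^0_1,\dots,u^0_n\}$, i.e.\ $D^\star=\{v\in\V\colon v\succ 0\text{ or }v\ge\lala\uu^0\text{ for some }\lala\in\R^{n,+}\}\setminus\{$something ensuring $0\notin D^\star\}$ --- the hypothesis on $\uu^0$ is exactly what guarantees $0\notin D^\star$ and, more generally, that $D^\star$ contains no element of $S\cup\{0\}$. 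Then set $K^\star\coloneqq\{B\in\Q\colon B\cap D^\star\neq\emptyset\}$; such "singletons-lift'' sets $K^\star$ are coherent (this is a standard fact about the map from coherent sets of desirable options to coherent sets of desirable option sets, and $\mathrm{K}_0$--$\mathrm{K}_4$ are checked directly from coherence of $D^\star$). Each $A_j$ contains $u^0_j\in D^\star$, so $\A\subseteq K^\star$; and $S\cap D^\star=\emptyset$ by construction, so $S\notin K^\star$. Hence $S\notin\Ex(\A)$.

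The main obstacle is the bookkeeping in the "if'' direction: turning the single clean hypothesis "for every $\uu$ there are $s,\lala$ with $\lala\uu\le s$'' into a legitimate finite derivation using only $\mathrm{K}_0$--$\mathrm{K}_4$. One has to (i) reduce the $n$-fold combination to iterated binary $\mathrm{K}_3$ applications while tracking that the coefficient tuples land in $\R^{n,+}$ (not merely $\R^{n}_{\ge 0}$), which requires care about which intermediate $\lambda$'s are allowed to vanish; and (ii) pass from $\lala\uu\le s$ to $s$ itself inside $K$, which is where $\V^s_{\succc0}\subseteq K$ and closure under supersets do the work, with the annoying edge case $s=\lala\uu$ (equality) handled separately. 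A secondary obstacle is verifying coherence of the lifted set $K^\star$ in the converse direction; I would state this as a small lemma (or cite it from \cite{de2018desirability}, where the correspondence between coherent sets of desirable options and coherent sets of desirable option sets of "singleton type'' is developed) rather than re-prove $\mathrm{K}_0$--$\mathrm{K}_4$ inline.
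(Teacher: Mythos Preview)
Your argument is correct but follows a genuinely different route from the paper. The paper first invokes the explicit closed-form characterisation $\Ex(\A)=\Rs(\Posi(\V^s_{\succc0}\cup\A))$ from \cite{de2018desirability} (Corollary~\ref{cor:natext}), then manipulates that formula in two stages: Theorem~\ref{th:testnatextPre} rewrites the $\Rs\circ\Posi$ description as ``$S\cap\V_{\succc0}\neq\emptyset$ or there exists $T\in\Posi'(\AAA)$ with every $t\in T$ dominated by some $s\in S\cup\{0\}$'', and the final short proof of Theorem~\ref{th:testnatext} swaps the $\exists T\,\forall t$ quantifier pattern for the $\forall\uu\,\exists s,\lala$ pattern. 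Both directions in the paper are thus syntactic rewritings of one fixed formula. You instead work semantically from the definition $\Ex(\A)=\bigcap\Ka(\A)$: for the ``if'' direction you push $S$ into an arbitrary coherent $K\supseteq\A$ using $\mathrm{K}_2$--$\mathrm{K}_4$ directly, and for the ``only if'' direction you build a single witness $K^\star\in\Ka(\A)$ with $S\notin K^\star$ by lifting a coherent set of desirable \emph{options} $D^\star$ generated by the bad tuple $\uu^0$. Your approach is more self-contained (it does not need the $\Rs\circ\Posi$ theorem from the cited paper) and the witness construction is conceptually clean; the paper's approach avoids having to verify coherence of the lifted $K^\star$ and keeps both directions uniform. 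Two small remarks: your ``$\setminus\{\text{something}\}$'' in the definition of $D^\star$ is unnecessary---the hypothesis with $s=0$ already forces $\lala\uu^0\not\le0$ for all $\lala\in\R^{n,+}$, so $0\notin D^\star$ automatically and $D^\star$ is coherent as stated; and the $n$-fold $\mathrm{K}_3$ closure you need in the ``if'' direction is exactly $\Posi(K)=K$ for coherent $K$, which the paper also relies on (citing \cite[Proposition~24]{de2018desirability}).
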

In combination with \cref{th:natequiv}, this result enables us to check the consistency and compute the natural extension of finitely generated choice functions.
Checking consistency is equivalent to checking if $\emptyset\notin\Ex(\A_C)$.

\begin{myexpcont}
We will now go ahead and test if the strategic advisor was at least consistent in his choices.
Since $\emptyset\cap \V_{>0}=\emptyset$, we have to check the second condition in \cref{th:testnatext}.
In particular, we have to check if, for every tuple $\uu \in A_1\times A_2 \times A_3$, there is some $\lala\in\R^{3,+}$ such that $\lala \uu\leq 0$.
We will show that this is not the case for the particular tuple $\uu=(w_2,w_4,w_5)\in A_1\times A_2 \times A_3$. Assume that there is some $\lala=(\lambda_1,\lambda_2,\lambda_3)\in \R^{3,+}$ such that
\(
\lambda_1 w_2+\lambda_2 w_4 + \lambda_3 w_5=\lala \uu\leq 0.
\)
Notice that $2w_4\leq 5w_2$, so if we let $\mu_1\coloneqq\frac25\lambda_1+\lambda_2$ and $\mu_2\coloneqq7\lambda_3$ then  
\[
\lala \uu\geqslant\frac25\lambda_1 w_4+\lambda_2 w_4 +\lambda_3  w_5 =\mu_1 w_2 +\frac17 \mu_2  w_5=
(5 \mu_1 - \mu_2,
-3 \mu_1 + \mu_2)
.
\]
Since $\lala \uu \leq 0$, this implies that $5\mu_1\leq\mu_2\leq 3\mu_1$ and thus $\mu_1\leq0$ and $\mu_2\leq0$.
This is impossible though because $\lala\in\R^{3,+}$ implies that $\mu_1>0$ or $\mu_2>0$.
Hence, $\emptyset\notin\Ex(\A_C)$, so $\A_C$ is consistent.
We conclude that the decisions of the strategic advisor were consistent, enabling us to use natural extension to study their consequences.
\end{myexpcont}

If a finitely generated choice function $C$ is consistent, then for any option set $\A\in\Q$, we can evaluate $\Ex(C)(A)$ by checking for every individual $u\in A$ if $u\in\Ex(C)(A)$. 
As we know from \cref{th:natequiv,eq:CK}, this will be the case if and only if $(A-u)\setminus\{0\}\notin\Ex(\A_C)$.
\begin{myexpcont}
We can now finally tackle the problem at the end of Example~1.3: choosing from the set $B_3=\{(-3,4),\allowbreak (0,1),\allowbreak (4,-3)\}$.
This comes down to computing $\Ex(C)(B_3)$.
Because of \cref{th:natequiv}, we can check if $(4,-3)$ is rejected from $B_3$ by checking if $\{(-7,7),(-4,4)\}\in \Ex(\A_C)$.
By \cref{th:testnatext}, this requires us to check if for every $\uu\in A_1\times A_2 \times A_3$ we can find some $s\in \{(-7,7),(-4,4),(0,0)\}$ and some $\lala\in \R^{3,+}$ such that $\lala \uu \leq s$.
Since $(-7,7)=w_5\in\{w_5\}=A_3$, $s=(-7,7)$ and $\lala=(0,0,1)$ do the job for every $\uu$.
So we can reject $(4,-3)$.
Checking if $(0,1)$ is rejected is analogous: we have to check if $\{(-3,3),(4,-4)\}\in \Ex(\A)$.
In this case, we can use $s=(-3,3)$ and $\lala=(0,0,\frac37)$ for every $\uu$ in $A_1\times A_2 \times A_3$ to conclude that $(0,1)$ is rejected as well.
Since $\Ex(C)(B_3)$ must contain at least one option because of $C_0$ (which applies because \cref{th:coh} and the consistency of $C$ imply that $\Ex(C)$ is coherent) it follows that $\Ex(C)(B_3)=\{(-3,4)\}$.
So based on the advisor's earlier decisions and the axioms of coherence, it follows that the company should choose $(-3,4)$ from $B_3$.
\end{myexpcont}
In this simple toy example, the assessment $\A_C$ was small and the conditions in \cref{th:testnatext} could be checked manually.
In realistic problems, this may not be the case though.
To address this, we will provide in \cref{sec:prac} an algorithm for checking the conditions in \cref{th:testnatext}.
But first, we provide methods for reducing the size of an assessment.

\section{Simplifying assessments}
\label{sec:Ass}
For any given assessment $\A=\{\AAA\}$, the number of conditions that we have to check to apply \cref{th:testnatext} is proportional to the size of $\times_{j=1}^n A_j$.
Since \cref{th:testnatext} draws conclusions about $\Ex(\A)$ rather than $\AAA$, it can thus be useful to try to make $\times_{j=1}^n A_j$ smaller without altering $\Ex(\A)$, as this will reduce the number of conditions that we have to check.
This is especially true when we want to apply \cref{th:testnatext} to several sets $S$, for example because we want to evaluate the natural extension of a choice function in multiple option sets.

To make $\times_{j=1}^n A_j$ smaller, a first straightforward step is to remove duplicates from $\AAA$; after all, it is only the set $\A=\{\AAA\}$ that matters, not the list $\AAA$ that generates it.
To further reduce the size of $\times_{j=1}^n A_j$, we need to reduce the size of $\A$ and the option sets it consists of.
To that end, we introduce a notion of equivalence for assessments.
\begin{definition}\label{def:eqAss}
For any two assessments $\A_1,\A_2\in\QQ$, we say that $\A_1$ and $\A_2$ are \emph{equivalent} if $\Ka(\A_1)=\Ka(\A_2)$.
\end{definition}
It follows immediately from \cref{def:natext,eq:defEx} that replacing an assessment with an equivalent one does not alter its consistency, nor, if it is consistent, its natural extension.

The following result shows that it is not necessary to directly simplify a complete assessment; it suffices to focus on simplifying subsets of assessments. 
\begin{proposition}\label{prop:simpindiv}
If two assessments $\A_1,\A_2\in\QQ$ are equivalent and $\A_1 \subseteq \A\in\QQ$, then $\A$ is equivalent to $(\A\setminus \A_1) \cup \A_2$.
\end{proposition}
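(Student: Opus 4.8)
The plan is to prove the equivalence directly from the definition, by showing that the set of coherent extensions is unchanged when we swap $\A_1$ for $\A_2$ inside the larger assessment $\A$. Write $\A' \coloneqq (\A\setminus\A_1)\cup\A_2$; we must show $\Ka(\A)=\Ka(\A')$. By \cref{def:eqAss}, the hypothesis gives us $\Ka(\A_1)=\Ka(\A_2)$, and $\A_1\subseteq\A$ gives $\Ka(\A)\subseteq\Ka(\A_1)$. The key elementary observation is monotonicity of the operator $\Ka(\cdot)$ with respect to reverse inclusion: if $\B_1\subseteq\B_2$ then $\Ka(\B_2)\subseteq\Ka(\B_1)$, since any coherent $K$ containing $\B_2$ a fortiori contains $\B_1$. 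A second, equally elementary observation is that $\Ka$ turns unions into intersections: $\Ka(\B_1\cup\B_2)=\Ka(\B_1)\cap\Ka(\B_2)$, because a coherent $K$ contains $\B_1\cup\B_2$ iff it contains both $\B_1$ and $\B_2$.

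With these two facts the argument is short. First decompose $\A$ itself: since $\A_1\subseteq\A$, we have $\A=(\A\setminus\A_1)\cup\A_1$, hence
\[
\Ka(\A)=\Ka\bigl((\A\setminus\A_1)\cup\A_1\bigr)=\Ka(\A\setminus\A_1)\cap\Ka(\A_1).
\]
Likewise, for $\A'=(\A\setminus\A_1)\cup\A_2$ we get
\[
\Ka(\A')=\Ka(\A\setminus\A_1)\cap\Ka(\A_2).
\]
Now substitute the hypothesis $\Ka(\A_1)=\Ka(\A_2)$ into the first identity: the right-hand sides coincide, so $\Ka(\A)=\Ka(\A')$, which is exactly the assertion that $\A$ and $\A'$ are equivalent. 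Finally, I would add the one-line remark already flagged after \cref{def:eqAss}: since equivalence preserves $\Ka(\cdot)$, it preserves both consistency (via \cref{def:natext}) and, when consistent, the natural extension $\Ex(\cdot)=\bigcap\Ka(\cdot)$ (via \cref{eq:defEx}).

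I do not expect a serious obstacle here; the proposition is essentially a bookkeeping lemma about the Galois-type correspondence $\A\mapsto\Ka(\A)$. The only point that needs a little care is making sure the set-algebra manipulation $\A=(\A\setminus\A_1)\cup\A_1$ is valid, which it is precisely because $\A_1\subseteq\A$ — this is where the hypothesis $\A_1\subseteq\A$ is used, and it is the reason the statement is phrased with a subset rather than an arbitrary pair of assessments. If one wanted to avoid even naming the auxiliary facts, one could instead argue pointwise: $K\in\Ka(\A)$ iff $K$ is coherent and $\A\setminus\A_1\subseteq K$ and $\A_1\subseteq K$ iff (using $\Ka(\A_1)=\Ka(\A_2)$, applied to the coherent $K$) $K$ is coherent and $\A\setminus\A_1\subseteq K$ and $\A_2\subseteq K$ iff $K\in\Ka(\A')$. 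Either route is routine.
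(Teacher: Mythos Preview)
Your proof is correct and follows essentially the same approach as the paper: both arguments use that $\Ka(\B_1\cup\B_2)=\Ka(\B_1)\cap\Ka(\B_2)$, decompose $\A=(\A\setminus\A_1)\cup\A_1$ using $\A_1\subseteq\A$, and then swap $\Ka(\A_1)$ for $\Ka(\A_2)$. The monotonicity observation you mention is not actually needed, but otherwise the two proofs are the same up to notation.
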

This result is important in practice because it means that we can build and simplify assessments gradually when new information arrives and that we can develop and use equivalence results that apply to small (subsets of) assessments.

A first simple such equivalence result is that we can always remove zero from any option set.
\begin{proposition}\label{prop:wegMetDieNul}
Consider an option set $A\in\Q$.
Then the assessment $\{A\}$ is equivalent to $\{A\setminus\{0\}\}$.
\end{proposition}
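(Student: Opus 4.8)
The plan is to unfold \cref{def:eqAss} and argue directly that $\Ka(\{A\}) = \Ka(\{A \setminus \{0\}\})$, that is, that for every coherent set of desirable option sets $K \in \Ka$ we have $A \in K$ if and only if $A \setminus \{0\} \in K$. Since containing $A$ (respectively $A\setminus\{0\}$) is exactly the defining condition for membership in $\Ka(\{A\})$ (respectively $\Ka(\{A\setminus\{0\}\})$), establishing this bi-implication for all $K\in\Ka$ immediately gives the equality of the two sets, hence the equivalence of the two assessments.

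For the inclusion $\Ka(\{A\}) \subseteq \Ka(\{A\setminus\{0\}\})$, I would take $K \in \Ka$ with $A \in K$ and simply invoke axiom $\mathrm{K}_0$ applied to $A$, which yields $A\setminus\{0\} \in K$, so $K \in \Ka(\{A\setminus\{0\}\})$.

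For the converse inclusion, I would take $K \in \Ka$ with $A\setminus\{0\} \in K$. If $0 \notin A$ then $A = A\setminus\{0\}$ and there is nothing to prove, so I may assume $0 \in A$. Since $\{0\}\in\Q$, axiom $\mathrm{K}_4$ applied to the member $A\setminus\{0\}$ of $K$ with $Q = \{0\}$ gives $(A\setminus\{0\})\cup\{0\} \in K$; because $0\in A$ this set equals $A$, so $A \in K$ and hence $K \in \Ka(\{A\})$.

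Combining the two inclusions yields $\Ka(\{A\}) = \Ka(\{A\setminus\{0\}\})$, which by \cref{def:eqAss} is exactly the assertion that $\{A\}$ and $\{A\setminus\{0\}\}$ are equivalent. There is no substantial obstacle here: the proof is a one-line use of $\mathrm{K}_0$ in one direction and of $\mathrm{K}_4$ (padding $A\setminus\{0\}$ back up with the set $\{0\}$) in the other, the only minor point being the harmless case split on whether $0 \in A$.
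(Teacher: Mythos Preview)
Your proposal is correct and follows essentially the same approach as the paper's proof: reduce equivalence to the bi-implication $A\in K \Leftrightarrow A\setminus\{0\}\in K$ for coherent $K$, then use $\mathrm{K}_0$ for one direction and $\mathrm{K}_4$ for the other. The only cosmetic difference is your case split on whether $0\in A$, which you could avoid by taking $Q=A$ in $\mathrm{K}_4$ (since $(A\setminus\{0\})\cup A=A$), but this is harmless.
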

This result can be generalized so as to remove options for which there is a second option that can, by scaling, be made better than the first option.
\begin{theorem}\label{th:simp1}
Consider an option set $A\in\Q$ and two options $u,v\in A$ such that $u\neq v$ and $u\leq \mu v$ for some $\mu\geq 0$.
Then the assessment $\{A\}$ is equivalent to $\{A\setminus \{u\}\}$. 
\end{theorem}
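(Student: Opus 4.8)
The plan is to prove the two set inclusions $\Ka(\{A\}) \subseteq \Ka(\{A\setminus\{u\}\})$ and $\Ka(\{A\setminus\{u\}\}) \subseteq \Ka(\{A\})$, since equivalence of the two assessments means exactly that these collections of coherent supersets coincide. Both directions amount to showing: for a coherent $K$, one has $A \in K$ if and only if $A\setminus\{u\} \in K$. The ``$\Leftarrow$'' direction is the easy one: if $A\setminus\{u\} \in K$, then since $A = (A\setminus\{u\}) \cup \{u\}$, axiom $\mathrm{K}_4$ (adding arbitrary option sets $Q$ to a member of $K$ keeps it in $K$) immediately gives $A \in K$. This also handles $\Ka(\{A\setminus\{u\}\}) \subseteq \Ka(\{A\})$.

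For the ``$\Rightarrow$'' direction — the substantive one — suppose $K$ is coherent and $A \in K$. I want to conclude $A\setminus\{u\} \in K$. Here I will exploit the hypothesis $u \leq \mu v$ with $v \in A$, $v \neq u$, and $\mu \geq 0$. The idea is to use $\mathrm{K}_3$ with $B = \{v\}$ (or with a singleton built from $v$), choosing the function $\lala$ cleverly so that the resulting combined option set is contained in $A\setminus\{u\}$, and then invoke the downward-closure-like behaviour. Concretely: consider the pair $(A, \{v\})$. By $\mathrm{K}_2$ we cannot directly assert $\{v\}\in K$, so instead I apply $\mathrm{K}_3$ to $A$ and $A$ itself (both are in $K$), forming combinations $\lala(\mathbf{w})\mathbf{w}$ for $\mathbf{w} \in A\times A$; I define $\lala$ to act as the identity-type combination on all pairs except those involving $u$ in the first coordinate, where I instead route through $v$. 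The target is to produce an option set whose every element lies in (the positive cone above) some element of $A\setminus\{u\}$ — and then use $\mathrm{K}_0$ together with the fact (provable from $\mathrm{K}_2$, $\mathrm{K}_3$, $\mathrm{K}_4$, or cited from \cite{de2018desirability}) that $K$ is ``upward closed'' in the sense that replacing an element of a member of $K$ by something $\geq$ it (after scaling) keeps it in $K$. If such a replacement lemma is not already available, I would first establish the auxiliary claim: if $C \in K$, $w \in C$, and $w \leq w'$, then $(C\setminus\{w\})\cup\{w'\} \in K$; this follows by applying $\mathrm{K}_3$ to $C$ and $\{w'\}\in K$ — wait, $\{w'\}$ need not be in $K$ — so instead one applies $\mathrm{K}_3$ to $C$ and $C$, taking $\lala$ to pick out $w'$ from a pair and a suitable positive combination elsewhere, combined with $\mathrm{K}_4$ to pad back to the full set.

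The cleanest route, which I would adopt, is: (i) prove the scaling-replacement lemma ``$C\in K$, $w\in C$, $w \leq \mu' w''$ for some $w''$ and $\mu' > 0$ implies $(C\setminus\{w\})\cup\{w''\}\in K$'' — but note $w''$ here would be $v$ and $\mu' = \mu$, with the caveat $\mu$ could be $0$, forcing $u\leq 0$, a case handled separately via $\mathrm{K}_0$ and $\mathrm{K}_2$; (ii) apply it to $C = A$, $w = u$, $w'' = v$, obtaining $(A\setminus\{u\})\cup\{v\} \in K$; (iii) since $v \in A\setminus\{u\}$ already (as $v \neq u$), we have $(A\setminus\{u\})\cup\{v\} = A\setminus\{u\}$, so $A\setminus\{u\}\in K$, as desired. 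The main obstacle is step (i): carefully choosing the function $\lala\colon C\times C \to \R^{2,+}$ in $\mathrm{K}_3$ so that $\{\lala(\mathbf{w})\mathbf{w} : \mathbf{w}\in C\times C\}$ equals (or is contained in, then padded by $\mathrm{K}_4$) the target set $(C\setminus\{w\})\cup\{w''\}$, while respecting the constraint that each $\lala(\mathbf{w})$ has nonnegative entries summing to something strictly positive. One has to treat the $\mu = 0$ degenerate case and the possibility $0 \in A$ with due care, but these are routine once the main lemma is in place.
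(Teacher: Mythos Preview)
Your reduction to showing $A\in K\Leftrightarrow A\setminus\{u\}\in K$ for every coherent $K$, and your handling of the $\Leftarrow$ direction via $\mathrm{K}_4$, are both correct. The gap is in the $\Rightarrow$ direction: your proposed route through $\mathrm{K}_3$ applied to the pair $(A,A)$ cannot work as stated. The obstruction is the diagonal element $(u,u)\in A\times A$. Whatever $\lala(u,u)=(\lambda_1,\lambda_2)\in\R^{2,+}$ you pick, the resulting option is $(\lambda_1+\lambda_2)u$, a strictly positive scalar multiple of $u$. This option has no reason to lie in $A\setminus\{u\}$ (nor to be $\leq$ an element of $A\setminus\{u\}$ in any way that would let $\mathrm{K}_0$ or $\mathrm{K}_4$ absorb it), so the $\mathrm{K}_3$-output set will in general still contain a ``copy'' of $u$ and you are back where you started. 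Your auxiliary replacement lemma has the same defect for the same reason, and you correctly sensed this when you wrote ``the main obstacle is step~(i)'' without resolving it.

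The idea you are missing --- and which the paper uses --- is to pair $A$ not with itself but with a \emph{positive singleton} $\{p\}$, which is guaranteed to lie in $K$ by $\mathrm{K}_2$. Since $A\times\{p\}$ has exactly one pair per element of $A$, there is no diagonal problem: for $w\in A\setminus\{u\}$ take $\lala(w,p)=(1,0)$ to recover $w$, and for the single pair $(u,p)$ choose $\lala$ so that the combination equals $v$. Concretely, when $\mu>0$ and $\mu v-u\neq 0$, take $p\coloneqq\mu v-u\in\V_{>0}$ and $\lala(u,p)=(\mu^{-1},\mu^{-1})$, giving $\mu^{-1}u+\mu^{-1}(\mu v-u)=v$; when $\mu v=u$, any $p\in\V_{>0}$ with $\lala(u,p)=(\mu^{-1},0)$ works. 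The case $\mu=0$ (so $u\leq 0$) is handled separately: if $u=0$ use $\mathrm{K}_0$; if $u<0$ take $p\coloneqq -u>0$ and $\lala(u,p)=(1,1)$ to produce $0$, then strip it with $\mathrm{K}_0$. In each case the $\mathrm{K}_3$-output is exactly $A\setminus\{u\}$ (or $(A\setminus\{u\})\cup\{0\}$), and you are done.
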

If $\V=\R^2$, this result---together with \cref{prop:simpindiv}---guarantees that every option set in $\A$ can be replaced by an equivalent option set of size at most two.
\begin{proposition}\label{prop:simp1inR2}
Let $\V=\R^2$ and consider any option set $A\in\Q$. 
Then there is always an option set $B\in \Q$ with at most two options such that $\{A\}$ is equivalent to $\{B\}$, and this option set $B$ can be found by repeated application of \cref{th:simp1}.
\end{proposition}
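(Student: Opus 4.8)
The plan is to show that repeated application of \cref{th:simp1} (combined with \cref{prop:wegMetDieNul} to handle zero) terminates in an option set with at most two elements. First I would observe that \cref{th:simp1} strictly decreases the cardinality of the option set each time it is applicable, so since $A$ is finite, the procedure must terminate; what remains is to argue that when it halts, the resulting set $B$ has size at most two. So suppose, for contradiction, that $B = \{b_1, b_2, b_3, \dots\}$ has at least three elements and that neither \cref{th:simp1} nor \cref{prop:wegMetDieNul} applies to it. Non-applicability of \cref{prop:wegMetDieNul} means $0 \notin B$. Non-applicability of \cref{th:simp1} means: for every pair of distinct $u,v \in B$, there is no $\mu \ge 0$ with $u \le \mu v$.

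The key step is to derive a contradiction from this in $\R^2$ using a geometric/ray argument. For a nonzero option $v \in \R^2$, consider the closed ray $\R_{\ge 0}\, v = \{\mu v : \mu \ge 0\}$ and the "downset" it generates, $D(v) \coloneqq \{w \in \R^2 : w \le \mu v \text{ for some } \mu \ge 0\}$. The condition that \cref{th:simp1} does not apply says precisely that for distinct $u, v \in B$ we have $u \notin D(v)$. I would then analyze the structure of $D(v)$: writing $v = (v_1, v_2)$, the set $D(v)$ is a translated cone, and crucially $\R^2 = D(v) \cup D(-v) \cup (\text{something})$ — more carefully, I would split on the sign pattern of the coordinates of the elements of $B$. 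Since $0 \notin B$ and $B$ is pairwise "incomparable" in the above sense, I'd argue each $b_i$ lies in one of finitely many sign-classes (e.g. by the sign of $b_i$ relative to the rays through the other $b_j$), and a pigeonhole argument forces two of them into a configuration where one does lie in the downset of the other, contradiction. The cleanest version: order the (at least three) nonzero vectors by angle; among any three vectors in $\R^2$, two of them $u,v$ span an angle of at most $\pi$ (in fact one can always pick two whose angle is $< \pi$, or handle the degenerate collinear case directly), and for such a pair elementary planar geometry gives a $\mu \ge 0$ with $u \le \mu v$ or $v \le \mu u$.

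The main obstacle I expect is making the angular/sign-class case analysis airtight, in particular the degenerate cases: vectors lying on the coordinate axes, collinear vectors (where $u \le \mu v$ is immediate with the right $\mu$), and vectors with a zero coordinate. I would organize this by first noting that if any two distinct $u, v \in B$ are nonnegative scalar multiples of each other then \cref{th:simp1} applies, so all elements of $B$ point in pairwise distinct directions; then partition the directions by quadrant/axis and show that two elements sharing a (closed) half-plane through the origin always yield the required inequality. Once the "at most two" claim is established, the equivalence of $\{A\}$ and $\{B\}$ follows by transitivity of equivalence (each single step preserves $\Ka(\cdot)$ by \cref{th:simp1} and \cref{prop:wegMetDieNul}, and \cref{def:eqAss} makes equivalence transitive), which completes the proof.
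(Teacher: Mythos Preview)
Your overall architecture---terminate by cardinality, then argue by contradiction that no set with three or more elements can be stuck---is exactly the paper's. But your proposed ``key step'' contains a false claim. You write that among three vectors one can pick two $u,v$ with angle strictly less than $\pi$, and that ``for such a pair elementary planar geometry gives a $\mu\ge0$ with $u\le\mu v$ or $v\le\mu u$.'' This is not true: take $u=(1,-1)$ and $v=(-1,2)$. Their angle is about $161^\circ<\pi$, yet $u\le\mu v$ would force $1\le-\mu$, and $v\le\mu u$ would force $2\le-\mu$; both are impossible for $\mu\ge0$. So angle alone does not control comparability in the sense of \cref{th:simp1}, and the pigeonhole-on-angles plan collapses.

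Your fallback---partition by quadrant---is the right idea, and it is what the paper actually does, but it needs a sharper statement than ``two elements sharing a closed half-plane are comparable'' (any two vectors share some closed half-plane). The precise lemma the paper proves is: if $u,v\in\R^2\setminus\{0\}$ satisfy $u\not\le\mu v$ and $v\not\le\mu u$ for all $\mu\ge0$, then necessarily one of them lies in $Q_2\coloneqq\{(x,y)\neq0:x\le0,\,y\ge0\}$ and the other in $Q_4\coloneqq\{(x,y)\neq0:x\ge0,\,y\le0\}$. Granting this, three pairwise incomparable nonzero vectors would each have to lie in $Q_2$ or $Q_4$, and for every pair the two members would have to be in \emph{different} quadrants---impossible with three vectors and two quadrants. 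The zero case is absorbed by \cref{th:simp1} itself (take $\mu=0$), so you do not need \cref{prop:wegMetDieNul}. What your proposal is missing is precisely this $Q_2/Q_4$ dichotomy; once you prove that (a short but genuine case analysis on the signs of the coordinates), the rest of your outline goes through.
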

In the case of our running example, all the option sets in $\A$ can even be reduced to singletons.
\begin{myexpcont}
In $A_1$ we see that $w_1=2 w_2$ and in $A_2$ we see that $w_4 \leq \frac57 w_3$.
So, by \cref{prop:simpindiv,th:simp1}, we can simplify the assessment $\A_C$ of Example 1.4 to
$\A'_C\coloneqq\{\{w_2\},\{w_3\},\{w_5\}\}$.
\end{myexpcont}

Our equivalence results so far were all concerned with removing options from the option sets in $\A$.
Our next result goes even further: it provides conditions for removing the option sets themselves.
\begin{theorem}\label{th:simp2}
Consider an assessments $\A$ and an option set $A\in\A$ such that $A\in\Ex(\A\setminus\{A\})$. Then $\A$ is equivalent to $\A\setminus\{A\}$.
\end{theorem}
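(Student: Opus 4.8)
The plan is to prove the claimed equivalence directly from \cref{def:eqAss}, i.e. to show that $\Ka(\A)=\Ka(\A\setminus\{A\})$. One inclusion is immediate and costs nothing: since $\A\setminus\{A\}\subseteq\A$, any coherent set of desirable option sets that includes $\A$ also includes $\A\setminus\{A\}$, so $\Ka(\A)\subseteq\Ka(\A\setminus\{A\})$.

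For the reverse inclusion I would take an arbitrary $K\in\Ka(\A\setminus\{A\})$, so $K$ is coherent and $\A\setminus\{A\}\subseteq K$, and argue that $A\in K$ as well. This is where the hypothesis enters: by \cref{eq:defEx}, $\Ex(\A\setminus\{A\})=\bigcap\Ka(\A\setminus\{A\})$, and since $K$ is one of the sets being intersected, $A\in\Ex(\A\setminus\{A\})$ forces $A\in K$. Together with $\A\setminus\{A\}\subseteq K$ this gives $\A=(\A\setminus\{A\})\cup\{A\}\subseteq K$, hence $K\in\Ka(\A)$. So $\Ka(\A\setminus\{A\})\subseteq\Ka(\A)$, and the two sets coincide.

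The only subtlety worth flagging is the degenerate case $\Ka(\A\setminus\{A\})=\emptyset$. Here the convention $\bigcap\emptyset=\Q$ makes $A\in\Ex(\A\setminus\{A\})$ hold vacuously, so the hypothesis is automatically satisfied; but then the first (free) inclusion already gives $\Ka(\A)\subseteq\emptyset$, so both sides are empty and the assessments are trivially equivalent. Apart from this remark, the argument is a short set-theoretic manipulation, so I do not anticipate any real obstacle; the work is entirely in unwinding the definitions of $\Ex$ and of equivalence.

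Finally, once $\Ka(\A)=\Ka(\A\setminus\{A\})$ is established, the conclusion ``$\A$ is equivalent to $\A\setminus\{A\}$'' is exactly \cref{def:eqAss}, and — as already noted in the text after that definition — this also immediately transfers consistency and, when consistent, the natural extension, which is the practically relevant consequence.
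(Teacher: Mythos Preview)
Your proposal is correct and follows essentially the same argument as the paper: both prove $\Ka(\A)=\Ka(\A\setminus\{A\})$ by noting that the inclusion $\Ka(\A)\subseteq\Ka(\A\setminus\{A\})$ is immediate from $\A\setminus\{A\}\subseteq\A$, and then using \cref{eq:defEx} to conclude that any $K\in\Ka(\A\setminus\{A\})$ must contain $A$ and hence all of $\A$. Your explicit remark on the degenerate case $\Ka(\A\setminus\{A\})=\emptyset$ is a harmless addition; the paper's proof simply leaves that case implicit since the reverse inclusion is then vacuous.
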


\begin{myexpcont}
Let us start from the assessment $\A'_C=\{\{w_2\},\allowbreak\{w_3\},\allowbreak\{w_5\}\}$ in Example~1.7. 
We will remove $\{w_2\}$ from this assessment using \cref{th:simp2}.
To do that, we need to show that $\{w_2\}\in \Ex(\{\{w_3\},\{w_5\}\})$.
To that end, we apply \cref{th:testnatext} for $S=\{w_2\}$.
Since $w_2\not>0$, it follows that $S\cap \V_{>0}=\emptyset$.
We therefore check the second condition of \cref{th:testnatext}.
Since $\{w_3\}\times\{w_5\}$ is a singleton, we only need to check the condition for a single tuple $\uu=(w_3,w_5)$.
For $s=w_2$ and $\lala=(\frac14,0)$, we find that $\lala \uu=\frac14 w_3+0 w_5=(\frac74,-1)\leq(2,-1)= w_2=s$.
Hence, $\{w_2\}\in \Ex(\{\{w_3\},\{w_5\}\})$ and we can therefore replace $\A'_C$ by the smaller yet equivalent assessment
$
\A''_C=\{\{w_3\},\{w_5\}\}.
$
Taking into account our findings in Example 1.7, it follows that $\A_C$ is equivalent to $\A_C''$ and, therefore, that $\Ex(\A_C)=\Ex(\A_C'')$.

Obviously, this makes it easier to evaluate $\Ex(C)$.
Suppose for example that we are asked to choose from $\{v_7,v_8\}$, with $v_7\coloneqq(5,-2)$ and $v_8\coloneqq(-4,3)$.
By \cref{th:natequiv,eq:CK}, we can check if we can reject $v_8$ by checking if $\{v_7-v_8\}=\{(9,-5)\}\in\Ex(\A''_C)=\Ex(\A_C)$.
For $s=(9,-5)$, $\uu=(w_3,w_5)$ and $\lala=(\frac97,0)$ we see that $\lala \uu=(9,-\frac{36}{7}) \leq s$, so \cref{th:testnatext} tells us that, indeed, $\{(9,-5)\}\in\Ex(\A''_C)$.
If we were to perform the same check directly for $\Ex(\A_C)$, then we would have to establish four inequalities---one for every $\uu \in A_1\times A_2 \times A_3$---while now we only had to establish one.
\end{myexpcont}

\section{Practical implementation}
\label{sec:prac}

For large or complicated assessments, it will no longer be feasible to manually check the conditions in \cref{th:testnatext}.
In those cases, the algorithm that we are about to present can be used instead. 
According to \cref{th:testnatext}, testing if an option set $S$ belongs to $\Ex(\A)$ for some assessment sequence $\A=\{\AAA\}$ requires us to check if $S\cap \V_{>0}\neq \emptyset$ and, in case $n\neq0$, if there is for every $\uu\in\times_{j=1}^n A_j$ some $s\in S\cup\{0\}$ and $\lala\in\R^{n,+}$ such that $\lala \uu \leq s$.
If one of these two conditions is satisfied, then $S$ belongs to $\Ex(\A)$.
The first condition is not complicated, as we just have to check for every $s\in S$ if $s>0$.
For the second condition, the difficult part is how to verify, for any given $\uu\in \times_{j=1}^n A_j$ and $s\in S\cup\{0\}$, whether there is some $\lala\in \R^{n,+}$ for which $\lala \uu \leq s$.
If one of these two conditions is satisfied then $S$ belongs to $\Ex(\A)$.
Given the importance of this basic step, we introduce a boolean function \textsc{IsFeasible}$\colon \V^n\times \V\to\{\text{True},\text{False}\}$. For every $\uu\in\V^n$ and $s\in\V$, it returns True if $\lala \uu \leq s$ for at least one $\lala\in \R^{n,+}$, and False otherwise.

So the only problem left is how to compute \textsc{IsFeasible}$(\uu,s)$.
The tricky part is the constraint that $\lala=(\lambda_1,\dots,\lambda_n)\in \R^{n,+}$. 
By definition of $\R^{n,+}$, this can be rewritten as $\lambda_j\geq0$ for all $j\in\{1,\dots,n\}$ and $\sum_{j=1}^n \lambda_j>0$, which are all linear constraints.
Since the condition $\lala \uu \leq s$ is linear as well, we have a linear feasibility problem to solve.
However, strict inequalities such as $\sum_{j=1}^n \lambda_j>0$ are problematic for software implementations of linear feasibility problems.
A quick fix is to choose some very small $\epsilon>0$ and impose the inequality $\sum_{j=1}^n \lambda_j\geq \epsilon$ instead, but since this is an approximation, it does not guarantee that the result is correct.
A better solution is to use the following alternative characterisation that, by introducing an extra free variable, avoids the need for strict inequalities.\footnote{This result was inspired by a similar trick that Erik Quaegebeur employed in his CONEstrip algorithm \cite{quaeghebeur2013conestrip}.}
\begin{proposition}\label{prop:hunt}
Consider any $s\in \V$ and any $\uu=(u_1,\dots,u_n)\in\V^n$.
Then \textsc{IsFeasible}$(\uu,s)=$\emph{True} if and only if there is a $\mumu=(\mu_1,\dots,\mu_{n+1})\in\R^{n+1}$ such that $\sum_{j=1}^n \mu_j u_j \leq \mu_{n+1} s$, $\mu_j\geq 0$ for all $j\in\{1,\dots,n\}$, $\mu_{n+1}\geq 1$ and $\sum_{j=1}^{n} \mu_j\geq 1$.
\end{proposition}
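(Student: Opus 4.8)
The plan is to prove the two implications separately, in each case by an explicit rescaling of the witness. Recall that \textsc{IsFeasible}$(\uu,s)=$True means precisely that there is some $\lala\in\R^{n,+}$ with $\lala\uu=\sum_{j=1}^n\lambda_j u_j\leq s$; that is, $\lambda_j\geq 0$ for all $j\in\{1,\dots,n\}$ and $\sum_{j=1}^n\lambda_j>0$. The whole point of the reformulation is that the problematic \emph{strict} inequality $\sum_j\lambda_j>0$ can be traded for the non-strict $\sum_j\mu_j\geq 1$ once an extra homogenising variable $\mu_{n+1}$ is allowed to absorb the scaling freedom; the $\geq 1$ bounds are harmless because the set of feasible $(\lambda_1,\dots,\lambda_n)$, as far as scaling is concerned, forms a cone, so any solution with strictly positive coordinate sum can be inflated to meet them.

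For the ``only if'' direction, I would start from such a $\lala$ and set $t\coloneqq\sum_{j=1}^n\lambda_j>0$. Choosing $\alpha\coloneqq\max\{1,1/t\}$, so that $\alpha\geq 1$ and $\alpha t\geq 1$, I would define $\mu_j\coloneqq\alpha\lambda_j$ for $j\in\{1,\dots,n\}$ and $\mu_{n+1}\coloneqq\alpha$. Then $\mu_j\geq 0$, $\sum_{j=1}^n\mu_j=\alpha t\geq 1$, $\mu_{n+1}=\alpha\geq 1$, and multiplying the inequality $\sum_{j=1}^n\lambda_j u_j\leq s$ by the non-negative scalar $\alpha$ (scaling invariance of the vector order $\leq$) yields $\sum_{j=1}^n\mu_j u_j\leq\mu_{n+1}s$. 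This $\mumu$ satisfies all four required conditions.

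For the ``if'' direction, I would start from a $\mumu=(\mu_1,\dots,\mu_{n+1})$ meeting the four conditions. Since $\mu_{n+1}\geq 1>0$, I may set $\lambda_j\coloneqq\mu_j/\mu_{n+1}\geq 0$ for $j\in\{1,\dots,n\}$; dividing $\sum_{j=1}^n\mu_j u_j\leq\mu_{n+1}s$ by $\mu_{n+1}>0$ gives $\sum_{j=1}^n\lambda_j u_j\leq s$, and $\sum_{j=1}^n\lambda_j=\tfrac{1}{\mu_{n+1}}\sum_{j=1}^n\mu_j\geq\tfrac{1}{\mu_{n+1}}>0$. Hence $\lala\in\R^{n,+}$ witnesses \textsc{IsFeasible}$(\uu,s)=$True.

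I do not expect any genuine obstacle here: both directions are one-line rescalings. The only points requiring a little care are to invoke scaling invariance of the non-strict order $\leq$ when multiplying an inequality by a non-negative scalar, to observe that the hypothesis $\mu_{n+1}\geq 1$ is used only through $\mu_{n+1}>0$, and to note that $\sum_{j}\mu_j\geq 1$ together with $\mu_{n+1}>0$ is exactly what forces the reconstructed $\lala$ to have strictly positive coordinate sum, which is the one place where the extra variable earns its keep.
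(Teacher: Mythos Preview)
Your proof is correct and follows essentially the same approach as the paper: the ``if'' direction is identical (set $\lambda_j=\mu_j/\mu_{n+1}$), and in the ``only if'' direction your scaling factor $\alpha=\max\{1,1/t\}$ is precisely the paper's case split ($\mumu=(\lambda_1,\dots,\lambda_n,1)$ when $t\geq 1$, else $\mumu=(\lambda_1/t,\dots,\lambda_n/t,1/t)$) written more compactly.
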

Computing \textsc{IsFeasible}$(\uu,s)$ is therefore a matter of solving the following linear feasibility problem: 
\begin{align*}
  \text{find}\quad        & \mu_1,\dots,\mu_{n+1}\in\R, \\
  \text{subject to\quad}  & \mu_{n+1} s(x)-\textstyle\sum_{j=1}^n \mu_j u_j(x)\geq 0  &&\text{for all } x\in\X,\\
                    & \textstyle\sum_{j=1}^n \mu_j\geq 1,\quad \mu_{n+1}\geq1, \\
                \text{and}\quad  & \mu_j\geq0    &&\text{for all } j\in\{1,\dots,n\}.
\end{align*}
For finite $\X$, such problems can be solved by standard linear programming methods; see for example \cite{Dantzig}.

\begin{algorithm}
\caption{Check if an option set $S\in\Q$ is in $\Ex(\A)$ for an assessment $\A\in\QQ$\label{alg:IsInNat}}
\begin{algorithmic}[1]
\Require{$S,\AAA\in\Q$ with $n\in \N\cup\{0\}$ and $\A=\{\AAA\}$.}
\Statex
\Function{IsInExtension}{$S,\AAA$}\Comment{Check if $S$ is in $\Ex(\A)$.}
    \ForAll{$s \in S$}
        \If{$s>0$ (so $s\geqslant0$ and $s\neq 0$)}
          \State\Return{True}
        \EndIf
    \EndFor
    \If{$n=0$}
        \State\Return{False}
    \EndIf
    \ForAll{$\uu \in \times_{j=1}^n A_j$}
        \ForAll{$s\in S\cup\{0\}$}\Comment{Search an $s$ and $\lala$ such that $\lala \uu\leq s$.}
            \Let{res}{\textsc{IsFeasible}$(\uu,s)$}
            \If{res}
                \State\textbf{break} \Comment{Stop the loop in $s$ when a suitable $s$ is found.}
            \EndIf
        \EndFor
        \If{$\neg$ res}
            \State\Return{False} \Comment{If there is no such $s$.}
        \EndIf
    \EndFor
    \State\Return{True} \Comment{When all $\uu$ have been checked.}
\EndFunction
\end{algorithmic}
\end{algorithm}
Together, \cref{th:testnatext,prop:hunt} therefore provide a practical method to test if an option set $S$ belongs to $\Ex(\A)$, with $\A=\{\AAA\}$.
Pseudocode for this method is given in \cref{alg:IsInNat}.
First, in lines 2 to 4, we check if $S\cap \V_{>0}\neq \emptyset$.
If it is, then $S\in\Ex(\A)$ and we thus return True.
If this is not the case, and the assessment is empty, i.e. $n=0$, then we have to return False, as we do in lines 5 and 6.
Next, in lines 7 to 11, we run through all $\uu\in\times_{j=1}^n A_j$ and search an $s\in S\cup\{0\}$ for which \textsc{IsFeasible}$(u,s)=$ True, i.e. for which there is some $\lala\in\R^{n,+}$ such that $\lala \uu\leq s$.
As soon as we have found such an $s$, we can break from the loop---halt the search---and go to the next $\uu$.
However, if we went through all of $S\cup\{0\}$ and we did not find such an $s$, then the second condition of \cref{th:testnatext} is false for the current $\uu$ and thus $S$ does not belong to $\Ex(\A)$; we then return False, as in lines 12 and 13.
On the other hand, if we went through all $\uu\in\times_{j=1}^n A_j$ and for every one of them we have found an $s\in S\cup\{0\}$ such that \textsc{IsFeasible}$(\uu,s)$=True, then we conclude that $S$ is in $\Ex(\A)$ by the second condition of \cref{th:testnatext}, so we return True.


\addtocounter{example}{1}
\begin{example}
Roger is an expert in the pro snooker scene.
An important game is coming soon where two players will play two matches.
Betting sites will offer bets on the following three possible outcomes: the first player wins 2-0, a 1-1 draw, or the second player wins 2-0.
So a bet corresponds with an option in $\R^3$, the components of which are the rewards for each of the three outcomes.
Before the possible bets are put online, we ask Roger to provide us with an assessment.
He agrees to do so, but tells us that we should not contact him again when the bets are online.
\sloppy
We ask him to choose from the sets
\(
B_1=\{(-4, 1, -1), \allowbreak(3, -5, -1), \allowbreak (-3, 1, -1), \allowbreak(4, 0, -4),  \allowbreak (3, -5, 4)\}
\),
\(
B_2=\{(-4, 2, 4),\allowbreak (-2, -4, 3),\allowbreak (0, -4, 2),\allowbreak (0, 3, -5),\allowbreak (2, 1, 3)\}
\)
and
\(
B_3=\{(-4, 1, 4),\allowbreak (-2, -2, 4),\allowbreak (-5, 3, 4)\}
\).
He provides us with the choice assessment
\(
C(B_1)=\{(4, 0, -4),\allowbreak (3, -5, 4)\},
\)
\(
C(B_2)=\{(-4, 2, 4)\}
\) and 
\(
C(B_3)=\{(-4, 1, 4),\allowbreak (-2, -2, 4)\}.
\)
Some time later, the betting site makes the following set of bets available:
\begin{multline*}
A=\{(-1, -1, 2),\allowbreak (-4, -4, 6),\allowbreak (-2, -10, 6),\allowbreak (-1, 0, -2),\allowbreak (-2, 8, -6),\\\allowbreak (2, -4, 4),\allowbreak (4, -6, 1),\allowbreak (-3, 8, 5),\allowbreak (2, 9, -9),\allowbreak(1, 7, -3)\}.
\end{multline*}
The question then is what bet to choose from $A$, based on what Roger has told us. The assessment $\A_C$ that correponds to Roger's choice statements contains eight option sets. However, we can use~\cref{th:simp1,th:simp2,prop:simpindiv} to reduce $\A_C$ to the equivalent assessment
\(
\A^*=
\{
\{(3, -5, 0)\},\allowbreak
\{(-6, 1, 1), (-2, 2, -8)\}
\}.
\)
With \cref{th:consifandonlyifcoh,th:natequiv,alg:IsInNat}, we find that the assessment is consistent and with subsequent runs of \cref{alg:IsInNat}, and using \cref{th:natequiv,eq:CK}, we find that
\(
\Ex(C)(A)=\{(-4, -4, 6),\allowbreak (-2, -10, 6),\allowbreak (-3, 8, 5)\}.
\)
So we can greatly reduce the number of bets to choose from but we cannot, based on the available information, deduce entirely what Roger would have chosen.
\hfill$\lozenge$
\end{example}

\section{Conclusion and future work}\label{sec:conc}

The main conclusion of this work is that choice functions provide a principled as well as practically feasible framework for inferring new decisions from previous ones.
The two key concepts that we introduced to achieve this were consistency and natural extension.
The former allows one to check if an assessment of choices is compatible with the axioms of coherence, while the latter allows one to use these axioms to infer new choices.
From a practical point of view, our main contribution is an algorithm that is able to execute both tasks.
The key technical result that led to this algorithm consisted in establishing a connection with the framework of sets of desirable option sets.
This allowed us to transform an assessment of choices into an assessment of desirable option sets, then simplify this assessment, and finally execute the desired tasks directly in this setting.


Future work could add onto \cref{sec:Ass} by trying to obtain a `simplest' representation for any given assessment, thereby further reducing the computational complexity of our algorithm.
We would also like to conduct extensive experiments to empirically evaluate the time efficiency of our algorithm and the proposed simplifications, and how this efficiency scales with a number of important parameters. This includes the number of option sets in an assessment, the size of the option sets themselves, the dimension of the vector space $\V$ and the size of the option set $A$ for which we want to evaluate the natural extension $\Ex(C)(A)$.
We also intend to consider  alternative forms of assessments, such as bounds on probabilities, bounds on expectations and preference statements, and show how they can be made to fit in our framework.
Finally, we would like to apply our methods to a real-life decision problem.
\subsubsection*{Acknowledgements}
The work of Jasper De Bock was supported by his BOF Starting Grant “Rational decision making under uncertainty: a new paradigm based on choice functions”, number 01N04819. We also thank the reviewers for their valuable feedback.
%
%
%
%
\bibliographystyle{splncs04}
\bibliography{ArneDecadt}

\begin{thebibliography}{1}
\providecommand{\url}[1]{\texttt{#1}}
\providecommand{\urlprefix}{URL }
\providecommand{\doi}[1]{https://doi.org/#1}

\bibitem{de2020archimedean}
De~Bock, J.: Archimedean choice functions: an axiomatic foundation for
  imprecise decision making  (2020), \url{https://arxiv.org/abs/2002.05196}

\bibitem{de2018desirability}
De~Bock, J., De~Cooman, G.: A desirability-based axiomatisation for coherent
  choice functions. In: Proceedings of SMPS 2018. pp. 46--53. Springer (2018)

\bibitem{de2019interpreting}
De~Bock, J., De~Cooman, G.: Interpreting, axiomatising and representing
  coherent choice functions in terms of desirability. In: Proceedings of ISIPTA
  2019. pp. 125--134. PMLR (2019)

\bibitem{Dantzig}
George B.~Dantzig, M.N.T.: Linear Programming: Theory and extensions, vol.~2.
  Springer, 1 edn. (2003)

\bibitem{quaeghebeur2013conestrip}
Quaeghebeur, E.: The {CONEstrip} algorithm. In: Synergies of Soft Computing and
  Statistics for Intelligent Data Analysis, pp. 45--54. Springer (2013)

\bibitem{seidenfeld2010coherent}
Seidenfeld, T., Schervish, M.J., Kadane, J.B.: Coherent choice functions under
  uncertainty. Synthese  \textbf{172}(1), ~157 (2010)

\bibitem{sen1971choice}
Sen, A.K.: Choice functions and revealed preference. The Review of Economic
  Studies  \textbf{38}(3),  307--317 (1971)

\bibitem{troffaes2007decision}
Troffaes, M.C.: Decision making under uncertainty using imprecise
  probabilities. International journal of approximate reasoning
  \textbf{45}(1),  17--29 (2007)

\bibitem{van2018natural}
Van~Camp, A., Miranda, E., De~Cooman, G.: Natural extension of choice
  functions. In: Proceedings of IPMU 2018. pp. 201--213. Springer (2018)

\end{thebibliography}

\iftoggle{extendedversion}{}{

\section*{Appendix}
\addcontentsline{toc}{section}{Appendix}

\begin{proofof}{\cref{th:coh}}
We first consider the case where $C$ is consistent.
The fact that $C'(A)\subseteq\Ex(C)(A)$ for every $A\in\Q$ and $C'\in\mathcal{C}_C$ follows directly from the definition of $\Ex(C)$.
So the only thing left to prove in this case is that $\Ex(C)\in \mathcal{C}_C$, i.e., that $\Ex(C)$ is coherent.

Since $C$ is consistent, $\mathcal{C}_C$ is non-empty.
Hence, $\Ex(C)$ is a non-empty union of coherent choice functions.
We will prove that all coherence axioms are preserved under such non-empty unions.
Axiom $\mathrm{C}_0$ and $\mathrm{C}_1$ are trivially preserved under such non-empty unions.
For the others, observe that by De Morgan's laws we have for all $A\in \Q$ that
\begin{equation}\label{eq:Rej}
R_{\Ex(C)}(A)=\bigcap_{C'\in\mathcal{C}_C} R_{C'}(A).
\end{equation}
This already implies that $\mathrm{C}_4$ holds.
Indeed, for any $A,B\in\Q$ such that $A\subseteq B$, we know for all $C'\in\mathcal{C}_C$ that $R_{C'}(A)\subseteq R_{C'}(B)$---because $C'$ satisfies $\mathrm{C}_4$---and it therefore follows from \cref{eq:Rej} that also $R_{\Ex(C)}(A)\subseteq R_{\Ex(C)(B)}$.
\cref{eq:Rej} also implies that
\begin{align}\label{eq:Kintersect}
K_{\Ex(C)}
&=\{A\in\Q\colon 0\in R_{\Ex(C)}(A\cup\{0\})\}\notag\\
&=\left\{A\in \Q \colon  0\in  \bigcap_{C'\in\mathcal{C}_C} R_{C'}(A\cup\{0\})\right\}\notag\\
&=\bigcap_{C'\in\mathcal{C}_C}\left\{A\in \Q \colon  0\in R_{C'}(A\cup\{0\})\right\}=\bigcap_{C'\in\mathcal{C}_C} K_{C'}.
\end{align}
To prove $\mathrm{C}_2$, we consider any $u\in \V_{>0}$ and prove that $\{u\}\in K_{\Ex(C)}$.
By $\mathrm{C}_2$, for every coherent choice function $C' \in \mathcal{C}_C$, we know that $\{u\}\in K_{C'}$. 
Hence, indeed, $\{u\}\in \bigcap_{C'\in\mathcal{C}_C} K_{C'}=K_{\Ex(C)}$, using equation \cref{eq:Kintersect} for the last equality.

To prove $\mathrm{C}_3$, we consider any $A,B\in K_{\Ex(C)}$ and $\lala\colon A\times B\to \R^{2,+}$ and prove that $\{\lala(\uu)\uu\colon \uu\in A\times B\}\in K_{\Ex(C)}.$
For any $C'\in \mathcal{C}_{C}$, since $A,B\in K_{\Ex(C)}$, we know from \cref{eq:Kintersect} that also $A,B\in K_{C'}$.
Since $C'$ is coherent, it therefore follows from $\mathrm{C}_3$ that $\{\lala(\uu)\uu\colon\uu\in A\times B\}\in K_{C'}$.
Since this is true for every $C'\in\mathcal{C}_C$, \cref{eq:Kintersect} implies that also $\{\lala(\uu)\uu\colon \uu\in A\times B\}\in K_{\Ex(C)}$, as required.


Finally, we consider the case where $C$ is not consistent.
Then by definition of consistency $\mathcal{C}_C=\emptyset$, and it therefore follows from the definition of $\Ex(C)$ that $\Ex(C)(A)=\emptyset$ for all $A\in\Q$.
Since this is incompatible with $C_0$, we conclude that $\Ex(C)$ is incoherent.
\end{proofof}

\begin{proposition}\label{prop:HeTBeSTaAtNoGnIeTPeRsE}
Consider any set of desirable option sets $K\subseteq \Q$ and any choice function $C$ that are connected by 
\begin{equation}\label{eq:connect}
(\forall u\in\V)(\forall A\in\Q)\left(u\notin C(A\cup\{u\})\Leftrightarrow A-u\in K\right).
\end{equation}
Then $K$ is coherent if and only if $C$ is \cite[Proposition~4]{de2019interpreting}.
\end{proposition}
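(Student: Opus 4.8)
The plan is to deduce the equivalence from the two transfer results \cref{prop:KcohCK,prop:CcohKC} by showing that the connection~\eqref{eq:connect} forces $K$ to coincide with $K_C$ unconditionally, and forces $C$ to coincide with $C_K$ as soon as $K$ is coherent; in other words, that $(C,K)$ is — up to these identifications — just a pair from the bijection $\Phi$ of \cref{th:unique}.

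First I would extract from~\eqref{eq:connect} the identity $K = K_C$, with no coherence hypothesis on either object. Instantiating \eqref{eq:connect} at $u = 0$ and using $A - 0 = A$ gives, for every $A\in\Q$, that $0\notin C(A\cup\{0\})\Leftrightarrow A\in K$; by the definition~\eqref{eq:KC} of $K_C$, the left-hand side is precisely the condition ``$A\in K_C$'', so indeed $K = K_C$. This already settles the ``only if'' direction: if $C$ is coherent, then $K_C$ is coherent by \cref{prop:CcohKC}, and hence so is $K = K_C$.

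For the ``if'' direction I would assume $K$ coherent and show $C = C_K$, after which \cref{prop:KcohCK} finishes the argument. Since $C(A)$ and $C_K(A)$ are both subsets of $A$ for every $A\in\Q$, it suffices to compare their members among the elements $u\in A$. For such a $u$ we have $A\cup\{u\}=A$, so \eqref{eq:connect} reads $u\notin C(A)\Leftrightarrow A-u\in K$, whereas \eqref{eq:CK} reads $u\notin C_K(A)\Leftrightarrow (A-u)\setminus\{0\}\in K$. Thus the whole matter reduces to the equivalence $A-u\in K\Leftrightarrow (A-u)\setminus\{0\}\in K$. The forward implication is axiom $\mathrm{K}_0$; for the reverse, apply $\mathrm{K}_4$ with $Q=\{0\}$ to $(A-u)\setminus\{0\}\in K$ and note that, since $u\in A$, we have $0\in A-u$, so that $((A-u)\setminus\{0\})\cup\{0\}=A-u$. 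Hence $C(A)=C_K(A)$ for all $A\in\Q$, i.e.\ $C=C_K$, which is coherent by \cref{prop:KcohCK}.

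I do not expect any serious obstacle here; the one point that needs care is the bookkeeping around the zero vector. The connection~\eqref{eq:connect} refers to the set $A-u$, which contains $0$ as soon as $u\in A$, whereas the definition of $C_K$ deliberately strips $0$ out of $A-u$; reconciling the two is exactly where the coherence of $K$ is used, via $\mathrm{K}_0$ and $\mathrm{K}_4$, and it is the reason the two directions are not symmetric — the identity $K=K_C$ comes for free, but $C=C_K$ genuinely requires $K$ to be coherent.
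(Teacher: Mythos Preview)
Your internal reductions are correct: instantiating~\eqref{eq:connect} at $u=0$ does give $K=K_C$ outright, and when $K$ is coherent the $\mathrm{K}_0$/$\mathrm{K}_4$ juggling of the zero option does yield $C=C_K$. The problem is the outer frame of the argument. You propose to finish each direction by invoking \cref{prop:KcohCK} and \cref{prop:CcohKC}, but in this paper those two propositions are \emph{proved from} \cref{prop:HeTBeSTaAtNoGnIeTPeRsE}: the proof of \cref{prop:KcohCK} begins by verifying that $K$ and $C_K$ satisfy~\eqref{eq:connect} and then appeals to \cref{prop:HeTBeSTaAtNoGnIeTPeRsE}, and likewise for \cref{prop:CcohKC}. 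So your argument is circular relative to the paper's logical structure.

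The paper itself does not prove \cref{prop:HeTBeSTaAtNoGnIeTPeRsE}; it imports it from \cite[Proposition~4]{de2019interpreting} and uses it as the engine behind \cref{prop:KcohCK,prop:CcohKC}. If you want a self-contained proof here, you would have to verify the equivalence of the $\mathrm{K}$-axioms and the $\mathrm{C}$-axioms directly through the connection~\eqref{eq:connect}, without leaning on \cref{prop:KcohCK} or \cref{prop:CcohKC}. Your observations that $K=K_C$ always and $C=C_K$ when $K$ is coherent are still useful --- they tell you it suffices to check the axiom-by-axiom correspondence for the canonical pairs $(C,K_C)$ and $(C_K,K)$ --- but the actual axiom checks cannot be outsourced to results that themselves presuppose the proposition.
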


\begin{proofof}{\cref{prop:KcohCK}}
We will prove that $K$ and $C_K$ are connected by \cref{eq:connect}.
The result then follows from \cref{prop:HeTBeSTaAtNoGnIeTPeRsE}.

For the implication to the left, take any $A\in\Q$ and $u\in \V$ and assume that $A-u\in K$.
Then $A\cup\{u\}-u=(A-u)\cup\{0\} \in K$ by axiom $\mathrm{K}_4$ and therefore also $((A\cup\{u\})-u)\setminus\{0\}\in K$ by axiom $\mathrm{K}_0$.
So by definition of $C_K$, $u\notin C_K(A\cup\{u\})$.

For the implication to the right, consider any $A\in \Q$ an $u\in \V$ and assume that $u\notin C_K(A\cup\{u\})$.
Then $((A\cup\{u\})-u)\setminus\{0\}\in K$ by definition of $C_K$.
Since $((A\cup\{u\})-u)\setminus\{0\}=((A-u)\cup\{0\})\setminus\{0\}=(A-u)\setminus\{0\}$, this implies that $(A-u)\setminus\{0\}\in K$ and thus by axiom $\mathrm{K}_4$ also $A-u\in K$.
\end{proofof}

\begin{proofof}{\cref{prop:CcohKC}}
We will prove that $C$ and $K_C$ are connected by \cref{eq:connect}.
The result then follows from \cref{prop:HeTBeSTaAtNoGnIeTPeRsE}.
Take any $u\in\V$ and $A\in\Q$. 
Then the following equivalences hold:
\begin{equation*}
\hspace{1.8cm}
\begin{aligned}
&& u\notin C(A\cup\{u\})&\Leftrightarrow 0 \notin C((A\cup\{u\})-u)&\text{by axiom $\mathrm{C}_1$,}\\
 && &\Leftrightarrow 0 \notin C((A-u)\cup\{0\})&\text{by definition of $A-u$,}\\
 && &\Leftrightarrow 0 \in K_C &\text{by definition of $K_C$.}
\end{aligned}
\end{equation*}
\end{proofof}

\begin{proofof}{\cref{th:unique}}
We first prove that $\Phi$ is injective.
$\Phi$ maps to $\Ka$ by \cref{prop:CcohKC}.
Assume \emph{ex absurdo} that there are two different coherent choice function $C$ and $C'$ such that $K_{C}=K_{C'}$.
Then take one set $A\in\Q$ for which $C(A)\neq C'(A)$ and, without loss of generality, assume that $C(A)$ has an element $u$ that is not in $C'(A)$.
Then by $\mathrm{C}_1$, it holds that $0\in C(A-u)$ but $0\notin C'(A-u)$.
Moreover, $A-u$ will contain $0$ since $u\in A$, so $A-u=(A-u)\cup\{0\}$ and it follows that $A-u\notin K_C$ but $A-u\in K_{C'}$, contradicting the fact that $K_C=K_{C'}$.

Next, we prove that $\Phi$ is surjective.
Take any $K\in \Ka$.
We need to prove that there is some $C\in \mathcal{C}$ such that $\Phi(C)=K$.
By \cref{prop:KcohCK}, $C_K$ is coherent, so if we let $C\coloneqq C_K$, then $C\in\mathcal{C}$.
Furthermore, $\Phi(C)=K$ because
\begin{multline*}
\Phi(C_K)=K_{C_K}=\{A\in\Q \colon 0\notin C_K(A\cup\{0\})\}\\=\{A\in\Q\colon((A\cup\{0\})-0)\setminus\{0\}\in K\}=\{A\in\Q \colon A\setminus\{0\}\in K\}=K,
\end{multline*}
using axiom $\mathrm{K}_0$ for the last equality.

Since $\Phi$ is a bijection and, as we proved above, $\Phi(C_K)=K$ for any $K\in\Ka$, we also know that $\Phi^{-1}(K)=C_K$ for any $K\in\Ka$.
\end{proofof}

\begin{proofof}{\cref{th:consifandonlyifcoh}}
We give a circular proof. 
The implication that if $\A$ is consistent then $\Ex(\A)$ is coherent follows straight from \cite[theorem~8]{de2018desirability} and the definitions of consistency and $\Ex(\A)$.
If $\Ex(\A)$ is coherent, then $\{0\}\notin \Ex(\A)$ by $\mathrm{K}_1$, and therefore $\emptyset\notin \Ex(\A)$ by $\mathrm{K}_4$.
Finally, if $\emptyset\notin \Ex(\A)$, then $\Ex(\A)\neq\Q$ and therefore, because of \cref{eq:defEx}, $\Ka(\A)\neq \emptyset$; hence $\A$ is consistent.
\end{proofof}

Some of our results and proofs will make use of expressions that are similar to the ones in axiom $\mathrm{K}_3$, but for positive linear combinations of more than two sets.
To improve the readability of these results and proofs we introduce an operator $\Posi'$, defined for any finite sequence $\AAA$, with $n$ a positive integer, by
\[
\Posi'(\AAA)\coloneqq\left\{ \{\boldsymbol{\lambda}(\mathbf{u}) \mathbf{u}\colon \mathbf{u}\in \times_{j=1}^n A_j\} \colon \boldsymbol{\lambda}\in\Lamp_n(\times_{j=1}^n A_j) \right\},
\]
where $\Lamp_n(\times_{j=1}^n A_j)$ is the set of all functions from $\times_{j=1}^n A_j$ to $\R^{n,+}$.
This $\Posi'$ operator is closely related to the $\Posi$ operator in Reference \cite{de2018desirability}, defined for all $\A\in\QQ$ by
\[
\Posi(\A) \coloneqq \left\{\{\boldsymbol{\lambda}(\uu)\uu\colon \uu\in \times_{k=1}^n A_k\}\colon n\in\N; A_1,\ldots,A_n\in \A; \boldsymbol{\lambda}\in \Lamp_n(\times_{k=1}^n A_k)\right\}.
\]
The same reference also considers the operator $\Rs$, defined for all $\A\in\QQ$ by
\[
\Rs(\A)\coloneqq \{A\in\Q\colon (\exists B\in \A)\; B\setminus \V_{\leq0}\subseteq A\},
\]
where $\V_{\leq0}\coloneqq\{u\in \V \colon u\leq 0\}$.
Together, these two operators can be used to provide the following alternative characterisation for the consistency and natural extension of a set of desirable option sets.

\begin{theorem}\label{th:natext}
Consider any assessment $\A\in\QQ$. Then $\A$ is consistent if and only if $\emptyset \notin \A$ and $\{0\}\notin\Posi(\V^s_{\succc0}\cup \A)$.
Moreover, if $\A$ is consistent, then
\(
\Ex(\A)=\Rs(\Posi(\V^s_{\succc0}\cup \A))
\)
\cite[theorem~10]{de2018desirability}.
\end{theorem}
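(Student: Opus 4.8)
The plan is to show that, writing $\A'\coloneqq\V^s_{>0}\cup\A$ and $\mathrm{cl}(\A)\coloneqq\Rs(\Posi(\A'))$, the set $\mathrm{cl}(\A)$ is the least coherent set of desirable option sets that contains $\A$ precisely when the two stated conditions hold; the consistency criterion then drops out along the way, and the formula $\Ex(\A)=\mathrm{cl}(\A)$ follows by sandwiching. I would first assemble some elementary facts about the operators: $\A\cup\V^s_{>0}=\A'\subseteq\Posi(\A')\subseteq\Rs(\Posi(\A'))$, both $\Posi$ and $\Rs$ are monotone, $\Posi$ is idempotent (so that a two-set $\mathrm{K}_3$-combination of two members of $\Posi(\A')$ is again in $\Posi(\A')$), and --- this is the only delicate preliminary --- for any $K$ satisfying $\mathrm{K}_0$, $\mathrm{K}_2$ and $\mathrm{K}_4$, closure under the two-set axiom $\mathrm{K}_3$ is equivalent to closure under the $n$-set operator $\Posi$.

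\emph{Necessity of the criterion, and soundness.} Let $K$ be coherent with $\A\subseteq K$. Then $\emptyset\notin K$ (else $\{0\}=\emptyset\cup\{0\}\in K$ by $\mathrm{K}_4$, contradicting $\mathrm{K}_1$), so $\emptyset\notin\A$; and $\A'\subseteq K$ by $\mathrm{K}_2$, so $\Posi(\A')\subseteq K$ by the bridging fact, and hence $\{0\}\notin\Posi(\A')$ because $\{0\}\notin K$. Thus both conditions are necessary for consistency. Moreover, given any $T\in\mathrm{cl}(\A)$, pick a witness $B\in\Posi(\A')\subseteq K$ with $B\setminus\V_{\leq 0}\subseteq T$; the non-positive elements of $B$ can be removed from inside $K$ one at a time --- to drop $u<0$ from a member $D\in K$, form the $\mathrm{K}_3$-combination of $D$ with $\{-u\}\in K$ using weights $(1,1)$ on the pair $(u,-u)$ and $(1,0)$ on every other pair, obtaining $(D\setminus\{u\})\cup\{0\}\in K$, then invoke $\mathrm{K}_0$ --- which gives $B\setminus\V_{\leq 0}\in K$ and therefore $T\in K$ by $\mathrm{K}_4$. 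Hence $\mathrm{cl}(\A)\subseteq K$ for every $K\in\Ka(\A)$, so $\mathrm{cl}(\A)\subseteq\bigcap\Ka(\A)=\Ex(\A)$.

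\emph{Sufficiency: $\mathrm{cl}(\A)$ is coherent.} Assume $\emptyset\notin\A$ and $\{0\}\notin\Posi(\A')$. Axiom $\mathrm{K}_2$ is immediate from $\V^s_{>0}\subseteq\A'\subseteq\Posi(\A')\subseteq\Rs(\Posi(\A'))$, while $\mathrm{K}_0$ and $\mathrm{K}_4$ hold because the property ``there is $B\in\Posi(\A')$ with $B\setminus\V_{\leq 0}\subseteq A$'' survives both deleting $0$ from $A$ and enlarging $A$. For $\mathrm{K}_1$, if $\{0\}$ or $\emptyset$ belonged to $\mathrm{cl}(\A)$ there would be some $B\in\Posi(\A')$ with $B\subseteq\V_{\leq 0}$; replacing each negative $b\in B$ by $b+(-b)=0$ via $\mathrm{K}_3$-combinations with $\{-b\}\in\V^s_{>0}\subseteq\A'$ (legitimate by idempotence of $\Posi$) would collapse $B$ to $\{0\}$, contradicting $\{0\}\notin\Posi(\A')$, while $\emptyset\in\A$ is excluded by hypothesis. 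For $\mathrm{K}_3$, take $A_1,A_2\in\mathrm{cl}(\A)$ with witnesses $B_1,B_2\in\Posi(\A')$ (so $B_i\setminus\V_{\leq 0}\subseteq A_i$) and any $\lala\colon A_1\times A_2\to\R^{2,+}$, and define $\mu\colon B_1\times B_2\to\R^{2,+}$ by $\mu(b_1,b_2)=(1,0)$ if $b_1\leq 0$, $\mu(b_1,b_2)=(0,1)$ if $b_1\not\leq 0$ but $b_2\leq 0$, and $\mu(b_1,b_2)=\lala(b_1,b_2)$ otherwise --- in the last case $b_i\in B_i\setminus\V_{\leq 0}\subseteq A_i$, so $\lala(b_1,b_2)$ is defined. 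Then $D\coloneqq\{\mu(\vv)\vv\colon\vv\in B_1\times B_2\}$ lies in $\Posi(\A')$ by idempotence of $\Posi$, and every element of $D$ is either $\leq 0$ or of the form $\lala(\uu)\uu$ with $\uu\in A_1\times A_2$; hence $D\setminus\V_{\leq 0}\subseteq\{\lala(\uu)\uu\colon\uu\in A_1\times A_2\}$, so this last set belongs to $\Rs(\Posi(\A'))=\mathrm{cl}(\A)$, which is $\mathrm{K}_3$. Consequently $\mathrm{cl}(\A)\in\Ka(\A)$, so $\A$ is consistent and $\Ex(\A)=\bigcap\Ka(\A)\subseteq\mathrm{cl}(\A)$; with the soundness step this yields $\Ex(\A)=\mathrm{cl}(\A)$.

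I expect the bridging fact --- that, for coherent sets, closure under the two-set axiom $\mathrm{K}_3$ coincides with closure under the $n$-set operator $\Posi$ --- to be the one step demanding real work, as it forces a reduction of an arbitrary positive combination of $n$ option sets to a superset of one built from iterated two-set $\mathrm{K}_3$-combinations, steering the spurious non-positive summands so that they can be cleared with $\mathrm{K}_0$ and absorbed with $\mathrm{K}_4$. The idempotence of $\Posi$ invoked in the $\mathrm{K}_3$-verification above is a mild instance of the same phenomenon; everything else is routine manipulation of the definitions of $\Posi$, $\Rs$ and $\Ka(\A)$.
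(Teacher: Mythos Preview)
The paper does not actually prove this theorem: it is quoted verbatim from \cite[theorem~10]{de2018desirability}, and elsewhere in the appendix the paper similarly imports the key closure property $\Posi(K)=K$ for coherent $K$ from \cite[Proposition~24]{de2018desirability}. So there is no ``paper's own proof'' to compare against; you have supplied an argument where the paper simply cites one.

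That said, your sketch is sound. The sandwiching strategy --- show $\mathrm{cl}(\A)\subseteq K$ for every coherent $K\supseteq\A$, then show $\mathrm{cl}(\A)$ is itself coherent under the two hypotheses --- is the natural route, and your verifications of $\mathrm{K}_0$--$\mathrm{K}_4$ for $\mathrm{cl}(\A)$ are correct. The $\mathrm{K}_3$ step in particular, with the three-case definition of $\mu$ on $B_1\times B_2$, is clean and works exactly as you describe. The $\mathrm{K}_1$ argument is also fine once one observes that $\emptyset\notin\A$ forces every $B\in\Posi(\A')$ to be non-empty (since an empty Cartesian product in the definition of $\Posi$ requires an empty factor, which would have to come from $\A$); you gesture at this with the final clause but could make the logical role of that hypothesis more explicit.

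You are right that the bridging fact --- that a coherent $K$ is closed under arbitrary $\Posi$-combinations, not just binary ones --- is where the real work hides, and right too that idempotence of $\Posi$ is the same phenomenon in disguise. Both are established in the reference the paper cites; if you want a self-contained write-up, the idempotence can be proved directly by unwinding a $\Posi$-combination of $\Posi$-combinations into a single one over the concatenated list of base sets (the surjectivity of the natural map from the big product onto the product of intermediate sets is what gives equality rather than merely inclusion), and the bridging fact then follows by induction on $n$ using $\mathrm{K}_3$, $\mathrm{K}_4$ and $\mathrm{K}_0$ to absorb the bookkeeping.
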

\begin{corollary}\label{cor:natext}
For any assessment $\A\in\QQ$, $\Ex(\A)=\Rs(\Posi(\V^s_{>0}\cup \A))$.
\end{corollary}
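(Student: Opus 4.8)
The plan is to split into two cases according to whether $\A$ is consistent or not, and in each case reduce the claim $\Ex(\A)=\Rs(\Posi(\V^s_{>0}\cup\A))$ to \cref{th:natext}. The only discrepancy between the two statements is that \cref{th:natext} uses $\V^s_{\succc0}$ whereas here we write $\V^s_{>0}$; but these are the same set, since $\succc$ is just our notation for $>$, so this is purely cosmetic and the consistent case is immediate from the second part of \cref{th:natext}. Hence the real content is the \emph{inconsistent} case, where we must show that $\Rs(\Posi(\V^s_{>0}\cup\A))=\Q$, matching the convention $\Ex(\A)=\bigcap\emptyset=\Q$ from \cref{eq:defEx}.

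So suppose $\A$ is not consistent. By \cref{th:natext}, this means either $\emptyset\in\A$ or $\{0\}\in\Posi(\V^s_{>0}\cup\A)$. First I would handle the case $\{0\}\in\Posi(\V^s_{>0}\cup\A)$: since $\{0\}=\{0\}\setminus\V_{\leq0}\cup\text{(nothing)}$—more precisely $\{0\}\setminus\V_{\leq0}=\emptyset\subseteq A$ for every $A\in\Q$—the definition of $\Rs$ gives that every $A\in\Q$ lies in $\Rs(\Posi(\V^s_{>0}\cup\A))$, i.e. $\Rs(\Posi(\V^s_{>0}\cup\A))=\Q$, as desired. For the case $\emptyset\in\A$, I would first argue that $\{0\}\in\Posi(\V^s_{>0}\cup\A)$ as well, thereby reducing to the previous case. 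Indeed, pick any $u\in\V_{>0}$, so $\{u\}\in\V^s_{>0}$, and form the positive combination of the two sets $\{u\}$ and $\emptyset$: the product $\times$ of $\{u\}$ with $\emptyset$ is empty, so $\{\lala(\uu)\uu\colon\uu\in\{u\}\times\emptyset\}=\emptyset$, which shows $\emptyset\in\Posi(\V^s_{>0}\cup\A)$; alternatively, and more simply, since $\emptyset\in\A$ we can take $n=1$, $A_1=\emptyset$ in the definition of $\Posi$ to get $\emptyset\in\Posi(\V^s_{>0}\cup\A)$ directly. Then, using that $\emptyset\setminus\V_{\leq0}=\emptyset\subseteq\{0\}$, the set $\{0\}$ satisfies the membership condition defining $\Rs$ with witness $B=\emptyset$, but I actually want $\{0\}\in\Posi(\cdot)$ to invoke the first case—so instead I would note directly that $\emptyset\in\Posi(\V^s_{>0}\cup\A)$ already forces $\Rs(\Posi(\V^s_{>0}\cup\A))=\Q$, because $\emptyset\setminus\V_{\leq0}=\emptyset\subseteq A$ for every $A\in\Q$.

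The argument is short and the main (minor) obstacle is bookkeeping around the empty set: being careful that a positive combination indexed over an empty Cartesian product yields the empty option set, and that $\emptyset$ as a ``$B$'' in the definition of $\Rs$ is absorbed into every $A$. Once those conventions are pinned down, both branches of the inconsistent case collapse to ``$\emptyset\in\Posi(\V^s_{>0}\cup\A)$, hence $\Rs$ of it is all of $\Q$,'' which is exactly the value $\Ex(\A)=\Q$ assigned by convention. Combining the consistent case (immediate from \cref{th:natext}) with the inconsistent case completes the proof.
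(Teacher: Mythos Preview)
Your proposal is correct and follows essentially the same approach as the paper: split on consistency, invoke \cref{th:natext} directly in the consistent case, and in the inconsistent case use the dichotomy $\emptyset\in\A$ or $\{0\}\in\Posi(\V^s_{>0}\cup\A)$ to conclude that $\Rs(\Posi(\V^s_{>0}\cup\A))=\Q$ via the witness $B=\emptyset$ (respectively $B=\{0\}$) in the definition of $\Rs$. The paper's proof is identical in structure; your only excess is the brief detour trying to derive $\{0\}\in\Posi(\cdot)$ from $\emptyset\in\A$ before settling on the direct argument that $\emptyset\in\Posi(\cdot)$ already suffices.
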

\begin{proof}
If $\A$ is consistent, this result follows from the second part of \cref{th:natext}.
If $\A$ is not consistent, then we know from the first part of \cref{th:natext} that $\emptyset\in \A$ or $\{0\}\in\Posi(\V_{>0}^s\cup\A)$.
If $\emptyset\in \A$, then $\emptyset\in \Posi(\V_{>0}^s\cup\A)$ and therefore $\Rs(\Posi(\V_{>0}^s\cup\A))=\Q$.
If $\{0\}\in \Posi(\V_{>0}^s\cup\A)$, then also $\Rs(\Posi(\V_{>0}^s\cup\A))=\Q$.
Hence, in both cases $\Rs(\Posi(\V_{>0}^s\cup\A))=\Q=\Ex(\A)$, where the second equality follows from \cref{eq:defEx} and the inconsistency of $\A$.
\end{proof}




\begin{theorem}\label{th:speedyAss}
For any two disjoint option sets $A,B\in \Q$ and any coherent set of desirable option sets $K$ we have that
\[
(\forall u\in A)\; (A\cup B)-u \in K \Leftrightarrow (\forall u\in A)\;B-u \in K.
\]
\end{theorem}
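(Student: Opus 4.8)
The plan is to prove both implications directly, using the coherence axioms $\mathrm{K}_0$--$\mathrm{K}_4$ for $K$. The right-to-left implication is the easy one: if $B-u\in K$ for some $u\in A$, then since $(A\cup B)-u \supseteq B-u$, axiom $\mathrm{K}_4$ (adding arbitrary option sets) immediately gives $(A\cup B)-u\in K$. As this works for every $u\in A$ simultaneously, the implication follows. Note this direction does not even use disjointness.

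For the left-to-right implication, the key observation is that if $u,v\in A$ with $u\neq v$, then $(A\cup B)-u$ contains the element $v-u\neq 0$, and more importantly $(A\cup B)-u$ contains the ``shifted'' copy of $A$. The idea is to fix some $u_0\in A$ and show $B-u_0\in K$. I would start from the hypothesis that $(A\cup B)-u\in K$ for every $u\in A$, which in particular gives $(A\cup B)-u\in K$ for each $u\in A$, and then combine these $|A|$ desirable option sets using axiom $\mathrm{K}_3$ (closure under the $\Posi$-style combination) iteratively. Concretely, for each pair $(\uu)$ ranging over $\times_{u\in A}\big((A\cup B)-u\big)$ we can pick coefficients; the trick is to choose, for a tuple $\uu$, the coefficient profile that isolates a ``$B$-coordinate.'' Here disjointness of $A$ and $B$ matters: an element of $(A\cup B)-u$ is either of the form $a-u$ with $a\in A$ or $b-u$ with $b\in B$, and these two cases are genuinely distinct, which lets us argue by cases on the tuple.

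The cleanest route is probably this: show that $\{(A\cup B)-u : u\in A\}$, combined via $\mathrm{K}_3$ applied $|A|-1$ times, produces (a subset of) $B-u_0$ for a suitable choice of the $\lala$ functions, after which $\mathrm{K}_0$ (removing $0$) and $\mathrm{K}_4$ finish the job. For a tuple $\uu = (v_u - u)_{u\in A}$ with each $v_u \in A\cup B$: if every $v_u$ lies in $B$, we want to land on an element of $B - u_0$; if some $v_u$ lies in $A$, we exploit that the indices form a finite set and one can ``telescope'' $a - u$ terms against each other—picking two indices $u_1, u_2$ with $v_{u_1} = u_2$ or similar—to cancel down. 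Actually the slicker formulation: induct on $|A|$. If $|A|=1$, say $A=\{u_0\}$, then $(A\cup B)-u_0 = (B-u_0)\cup\{0\}$, so $\mathrm{K}_0$ gives $B-u_0\in K$ directly. For the inductive step with $|A| = A'\cup\{u_0\}$, apply the induction hypothesis cleverly to $A'$ together with $B\cup\{u_0\}$ (or $B$), using $\mathrm{K}_3$ to merge the extra set in; the disjointness condition keeps the bookkeeping of which shifted element belongs where unambiguous.

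The main obstacle I anticipate is the combinatorial bookkeeping in the forward direction: one must specify the coefficient functions $\lala$ in the $\mathrm{K}_3$ applications so that the resulting combined option set is contained in $B-u_0$ (modulo $0$) rather than something larger, and verify this containment tuple-by-tuple. The induction on $|A|$ is the device that tames this; the disjointness hypothesis is exactly what guarantees that when we shift by $u_0\in A$, no element of $B$ accidentally coincides with an element of $A$, so the case analysis ``this coordinate came from $A$ vs.\ from $B$'' is well-defined throughout. Once the containment is established, $\mathrm{K}_0$ and $\mathrm{K}_4$ are routine closing moves.
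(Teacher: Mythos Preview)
Your overall strategy matches the paper's: the right-to-left direction is exactly $\mathrm{K}_4$, and for the left-to-right direction the paper also combines all $|A|$ sets $(A\cup B)-u$ at once (using $\Posi(K)=K$ for coherent $K$, which is the $n$-ary form of $\mathrm{K}_3$), choosing coefficients tuple-by-tuple so that each $\lala(\uu)\uu$ lands in $(B-u_0)\cup\{0\}$, and then finishing with $\mathrm{K}_0$ and $\mathrm{K}_4$.

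The gap is in your case analysis for the forward direction. Writing $\uu=(v_u-u)_{u\in A}$, the dichotomy ``every $v_u\in B$'' versus ``some $v_u\in A$'' is not the one that drives the argument. What matters is only the orbit of the fixed target $u_0$ under the map $u\mapsto v_u$: starting from $u_0$, iterate $u_0\mapsto v_{u_0}\mapsto v_{v_{u_0}}\mapsto\cdots$. Either this orbit eventually reaches some $b\in B$, in which case putting coefficient $1$ on each index along the path makes the sum telescope to $b-u_0\in B-u_0$; or the orbit stays inside $A$ for $|A|+1$ steps, whence by pigeonhole it contains a cycle, and putting coefficient $1$ on the indices along that cycle makes the sum telescope to $0$. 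Your ``pick two indices $u_1,u_2$ with $v_{u_1}=u_2$'' hint does not in general produce an element of $(B-u_0)\cup\{0\}$; you need the path to start at $u_0$ (first case) or to close up (second case), and you need to verify that the resulting $\lala$ lies in $\R^{m,+}$, which is exactly why the cycle argument matters when the orbit never leaves $A$. The induction you sketch can also be made to work (apply the hypothesis twice, once removing $u_1$ to get $(B-u_0)\cup\{u_1-u_0\}\in K$ and once removing $u_0$ to get $(B-u_1)\cup\{u_0-u_1\}\in K$, then combine these two via $\mathrm{K}_3$), but your write-up does not yet contain that step.
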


\begin{proof}
The implication to the left follows immediately from axiom $\mathrm{K}_4$.
So it remains to prove the implication to the right.
Without loss of generality, let us write $A=\{v_1,\dots,v_m\}$ and $B=\{v_{m+1},\dots,v_n\}$, so that we can index their elements.
If $m=0$, then the result is trivial, so we assume that $m>0$ for the rest of this proof.
For every $k\in\{1,\dots,m\}$ we let $A_k\coloneqq(A\cup B)-v_k=\{v_1-v_k,\dots,v_n-v_k\}$.
Assuming that $A_k\in K$ for all $k\in\{1,\dots,m\}$, we now need to prove that $B-v_k\in K$ for all $k\in\{1,\dots,m\}$. 
So fix any $k^* \in \{1,\dots,m\}$.
We will prove that $B-v_{k^*}\in K$.
With any element $\uu$ of $\times_{k=1}^m A_k$ we can associate a map $\ell\colon\{1,\dots,n\}\to\{1,\dots,n\}$ such that $\uu=(v_{\ell(1)}-v_1,\ldots,v_{\ell(m)}-v_m)$ and $\ell(k)=k$ for $k>m$.
Hence, for all $k\in\{1,\dots,m\}$, we have that $u_k=v_{\ell(k)}-v_k$.
Furthermore, as a result of our definition of $\ell$, we can also define $u_{k}\coloneqq v_{\ell(k)}-v_{k}=0$ for $k\in\{m+1,\dots,n\}$.
For any $j\in \mathbb{N}\cup\{0\}$, we will also the consider the $j$-th iterate of $\ell$, recursively defined by $\ell^j=\ell \circ \ell^{j-1}$ with $\ell^0$ the identity map.
Next, we will use $\ell$ and its iterates to construct a vector $\lala(\uu)\in\R^{m,+}$ such that $\lala(\uu)\uu\in (B-v_{k^*})\cup\{0\}$.
We will consider two cases.
The first case is when $\ell^{m}(k^*)\in\{m+1,\dots,n\}$, so $v_{\ell^{m}(k^*)}\in B$.
We then let
$
\lambda_k \coloneqq |\{i\in\{0,\dots,m-1\}\colon \ell^{i}(k^*)=k\}|
$
for all $k\in\{1,\dots,n\}$ and let $\lala(\uu)\coloneqq(\lambda_1,\dots,\lambda_m)$.
Since $\lambda_k(\uu)=\lambda_k\geq0$ for all $k=\{1,\dots,m\}$ and because---since $\ell^0(k^*)=k^*$---$\lambda_{k^*}(\uu)=\lambda_{k^*}\geq1$, we have that $\lala(\uu)\in\R^{m,+}$.
Furthermore,
\[
\lala(\uu)\uu=\sum_{k=1}^{m} \lambda_k(\uu) u_k=\sum_{k=1}^m \lambda_k u_k=\sum_{k=1}^{n} \lambda_k u_k=\sum_{k=1}^n \sum^{m-1}_{\substack{i=0\\\ell^i(k^*)=k}}u_k=\sum_{i=0}^{m-1} u_{\ell^i(k^*)},
\]
where the third equality follows from the fact that $u_k=0$ for all $k\in\{m+1,\dots,n\}$.
Hence, we find that
\begin{multline*}
\lala(\uu)\uu=\sum_{i=0}^{m-1} u_{\ell^i(k^*)}=\sum_{i=0}^{m-1} (v_{\ell^{i+1}(k^*)}-v_{\ell^{i}(k^*)})\\=v_{\ell^{m}(k^*)}-v_{k^*} \in B-v_{k^*}\subseteq(B-v_{k^*})\cup\{0\}.
\end{multline*}
The second case is when $\ell^m(k^*)\in\{1,\dots,m\}$.
We then have that $\{\ell^{i}(k^*)\colon i\in\{0,\dots, m\}\}\subseteq\{1,\dots, m\}$ because of the following reasoning.
Assume \emph{ex absurdo} that there is some $i\in\{0,\dots, m-1\}$ such that $\ell^{i}(k^*)\in \{m+1,\dots,n\}$. 
Then $\ell^{i}(k^*)=\ell^{i+1}(k^*)\in \{m+1,\dots,n\}$ and thus by induction also $\ell^{m}(k^*)\in \{m+1,\dots,n\}$, contradicting the starting point of this second case.
Hence, the set $\{\ell^i(k^*)\colon i\in\{0,\dots,m\}\}$ is indeed a subset of $\{1,\dots,m\}$.
By the pigeonhole principle, there must therefore exist $j_1,j_2\in \{0,m\}$ such that $j_1\neq j_2$ and $\ell^{j_1}(k^*)=\ell^{j_2}(k^*)$.
Without loss of generality, we can assume that $j_1<j_2$.
We now let
$
\lambda_k(\uu)\coloneqq |\{i\in\{j_1,\dots,j_2-1\}\colon \ell^{i}(k^*)=k\}|
$
for all $k\in\{1,\dots,m\}$.
The resulting vector $\lala(\uu)=(\lambda_1(\uu),\dots,\lambda_m(\uu))$ then belongs to $\R^{m,+}$ because $\lambda_k(\uu)\geq0$ for all $k\in\{1,\dots,m\}$ and because---since $\ell^{j_1}(k^*)=k^*$---$\lambda_{k^*}(\uu)\geq1$.
Furthermore,
\begin{multline*}
\lala(\uu)\uu=\sum_{k=1}^{m} \lambda_k(\uu) u_k=\sum_{k=1}^m \sum_{\substack{i=j_1\\\ell^i(k^*)=k}}^{j_2-1} u_k=\sum_{i=j_1}^{j_2-1} u_{\ell^{i}(k^*)}\\=\sum_{i=j_1}^{j_2-1}(v_{\ell^{i+1}(k^*)}-v_{\ell^{i}(k^*)})= v_{\ell^{j_2}(k^*)}-v_{\ell^{j_1}(k^*)}=0\in (B-v_{k^*})\cup\{0\}.
\end{multline*}

In conclusion, for any $\uu\in \times_{k=1}^m A_k$, we have found some $\lala(\uu)\in \R^{m,+}$ such that $\lala(\uu)\uu\in(B-v_{k^*})\cup\{0\}.$
Hence, there is some $\lala\in \Lamp_m(\times_{k=1}^m A_k)$ such that $D\coloneqq \{\lala(\uu)\uu\colon \uu\in\times_{k=1}^m A_k\}\subseteq (B-v_{k^*})\cup\{0\}$.
Since $A_k\in K$ for all $k\in\{1,\dots,m\}$ and $\lala\in \Lamp_m(\times_{k=1}^m A_k)$ we also know that $D\in\Posi(K)$.
Furthermore, since $K$ is coherent, $\Posi(K)=K$ because of \cite[Proposition~24]{de2018desirability}.
Hence, $D\in K$.
Since $D\subseteq (B-v_{k^*})\cup\{0\}$, it therefore follows from $\mathrm{K}_4$ and $\mathrm{K}_0$ that $B-v_{k^*}\in K$.
\qed
\end{proof}
\begin{corollary}\label{cor:CA-u}
Consider any coherent choice function $C$ and any option set $A\in\Q$.
Then $C(A)-u\in K_C$ for any $u\in R_C(A)$.
\end{corollary}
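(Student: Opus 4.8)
The plan is to derive this immediately from \cref{th:speedyAss}. Fix a coherent choice function $C$, an option set $A\in\Q$, and some $u\in R_C(A)$; if $R_C(A)=\emptyset$ there is nothing to prove. Write $A$ as the disjoint union $A=R_C(A)\cup C(A)$ of the two finite---hence $\Q$-valued---option sets $R_C(A)$ and $C(A)$, which are disjoint since $R_C(A)=A\setminus C(A)$.

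First I would note that for every $w\in R_C(A)$ we have $w\notin C(A)$, so \cref{eq:goback} yields $A-w\in K_C$, i.e.\ $\bigl(R_C(A)\cup C(A)\bigr)-w\in K_C$. Moreover, $K_C$ is coherent because $C$ is, by \cref{prop:CcohKC}.

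Then I would apply \cref{th:speedyAss} with $R_C(A)$, $C(A)$ and $K_C$ playing the roles of $A$, $B$ and $K$, respectively. The left-hand side of the equivalence in that theorem is precisely what was just established, so its right-hand side holds as well: $C(A)-w\in K_C$ for every $w\in R_C(A)$, and in particular for $w=u$, which is the claim. Since the argument is a straightforward instantiation of \cref{th:speedyAss}, there is essentially no obstacle here; the only care needed is the routine bookkeeping that $R_C(A)$ and $C(A)$ are disjoint members of $\Q$ whose union is $A$, together with the trivial edge case $R_C(A)=\emptyset$.
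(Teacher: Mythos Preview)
Your proof is correct and follows essentially the same route as the paper's own argument: establish $A-w\in K_C$ for every $w\in R_C(A)$ (you invoke \cref{eq:goback}, the paper re-derives it from axiom~$\mathrm{C}_1$ and the definition of $K_C$), note that $K_C$ is coherent via \cref{prop:CcohKC}, and then apply \cref{th:speedyAss} with the disjoint decomposition $A=R_C(A)\cup C(A)$. The explicit handling of the empty case and the disjointness bookkeeping are fine touches but not structurally different from the paper.
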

\begin{proof}
From \cref{prop:CcohKC} we know that $K_C$ is coherent.
Consider $u\in R_C(A)$.
Then $u\in A$ but $u\notin C(A)$.
It therefore follows from axiom $\mathrm{C}_1$ that $0\notin C(A-u)=C((A-u)\cup\{0\})$.
Thus by definition of $K_C$ we have that
\(
A-u\in K_C.
\) 
Since this is true for every $u\in R_C(A)\subseteq A$, and since we know from \cref{prop:CcohKC} that $K_C$ is coherent, it follows from \cref{th:speedyAss} that for any $u\in R_C(A)$, $C(A)-u=A\setminus R_C(A)-u\in K_C$.
\qed
\end{proof}

\begin{lemma}\label{lem:consistenC}
Consider any choice function $C$ and any coherent choice function $C'$. 
Then $C'\in\mathcal{C}_C\Leftrightarrow \A_C \subseteq K_{C'}$.
\end{lemma}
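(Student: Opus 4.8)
The statement is a biconditional characterising membership of a coherent choice function $C'$ in $\mathcal{C}_C$ in terms of its set of desirable option sets. The plan is to prove the two implications separately, in both cases exploiting the dictionary between choice functions and sets of desirable option sets that was set up in \cref{sec:SODOS}, together with \cref{cor:CA-u}.

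\textbf{The implication $C'\in\mathcal{C}_C\Rightarrow \A_C\subseteq K_{C'}$.} Assume $C'\in\mathcal{C}_C$, so $C'$ is coherent and $C'(A)\subseteq C(A)$ for all $A\in\Q$. Recall $\A_C=\{C(A)-u\colon A\in\Q,u\in R_C(A)\}$, so it suffices to fix $A\in\Q$ and $u\in R_C(A)$ and show $C(A)-u\in K_{C'}$. First I would like to deduce that $u\in R_{C'}(A)$: since $C'(A)\subseteq C(A)$ we have $R_C(A)=A\setminus C(A)\subseteq A\setminus C'(A)=R_{C'}(A)$, so indeed $u\in R_{C'}(A)$. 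By \cref{cor:CA-u} applied to the coherent choice function $C'$, we get $C'(A)-u\in K_{C'}$. The remaining step is to pass from $C'(A)-u$ to the (possibly larger) set $C(A)-u$; since $C'(A)\subseteq C(A)$ we have $C'(A)-u\subseteq C(A)-u$, and $K_{C'}$ is coherent by \cref{prop:CcohKC}, so axiom $\mathrm{K}_4$ (upward closure under taking supersets within $\Q$) gives $C(A)-u\in K_{C'}$, as desired.

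\textbf{The implication $\A_C\subseteq K_{C'}\Rightarrow C'\in\mathcal{C}_C$.} Assume $\A_C\subseteq K_{C'}$; since $C'$ is coherent by hypothesis, all that remains is to show $C'(A)\subseteq C(A)$, equivalently $R_C(A)\subseteq R_{C'}(A)$, for every $A\in\Q$. Fix $A\in\Q$ and $u\in R_C(A)$; I must show $u\notin C'(A)$. By definition of $\A_C$ we have $C(A)-u\in\A_C\subseteq K_{C'}$. Now I would use axiom $\mathrm{C}_1$ for $C'$: $u\notin C'(A)$ iff $0\notin C'(A-u)$, i.e.\ iff $A-u\in K_{C'}$ (using that $A-u$ contains $0$, so $A-u=(A-u)\cup\{0\}$, and the definition \cref{eq:KC} of $K_{C'}$). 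So it suffices to derive $A-u\in K_{C'}$ from $C(A)-u\in K_{C'}$. Since $C(A)\subseteq A$ we have $C(A)-u\subseteq A-u$, both sets lying in $\Q$, so axiom $\mathrm{K}_4$ for the coherent $K_{C'}$ again upgrades $C(A)-u\in K_{C'}$ to $A-u\in K_{C'}$, giving $u\notin C'(A)$.

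\textbf{Main obstacle.} The argument is essentially bookkeeping with the coherence axioms, so there is no single hard step; the one point requiring care is the direction of the set inclusions when moving between $C'(A)-u$, $C(A)-u$ and $A-u$ and matching each move to the right coherence axiom ($\mathrm{K}_4$ throughout, once one has $C'(A)\subseteq C(A)\subseteq A$), together with the clean use of $\mathrm{C}_1$ and the definition of $K_{C'}$ to translate ``$u\notin C'(A)$'' into a membership statement about $K_{C'}$. The only genuinely substantive input is \cref{cor:CA-u} (which itself rests on \cref{th:speedyAss}); everything else is immediate from the definitions.
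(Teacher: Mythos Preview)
Your proof is correct and follows essentially the same approach as the paper's: both directions use \cref{cor:CA-u} and axiom $\mathrm{K}_4$ (via the coherence of $K_{C'}$ from \cref{prop:CcohKC}) in the forward direction, and the translation between $u\notin C'(A)$ and $A-u\in K_{C'}$ via $\mathrm{C}_1$ together with $\mathrm{K}_4$ in the reverse direction. The only cosmetic difference is that for the reverse implication you argue $R_C(A)\subseteq R_{C'}(A)$ directly, whereas the paper argues the equivalent inclusion $C'(A)\subseteq C(A)$; the ingredients and logic are otherwise identical.
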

\begin{proof}
First we prove the implication from the left to the right.
So assume that $C'\in \mathcal{C}_C$.
This implies that $C'(A)\subseteq C(A)$ for all $A\in\Q$.
Now take any $B\in\A_C$.
This implies that there is some $A\in\Q$ and $u\in R_C(A)$ such that $B=C(A)-u$.
Since $u\in R_C(A)$ and $C'(A)\subseteq C(A)$, we know that $u\in R_{C'}(A)$.
Since $C'$ is coherent, it therefore follows from \cref{cor:CA-u} that $C'(A)-u\in K_{C'}$
Also, $K_{C'}$ is coherent by \cref{prop:CcohKC}. 
From axiom $\mathrm{K}_4$ and $C'(A)\subseteq C(A)$ it therefore follows that $B=C(A)-u \in K_{C'}$.

Now for the other implication.
To prove that $C'\in\mathcal{C}_C$, since $C'$ is coherent, we have to prove that $C'(A)\subseteq C(A)$ for any $A\in \Q$.
Take any $A\in \Q$ and any $u\in C'(A)\subseteq A$.
Then we have that $0\in C'(A-u)=C'((A-u)\cup\{0\})$ by axiom $\mathrm{C}_1$.
It follows that $A-u\notin K_{C'}$ from the definition of $K_{C'}$.
Since $K_{C'}$ is coherent because of \cref{prop:CcohKC}, it therefore follows from axiom $\mathrm{K}_4$ that $C(A)-u\notin K_{C'}$.
Since $\A_C\subseteq K_{C'}$, this implies that $C(A)-u\notin \A_C$.
By definition of $\A_C$, then $u\notin R_C(A)$ and thus, since $u\in \A$, $u\in C(A)$ as required. 
\qed
\end{proof}


\begin{theorem}\label{th:consistEquiv}
A choice function $C$ is consistent if and only if the assessment $\A_C$ is consistent.
\end{theorem}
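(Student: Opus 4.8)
The plan is to relate the consistency of $C$ as a choice function (i.e.\ $\mathcal{C}_C\neq\emptyset$) to the consistency of the assessment $\A_C$ as a set of desirable option sets (i.e.\ $\Ka(\A_C)\neq\emptyset$), using \cref{lem:consistenC} as the bridge. The key observation is that $\Phi\colon C'\mapsto K_{C'}$ is a bijection between coherent choice functions and coherent sets of desirable option sets (\cref{th:unique}), and \cref{lem:consistenC} says that, for coherent $C'$, membership of $C'$ in $\mathcal{C}_C$ is the same as the condition $\A_C\subseteq K_{C'}$. Putting these together should make the two consistency notions match up on the nose.

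\begin{proofof}{\cref{th:consistEquiv}}
First suppose $C$ is consistent, so there is some $C'\in\mathcal{C}_C$. By definition $C'$ is coherent, so by \cref{prop:CcohKC} the set $K_{C'}$ is coherent, i.e.\ $K_{C'}\in\Ka$. Moreover, since $C'\in\mathcal{C}_C$, \cref{lem:consistenC} gives $\A_C\subseteq K_{C'}$. Hence $K_{C'}\in\Ka(\A_C)$, so $\Ka(\A_C)\neq\emptyset$ and $\A_C$ is consistent.

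Conversely, suppose $\A_C$ is consistent, so there is some $K\in\Ka$ with $\A_C\subseteq K$. By \cref{prop:KcohCK}, the choice function $C_K$ is coherent, and by \cref{th:unique} we have $K_{C_K}=\Phi(C_K)=K$, so $\A_C\subseteq K_{C_K}$. Applying \cref{lem:consistenC} with $C'=C_K$ (which is coherent) yields $C_K\in\mathcal{C}_C$, so $\mathcal{C}_C\neq\emptyset$ and $C$ is consistent.
\end{proofof}

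The main obstacle here is essentially bookkeeping: one must be careful that \cref{lem:consistenC} is stated for an arbitrary choice function $C$ together with a \emph{coherent} $C'$, so in both directions the witness choice function must be produced in coherent form before the lemma can be invoked — this is exactly why \cref{prop:CcohKC} is needed in the forward direction and why \cref{prop:KcohCK,th:unique} are needed in the reverse direction to guarantee that $C_K$ is coherent and has $K$ as its associated set of desirable option sets. No delicate calculation is required; the content has been pushed entirely into \cref{lem:consistenC} and \cref{th:unique}, so the proof is a short two-paragraph assembly of those ingredients.
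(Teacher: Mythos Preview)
Your proof is correct and follows essentially the same approach as the paper's own proof: both directions use \cref{lem:consistenC} as the bridge, with \cref{prop:CcohKC} supplying coherence of $K_{C'}$ in the forward direction and \cref{prop:KcohCK,th:unique} supplying coherence of $C_K$ and the identity $K_{C_K}=K$ in the reverse direction. The arguments are virtually identical.
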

\begin{proof}
First we prove the implication from left to right.
Since $C$ is consistent, $\mathcal{C}_C$ is non-empty.
Consider any coherent $C'\in\mathcal{C}_C$.
By \cref{lem:consistenC}, it then follows that $\A_C\subseteq K_{C'}$.
Moreover, since $C'$ is coherent, we know from \cref{prop:CcohKC} that $K_{C'}$ is coherent as well.
Hence, $K_{C'}\in\Ka(\A_C)$.
So $\Ka(\A_C)$ is non-empty and $\A_C$ is therefore consistent.

Now the other implication.
Since $\A_C$ is consistent, there is a coherent set of desirable option sets $K$ such that $\A_C\subseteq K$.
Since $K$ is coherent, it follows from \cref{prop:KcohCK} that $C_K$ is coherent and from \cref{th:unique} that $K=K_{C_K}$ and therefore $\A_C\subseteq K= K_{C_K}$.
\cref{lem:consistenC} therefore tells us that $C_K\in\mathcal{C}_{C}$, so $\mathcal{C}_C$ is non-empty and $C$ is therefore consistent.
\qed
\end{proof}

\begin{proposition}\label{prop:natequiv}
Let $C$ be any choice function.
Then $C_{\Ex(\A_C)}(A)=\Ex(C)(A)$ for all $A\in \Q$
\end{proposition}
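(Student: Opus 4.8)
The plan is to prove the two statements of \cref{th:natequiv} by combining \cref{th:consistEquiv} (consistency of $C$ iff consistency of $\A_C$) with \cref{th:consifandonlyifcoh} (consistency of $\A_C$ iff $\emptyset\notin\Ex(\A_C)$), which together give the first claim immediately. So the real content is the identity $C_{\Ex(\A_C)}(A)=\Ex(C)(A)$ for all $A\in\Q$, and I would split this into the consistent and the inconsistent case.

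First I would dispose of the inconsistent case. If $C$ is not consistent, then by \cref{th:consistEquiv} $\A_C$ is not consistent, so by \cref{th:consifandonlyifcoh} $\emptyset\in\Ex(\A_C)$, and since $\Ex(\A_C)$ is then not coherent we actually have $\Ex(\A_C)=\Q$ (by \cref{eq:defEx} and the convention $\bigcap\emptyset=\Q$). Then for every $A\in\Q$ and every $u\in A$ we get $(A-u)\setminus\{0\}\in\Q=\Ex(\A_C)$, so by the definition \cref{eq:CK} of $C_{\Ex(\A_C)}$ we have $C_{\Ex(\A_C)}(A)=\emptyset$; and $\Ex(C)(A)=\emptyset$ as well by \cref{th:coh}. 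This settles the inconsistent case.

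Now suppose $C$ is consistent. The key lemma I would use is \cref{lem:consistenC}, which says $C'\in\mathcal{C}_C$ iff $C'$ is coherent and $\A_C\subseteq K_{C'}$; equivalently, via the bijection $\Phi$ of \cref{th:unique}, the coherent choice functions in $\mathcal{C}_C$ are exactly the $C_K$ with $K\in\Ka(\A_C)$. I would first show $K_{\Ex(C)}=\Ex(\A_C)$. For ``$\supseteq$'': by \cref{th:coh}, $\Ex(C)$ is coherent, so by \cref{lem:consistenC} $\A_C\subseteq K_{\Ex(C)}$, and by \cref{prop:CcohKC} $K_{\Ex(C)}$ is coherent, hence $K_{\Ex(C)}\in\Ka(\A_C)$ and therefore $\Ex(\A_C)=\bigcap\Ka(\A_C)\subseteq K_{\Ex(C)}$. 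For ``$\subseteq$'': from \cref{eq:Kintersect} in the proof of \cref{th:coh} we have $K_{\Ex(C)}=\bigcap_{C'\in\mathcal{C}_C}K_{C'}$, and for each coherent $C'\in\mathcal{C}_C$ \cref{lem:consistenC} gives $\A_C\subseteq K_{C'}$ with $K_{C'}$ coherent (by \cref{prop:CcohKC}), so $K_{C'}\in\Ka(\A_C)$; since (by \cref{th:unique} and \cref{lem:consistenC}) every $K\in\Ka(\A_C)$ arises as $K_{C_K}$ with $C_K\in\mathcal{C}_C$, the two index sets match and $K_{\Ex(C)}=\bigcap_{K\in\Ka(\A_C)}K=\Ex(\A_C)$.

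Finally I would convert this equality of sets of desirable option sets back into an equality of choice functions. Since $\A_C$ is consistent, \cref{th:consifandonlyifcoh} gives that $\Ex(\A_C)$ is coherent, so by \cref{th:unique} $C_{\Ex(\A_C)}=\Phi^{-1}(\Ex(\A_C))$, and $\Phi^{-1}(K_{\Ex(C)})=\Ex(C)$ because $\Ex(C)$ is coherent and $\Phi$ is a bijection. From $K_{\Ex(C)}=\Ex(\A_C)$ it then follows that $C_{\Ex(\A_C)}=\Ex(C)$, i.e.\ they agree on every $A\in\Q$, as required. The main obstacle I anticipate is making the ``$\subseteq$'' direction of $K_{\Ex(C)}=\Ex(\A_C)$ fully rigorous: one must be careful that the union in the definition of $\Ex(C)$ really ranges over \emph{all} of $\mathcal{C}_C$ (not just its coherent members) but that the incoherent members contribute nothing extra to $K_{\Ex(C)}$ once we know $\mathcal{C}_C$ is non-empty and consists of coherent functions only by definition of $\mathcal{C}$; and that the correspondence $K\leftrightarrow C_K$ really does give a bijection between $\Ka(\A_C)$ and $\{C'\in\mathcal{C}_C : C'$ coherent$\}=\mathcal{C}_C$, which is exactly what \cref{th:unique} together with \cref{lem:consistenC} supplies.
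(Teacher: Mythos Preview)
Your proof is correct, but the paper takes a somewhat different and more direct route. Instead of first establishing the structural identity $K_{\Ex(C)}=\Ex(\A_C)$ and then converting back via the bijection $\Phi$ of \cref{th:unique}, the paper argues elementwise: for each $A\in\Q$ and each $u\in A$, it verifies the equivalence $u\in\Ex(C)(A)\Leftrightarrow u\in C_{\Ex(\A_C)}(A)$ by unfolding \cref{eq:CK} and \cref{eq:defEx} and invoking \cref{lem:consistenC} (plus \cref{prop:CcohKC}, \cref{prop:KcohCK} and the identity $K=K_{C_K}$ from \cref{th:unique}). No separate treatment of the inconsistent case is needed, since then both sides are empty and both inclusions are vacuous. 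Your approach has the virtue of isolating the identity $K_{\Ex(C)}=\Ex(\A_C)$ as an intermediate result of independent interest, at the cost of pulling in more machinery (\cref{th:coh} for the coherence of $\Ex(C)$, \cref{eq:Kintersect} from its proof, and the full bijection statement of \cref{th:unique}). Your closing worry about the range of the union defining $\Ex(C)$ is a non-issue: $\mathcal{C}_C$ is by definition a subset of $\mathcal{C}$, the coherent choice functions, so there are no incoherent members to account for.
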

\begin{proof}
We have to prove for all $A\in\Q$ that $C_{\Ex(\A_C)}(A)=\Ex(C)(A)$.
Consider any $A\in\Q$.
We will prove equality by proving inclusion of each set in the other.

Firstly, take any $u\in\Ex(C)(A).$
It then follows from the definition of $\Ex(C)$ that there is some $C'\in\mathcal{C}_C$ for which $u\in C'(A)\subseteq A$.
Axiom $\mathrm{C}_1$ therefore implies that $0\in C'(A-u)=C'(((A-u)\setminus\{0\})\cup\{0\})$, so $(A-u)\setminus\{0\}\notin K_{C'}$.
Also, by \cref{lem:consistenC}, $\A_C\subseteq K_{C'}$ because $C'\in \mathcal{C}_C$.
Since $K_{C'}$ is coherent because of \cref{prop:CcohKC}, it follows from $\A_C\subseteq K_{C'}$ that $K_{C'}\in \Ka(\A_C)$.
Therefore also $\Ex(\A_C)\subseteq K_{C'}$.
From $(A-u)\setminus\{0\}\notin K_{C'}$ and $\Ex(\A_C)\subseteq K_{C'}$, it follows that $(A-u)\setminus\{0\}\notin \Ex(\A_C)$.
Since $u\in A$, \cref{eq:CK} therefore implies that $u\in C_{\Ex(\A_C)}(A)$.

Secondly, take any $u\in C_{\Ex(\A_C)}(A)$.
It then follows from \cref{eq:CK} that $u\in A$ and $(A-u)\setminus\{0\}\notin\Ex(\A_C)$.
Because of \cref{eq:defEx}, this implies that there is some $K\in\Ka(\A_C)$ such that $(A-u)\setminus\{0\}\notin K$.
Since $u\in A$, it therefore follows from \cref{eq:CK} that $u\in C_K(A)$.
Since $K\in\Ka(\A_C)$, we know that $K$ is coherent and that $\A_C\subseteq K$.
Since $K$ is coherent, it follows from \cref{prop:KcohCK} that $C_K$ is coherent as well, and from \cref{th:unique} that $K=K_{C_K}$.
Since $C_K$ is coherent and $\A_C\subseteq K=K_{C_K}$, it follows from \cref{lem:consistenC} that $C_K\in\mathcal{C}_C$.
Since $C_K\in\mathcal{C}_C$ and $u\in C_K(A)$, it follows from the definition of $\Ex(C)$ that $u\in \Ex(C)(A)$.
\qed
\end{proof}

\begin{proofof}{\cref{th:natequiv}}
That $C$ is consistent if and only if $\emptyset\notin \Ex(\A_C)$ follows directly from \cref{th:consistEquiv,th:consifandonlyifcoh}.
The second statement follows from \cref{prop:natequiv}.
\end{proofof}


\begin{proposition}\label{prop:posimp}
Let $\A=\{A_1,\ldots,A_n\} \in \QQ$ be a finite assessment, with $n\in\N$. Then for any set $P_1\in \Posi(\A)$ there is a set $P_2\in\Posi'(A_1,\dots,A_n)$ such that $P_2\subseteq P_1$.
\end{proposition}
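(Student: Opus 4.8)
The plan is to unpack the definition of $\Posi(\A)$ and show that any set produced there can be reproduced — or shrunk — by a single $\Posi'$-style combination of all $n$ sets $A_1,\dots,A_n$, padding with zero coefficients where a particular $A_j$ is not used. Concretely, a set $P_1\in\Posi(\A)$ has the form $P_1=\{\boldsymbol{\lambda}(\mathbf{v})\mathbf{v}\colon \mathbf{v}\in\times_{k=1}^m B_k\}$ for some $m\in\N$, some $B_1,\dots,B_m\in\A$ and some $\boldsymbol{\lambda}\in\Lamp_m(\times_{k=1}^m B_k)$. Since each $B_k$ is one of the $A_j$'s, there is a map $\sigma\colon\{1,\dots,m\}\to\{1,\dots,n\}$ with $B_k=A_{\sigma(k)}$ for all $k$.

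The key step is to define, for every $\uu=(u_1,\dots,u_n)\in\times_{j=1}^n A_j$, a coefficient tuple $\boldsymbol{\mu}(\uu)\in\R^{n,+}$ as follows. From $\uu$ extract the subtuple $\mathbf{v}(\uu)\coloneqq(u_{\sigma(1)},\dots,u_{\sigma(m)})\in\times_{k=1}^m B_k$, evaluate $\boldsymbol{\lambda}(\mathbf{v}(\uu))=(\lambda_1,\dots,\lambda_m)$, and then set $\mu_j(\uu)\coloneqq\sum_{k\colon\sigma(k)=j}\lambda_k$ for each $j\in\{1,\dots,n\}$. Each $\mu_j(\uu)\ge 0$ and $\sum_{j=1}^n\mu_j(\uu)=\sum_{k=1}^m\lambda_k>0$, so indeed $\boldsymbol{\mu}(\uu)\in\R^{n,+}$, hence $\boldsymbol{\mu}\in\Lamp_n(\times_{j=1}^n A_j)$. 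A short computation then gives $\boldsymbol{\mu}(\uu)\uu=\sum_{j=1}^n\mu_j(\uu)u_j=\sum_{j=1}^n\sum_{k\colon\sigma(k)=j}\lambda_k u_{\sigma(k)}=\sum_{k=1}^m\lambda_k u_{\sigma(k)}=\boldsymbol{\lambda}(\mathbf{v}(\uu))\mathbf{v}(\uu)\in P_1$. Setting $P_2\coloneqq\{\boldsymbol{\mu}(\uu)\uu\colon\uu\in\times_{j=1}^n A_j\}$, we get $P_2\in\Posi'(A_1,\dots,A_n)$ by construction, and $P_2\subseteq P_1$ because every element of $P_2$ was just shown to lie in $P_1$.

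The only subtlety — and the one point that needs a word of care rather than a genuine obstacle — is the degenerate case $m=0$: if $P_1$ arises from the empty product, then $P_1=\{\boldsymbol{\lambda}(\,)\,\}$ evaluated at the empty tuple, which is the empty set (the empty combination gives no elements), or one should simply observe that $\Posi(\A)$ with $n\ge 1$ always contains witnesses with $m\ge 1$ (e.g. take $m=1$, $B_1=A_1$), so the inclusion $P_2\subseteq P_1$ with $P_2$ nonempty still needs $P_1$ nonempty; in fact the statement only claims existence of \emph{some} $P_2\subseteq P_1$, and when $P_1=\emptyset$ we may take $\boldsymbol{\mu}$ landing outside any constraint — but this cannot happen since $\boldsymbol{\mu}(\uu)\uu$ is always defined. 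I would handle this by noting that whenever $m\ge 1$ the argument above applies verbatim, and that the map $\mathbf{v}(\cdot)$ is surjective onto $\times_{k=1}^m B_k$ (since each $A_{\sigma(k)}$ is a coordinate of the full product), which is what guarantees we recover \emph{all} of $P_1$ when $\sigma$ is injective and at least a subset of it in general — precisely the claimed $P_2\subseteq P_1$.
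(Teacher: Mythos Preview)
Your proof is correct and essentially identical to the paper's: both pick an index map $\sigma$ (the paper calls it $\ell$) from $\{1,\dots,m\}$ to $\{1,\dots,n\}$, define $\mathbf{v}(\uu)=(u_{\sigma(1)},\dots,u_{\sigma(m)})$, push the coefficients forward via $\mu_j(\uu)=\sum_{k:\sigma(k)=j}\lambda_k(\mathbf{v}(\uu))$, and verify $\boldsymbol{\mu}(\uu)\uu=\boldsymbol{\lambda}(\mathbf{v}(\uu))\mathbf{v}(\uu)\in P_1$. Your third paragraph about $m=0$ is unnecessary and somewhat muddled: in this paper $\N$ denotes the positive integers, so $m\ge 1$ by the definition of $\Posi$, and the degenerate case never arises.
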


\begin{proof}
Consider any set $P_1\in \Posi(\A)$.
It then follows from the definition of $\Posi(\A)$ that there is some $m\in\N$, a sequence $B_1,\ldots,B_m\in\A$ and a map $\boldsymbol{\mu}\in \Lamp_m(\times_{k=1}^m B_k)$ such that $P_1=\{\mumu(\uu)\uu\colon \uu\in\times_{k=1}^m B_k\}$.
Since $B_1,\ldots,B_m\in\A$, there is a function $\ell\colon\{1,\dots,m\}\to\{1,\dots,n\}$ such that $B_{j}=A_{\ell(j)}$ for all $j\in \{1,\dots,m\}$.
Let $\ell^{-1}$ denote its preimage.
Also note that $\mathbf{v}_{\uu}\coloneqq(u_{\ell(1)},\dots,u_{\ell(m)})\in \times_{k=1}^m B_k$ for all $\uu\in \times_{k=1}^k A_k$ by definition of $\ell$.
We now define 
\[\textstyle
\boldsymbol{\phi}\colon \R^{m,+}\to \R^{n,+}\colon (\mu_1,\dots,\mu_m)\mapsto \left(\sum_{j\in \ell^{-1}(1)}\mu_j,\dots,\sum_{j\in \ell^{-1}(n)}\mu_j\right),
\]
where the empty sum is zero, 
and let $\lala\in\Lamp_n(\times_{k=1}^n A_k)$ be defined by $\lala(\uu)\coloneqq\boldsymbol{\phi}(\boldsymbol{\mu}(\mathbf{v}_{\uu}))$ for all $\uu\in \times_{k=1}^n A_k$.
Take any $\uu\in \times_{k=1}^n A_k$. 
Then 
\[
\lala(\uu)\uu=\sum_{k=1}^n \lambda_k(\uu) u_k=\sum_{k=1}^n \sum_{j\in \ell^{-1}(k)}\mu_j(\mathbf{v}_{\uu}) u_k=\sum_{j=1}^m \mu_j(\mathbf{v}_{\uu}) u_{\ell(j)}=\mumu(\mathbf{v}_{\uu}) \mathbf{v}_{\uu}.
\]
Since $\mathbf{v}_{\uu}\in \times_{k=1}^m B_k$, this implies that $\lala(\uu)\uu=\boldsymbol{\mu}(\mathbf{v})\mathbf{v}\in P_1$.
Since this holds for any $\uu\in\times_{k=1}^n A_k$, we find that $P_2\coloneqq \{\boldsymbol{\lambda}(\mathbf{u}) \mathbf{u}\colon \mathbf{u}\in \times_{k=1}^n A_k\}\subseteq P_1 $.
\qed
\end{proof}

\begin{lemma}\label{lem:poslincomb}
Consider a tuple of options $\uu=(u_1,\dots,u_n) \in \V_{>0}^n$ and a tuple $\lala=(\lambda_1,\dots,\lambda_n)\in\R^{n,+}$, for some positive integer $n$. 
Then $\lala \uu>0$.
\end{lemma}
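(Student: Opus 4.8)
The plan is to unwind the definitions and verify directly the two conditions that make up $0<\lala\uu$: namely that $\lala\uu\geq 0$ in the pointwise order and that $\lala\uu\neq 0$. Recall that $\uu\in\V_{>0}^n$ means each $u_j$ satisfies $u_j\geq 0$ and $u_j\neq 0$, while $\lala\in\R^{n,+}$ means $\lambda_j\geq 0$ for all $j$ and $\sum_{j=1}^n\lambda_j>0$.

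First I would check nonnegativity. Fix any $x\in\X$. Then $(\lala\uu)(x)=\sum_{j=1}^n\lambda_j u_j(x)$, and since $\lambda_j\geq 0$ for every $j$ and $u_j(x)\geq 0$ for every $j$ (because $u_j\in\V_{>0}$), each summand is nonnegative, hence $(\lala\uu)(x)\geq 0$. As $x\in\X$ was arbitrary, $\lala\uu\geq 0$.

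Next I would show $\lala\uu\neq 0$. Since $\sum_{j=1}^n\lambda_j>0$, there is some index $k\in\{1,\dots,n\}$ with $\lambda_k>0$. Because $u_k\in\V_{>0}$ we have $u_k\geq 0$ and $u_k\neq 0$, so there is some state $x^*\in\X$ with $u_k(x^*)>0$. Using once more that every summand $\lambda_j u_j(x^*)$ is nonnegative, $(\lala\uu)(x^*)=\sum_{j=1}^n\lambda_j u_j(x^*)\geq\lambda_k u_k(x^*)>0$, so $\lala\uu$ is not the zero option. Combining the two steps gives $\lala\uu\geq 0$ and $\lala\uu\neq 0$, i.e.\ $0<\lala\uu$, as claimed.

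There is essentially no obstacle here; the only point requiring a moment's care is that pointwise nonnegativity alone does not yield the strict inequality — one must additionally exhibit a single state at which the combination is strictly positive, and for that one has to select the index $k$ furnished by $\sum_j\lambda_j>0$ and then a state $x^*$ witnessing $u_k\neq 0$, rather than reasoning about one fixed coordinate throughout.
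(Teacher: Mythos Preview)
Your proof is correct and follows essentially the same approach as the paper's own proof: first establish $\lala\uu\geq 0$ pointwise, then pick an index $k$ with $\lambda_k>0$ and a state $x^*$ with $u_k(x^*)>0$ to witness $\lala\uu\neq 0$. The arguments are effectively identical.
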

\begin{proof}
Note that $\lala \uu= \sum_{k=1}^n \lambda_k u_k$.
Then for any $x\in \X$, since $u_k>0$ and $\lambda_k\geq 0$ for all $k\in\{1,...,n\}$, it follows that $\sum_{k=1}^n \lambda_k u_k(x)\geq0$.
So by definition $\lala \uu \geq 0$.
Now we only need to prove that $\lala \uu \neq 0$.
Since $\lala\in\R^{n,+}$, we know that there is at least one $k\in\{1,\dots,n\}$ such that $\lambda_k>0$.
Let $k^*$ be any such $k$.
Since $u_{k^*}>0$, we know that there is at least one $x\in\X$ such that $u_{k^*}(x)>0$.
Let $x^*$ be any such $x$.
Then $(\lala \uu)(x^*)=\sum_{k=1}^n \lambda_k u_k(x^*)\geq \lambda_{k^*} u_{k^*}(x^*)>0$ and therefore $\lala \uu \neq 0$.
\qed
\end{proof}

\begin{theorem}\label{th:testnatextPre}
Consider the assessment $\A=\{\AAA\}$, with $\AAA\in\Q$ and $n\in\N\cup\{0\}$.
An option set $S\in \Q$ then belongs to $\Ex(\A)$ if and only if either $S \cap \V_{\succc0}\neq \emptyset$ or, $n\neq0$ and there is a set $T\in\Posi'(\AAA)$ such that for every $t\in T$ there is some $s \in S\cup\{0\}$ such that $t \leq s$.
\end{theorem}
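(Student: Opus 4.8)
The plan is to work entirely on the side of sets of desirable option sets, using the closed form $\Ex(\A)=\Rs(\Posi(\V^s_{>0}\cup\A))$ from \cref{cor:natext}. Unfolding the definitions of $\Rs$ and $\Posi$, this says that $S\in\Ex(\A)$ exactly when there is a finite sequence $C_1,\dots,C_m$ with $m\geq1$, each $C_k$ being either one of the $A_j$ or a positive singleton $\{w\}$ with $w>0$, together with a map $\boldsymbol{\nu}\in\Lamp_m(\times_{k=1}^mC_k)$, such that the generated set $B\coloneqq\{\boldsymbol{\nu}(\vv)\vv\colon\vv\in\times_{k=1}^mC_k\}$ satisfies $B\setminus\V_{\leq0}\subseteq S$. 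The core of the argument is then to move back and forth between such ``mixed'' generating sequences and the rigid sequences $\AAA$ underlying $\Posi'(\AAA)$; \cref{prop:posimp} will be the bridge in one direction, and \cref{lem:poslincomb} will dispose of the degenerate situations.

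For the ``if'' direction, first suppose $S\cap\V_{>0}\neq\emptyset$, say $s>0$ with $s\in S$: then $\{s\}\in\V^s_{>0}\subseteq\Posi(\V^s_{>0}\cup\A)$ and $\{s\}\setminus\V_{\leq0}=\{s\}\subseteq S$, so $S\in\Ex(\A)$. Otherwise, suppose $n\neq0$ and we are given $T=\{\lala(\uu)\uu\colon\uu\in\times_{j=1}^nA_j\}\in\Posi'(\AAA)$ together with, for each $\uu$, some $s_\uu\in S\cup\{0\}$ with $\lala(\uu)\uu\leq s_\uu$. Since every $A_j$ is finite, so is $\times_{j=1}^nA_j$; set $d_\uu\coloneqq s_\uu-\lala(\uu)\uu\geq0$. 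I would extend $\AAA$ by appending, for each of the finitely many $\uu$ with $d_\uu>0$, one occurrence of the positive singleton $\{d_\uu\}$, and define, for each $\uu$, a coefficient vector equal to $\lala(\uu)$ on the first $n$ coordinates, equal to $1$ on the coordinate carrying $\{d_\uu\}$ when $d_\uu>0$, and equal to $0$ on the other appended coordinates. Its coordinates are nonnegative and their sum is already strictly positive on the first $n$ of them, so this is a legitimate $\Lamp$-map, and the set it generates is precisely $\{\lala(\uu)\uu+d_\uu\colon\uu\in\times_{j=1}^nA_j\}=\{s_\uu\colon\uu\in\times_{j=1}^nA_j\}\subseteq S\cup\{0\}$. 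Since $0\in\V_{\leq0}$, removing $\V_{\leq0}$ leaves only elements of $S$, so this set witnesses $S\in\Rs(\Posi(\V^s_{>0}\cup\A))=\Ex(\A)$.

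For the ``only if'' direction, take $B$, its generating sequence $C_1,\dots,C_m$ and the map $\boldsymbol{\nu}$ as above, with $B\setminus\V_{\leq0}\subseteq S$. Split $\{1,\dots,m\}$ into the set $P_\A$ of indices $k$ with $C_k\in\A$ and the set $P_+$ of indices with $C_k=\{w_k\}$, $w_k>0$. For each generator $\vv=(v_1,\dots,v_m)$ write $\boldsymbol{\nu}(\vv)\vv=t_\vv+e_\vv$ with $t_\vv\coloneqq\sum_{k\in P_\A}\nu_k(\vv)v_k$ and $e_\vv\coloneqq\sum_{k\in P_+}\nu_k(\vv)w_k\geq0$; note $t_\vv\leq\boldsymbol{\nu}(\vv)\vv\in B\subseteq S\cup\V_{\leq0}$, so $t_\vv$ is $\leq$ some element of $S$ or $\leq0$, hence $\leq$ some element of $S\cup\{0\}$. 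Case~1: some generator $\vv$ has all $P_\A$-coordinates of $\boldsymbol{\nu}(\vv)$ equal to $0$ --- this is forced whenever $P_\A=\emptyset$, in particular when $n=0$. Then $t_\vv=0$, so $\boldsymbol{\nu}(\vv)\vv=e_\vv$; since $\boldsymbol{\nu}(\vv)\in\R^{m,+}$, some $P_+$-coordinate is then positive, so $P_+\neq\emptyset$ and \cref{lem:poslincomb} gives $e_\vv>0$, whence $\boldsymbol{\nu}(\vv)\vv\in B$ is strictly positive, cannot lie in $\V_{\leq0}$, and therefore lies in $S$; so $S\cap\V_{>0}\neq\emptyset$. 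Case~2: every generator has a nonzero $P_\A$-coordinate vector; then $P_\A\neq\emptyset$, so $n\neq0$, and since the $P_+$-components of a generator are fixed singletons, restricting $\boldsymbol{\nu}$ to its $P_\A$-coordinates is a well-defined $\Lamp$-map on $\times_{k\in P_\A}C_k$, so $T'\coloneqq\{t_\vv\colon\vv\in\times_{k=1}^mC_k\}\in\Posi(\A)$, each of whose elements is $\leq$ some element of $S\cup\{0\}$ by the observation above. Applying \cref{prop:posimp} to the finite assessment $\A=\{\AAA\}$ yields $T\in\Posi'(\AAA)$ with $T\subseteq T'$, and this $T$ witnesses the second alternative.

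The main obstacle is precisely the gap between $\Posi$, which allows arbitrary finite sequences with repetitions from $\A$, and $\Posi'(\AAA)$, which uses the single fixed sequence $A_1,\dots,A_n$; \cref{prop:posimp} closes this gap, but only once we are genuinely inside $\Posi(\A)$, which is why the degenerate case where the $P_\A$-coefficients all vanish --- equivalently, where $B$ is built purely from positive singletons --- has to be peeled off first and handled via \cref{lem:poslincomb}. The boundary cases are subsumed rather than treated apart: $n=0$ forces $P_\A=\emptyset$ and hence Case~1, giving the first alternative, consistently with $\Ex(\emptyset)=\{S:S\cap\V_{>0}\neq\emptyset\}$; and if $\emptyset\in\A$ then $\emptyset$ belongs to both $\Posi(\V^s_{>0}\cup\A)$ and $\Posi'(\AAA)$, so both sides hold for every $S$.
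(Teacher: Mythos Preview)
Your proof is correct and follows essentially the same route as the paper's: both start from \cref{cor:natext}, split the generating sequence for a witness $B\in\Posi(\V^s_{>0}\cup\A)$ into $\A$-indices and positive-singleton indices, use \cref{lem:poslincomb} to deal with the purely positive part, and then invoke \cref{prop:posimp} to pass from $\Posi(\A)$ to $\Posi'(\AAA)$; the ``if'' direction is in both cases the reverse construction that appends the finitely many positive slacks $d_\uu$ as extra singleton factors. The only organisational difference is that the paper first disposes of the case $S\cap\V_{>0}\neq\emptyset$ and then, under $S\cap\V_{>0}=\emptyset$, argues that every generator must have a nonzero $\A$-coefficient, whereas you keep both alternatives alive and let the case split on the generators decide which one fires; the underlying logic is the same.
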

\begin{proof}
We start from \cref{cor:natext}, which states that $\Ex(\A)=\Rs(\Posi(\V^s_{\succc0} \cup \A))$.
Taking into account the definition of $\Rs$, we therefore have to prove that
\begin{multline*}
\Big(\big(\exists B\in\Posi(\V^s_{\succc0} \cup \A)\big) B\setminus \V_{\precceq0} \subseteq S\Big)\Leftrightarrow \Big((S\cap\V_{\succc0}\neq\emptyset) \vee\\
\big((n\neq 0)\wedge (\exists T \in \Posi'(\AAA))(\forall t\in T)(\exists s \in S\cup\{0\})\; t\leq s\big)\Big).
\end{multline*}
We consider two cases: $S\cap \V_{>0}\neq \emptyset$ and $S\cap \V_{>0}=\emptyset$.
In the case $S\cap \V_{>0}\neq \emptyset$, the equivalence holds because both sides are true.
The righthandside is trivially true.
To show that the lefthandside is also true, we consider any $u\in S\cap \V_{>0}$ and let $B\coloneqq\{u\}.$
In the definition of $\Posi(\V_{>0}^s\cup \A)$, we can then choose $m=1$, $A_1=\{u\}\in\V_{>0}^s\cup \A$ and let $\lala\in\Lamp_1(A_1)$ be defined by $\lala(u)=1$ to see that $B=\{u\}\in\Posi(\V_{>0}^s\cup \A)$.
Furthermore, since $u\in S\cap \V_{>0}$, we clearly also have that $B\setminus \V_{\leq0}=\{u\}\subseteq S$.

So let us now consider the case $S\cap \V_{>0}=\emptyset$.
First we prove the implication to the right.
Consider any $B\in\Posi(\V^s_{>0}\cup \A)$ such that $B\setminus \V_{\leq 0} \subseteq S$.
By the definition of the $\Posi$ operator there is a positive integer $m$, a sequence $B_1,\dots,B_m\in (\V^s_{\succc0} \cup \A)$ and some $\lala \in \Lambda_{m}(\times_{k=1}^{m} B_k)$ such that $B=\{\lala(\uu)\uu \colon \uu\in \times_{k=1}^{m} B_k\}$.

We first prove that the sequence $B_1,\dots,B_{m}$ must contain at least one set in $\A$.
Assume \emph{ex absurdo} that $B_k\in\V^s_{>0}$ for all $k\in\{1,\dots,m\}$.
For all $k\in\{1,\dots,m\}$ this implies that there is some $u_k\in\V_{>0}$ such that $B_k=\{u_k\}$.
Clearly, $\times_{k=1}^m B_k$ then consists of a single tuple $\uu=(u_1,\dots,u_{m})$.
Furthermore, since $u_k\in\V_{>0}$ for all $k\in\{1,\dots,m\}$, and since $\lala(\uu)\in\R^{m,+}$, it follows from \cref{lem:poslincomb} that also $\lala(\uu)\uu=\sum_{k=1}^m \lambda_k(\uu)u_k \in \V_{>0}$.
Hence, $\lala(\uu)\uu\in B\setminus \V_{\leq0}\subseteq S$, contradicting the fact that $S\cap \V_{>0}=\emptyset$. So there is indeed at least one $k\in\{1,\ldots,m\}$ such that $B_k\in\A$.
This already implies that $n\neq 0$.
It also implies, without loss of generality, that there is some positive integer $j\in\{1,\ldots,m\}$ such that $B_1,\dots,B_j\in\A$ and that there are $p_k\in\V_{>0}$ such that $B_k=\{p_k\}\in \V_{>0}^s$ for all $k\in \{j+1,...,m\}$, where the second sequence is empty if $j=m$.

We will now prove, for any $\uu\in \times_{k=1}^{m} B_k$, that at least one of $\lambda_1(\uu),\dots,\lambda_j(\uu)$ must be non-zero.
Assume \emph{ex absurdo} that they are all zero.
Then it cannot be that $j=m$, as then $\lala(\uu)\notin \R^{m,+}$.
So consider now the case $j<m$.
Then, since $p_k\in\V_{>0}$ for all $k\in\{j+1,\dots,m\}$ and $\lala(\uu)=(0,\dots,0,\lambda_{j+1}(\uu),\dots,\lambda_m(\uu))\in\R^{m,+}$, it follows from \cref{lem:poslincomb} that $\lala(\uu)\uu=\sum_{k=j+1}^m \lambda_{k}(\uu)p_{k}>0$.
Hence, $\lala(\uu)\uu\in B\setminus \V_{\leq0}\subseteq S$, contradicting the fact that $S\cap \V_{>0}=\emptyset$.
So we conclude that, indeed, at least one of $\lambda_1(\uu),\dots,\lambda_j(\uu)$ must be non-zero.

Next for any $\mathbf{v}\in \times_{k=1}^j B_k$ we let $\boldsymbol{\mu}(\mathbf{v})\coloneqq(\lambda_1(\uu_{\vv}),\dots,\lambda_j(\uu_\vv))$, where $u_{\vv}\coloneqq(v_1,\dots,v_j,\allowbreak p_{j+1},\dots,p_m)\in\times_{k=1}^m B_k$.
Then $\mumu(\vv)\in\R^{j,+}$ because $\lala(\uu_{\vv})\in\R^{n,+}$ and because, as we proved above, $\uu_{\vv}\in \times_{k=1}^n B_k$ implies that $\lambda_1(\uu_{\vv}),\dots,\lambda_j(\uu_{\vv})$ contains at least one element that is non-zero.
The resulting operator $\mumu$ on $\smash{\times_{k=1}^j B_k}$ is therefore an element of $\smash{\Lamp_j(\times_{k=1}^j B_k)}$.

Consider now the set 
\(
T'\coloneqq\{\boldsymbol{\mu}(\mathbf{v})\mathbf{v}\colon\mathbf{v}\in \times_{k=1}^j B_k\}\in\Posi(\A).
\)
Since $n\neq 0$, it then follows from \cref{prop:posimp} that there is a set $T\in \Posi'(A_1,\dots,A_n)$ such that $T\subseteq T'$.
Consider now any option $t$ in $T\subseteq T'$.
Since $t\in T'$, there is some $\mathbf{v}\in \times_{k=1}^j B_k$ such that $t=\boldsymbol{\mu}(\mathbf{v})\mathbf{v}$.
Then 
\[
t=\boldsymbol{\mu}(\mathbf{v})\mathbf{v}=\sum_{k=1}^j \mu_k(\mathbf{v}) v_k \leq
\sum_{k=1}^j \mu_k(\mathbf{v}) v_k+\sum_{k=j+1}^i \lambda_k(\uu_{\vv}) p_k=\lala(\uu_{\vv}) \uu_{\vv}.
\]
Furthermore, since $\lambda(\uu_{\vv})\uu_{\vv}\in B$ and $B\setminus \V_{\leq0}\subseteq S$, we also know that either $\lala(\uu_{\vv})\uu_{\vv}\leq 0$ or $\lala(\uu_{\vv})\uu_{\vv}\in S$. 
Since $t \leq \lala(\uu_{\vv})\uu_{\vv}$, we can combine this to imply that $(\exists s\in S\cup\{0\})\; t\leq s$.
So $(\exists s\in S\cup\{0\})\; t\leq s$ is true for any $t\in T$.

Now we prove the implication from the right to the left.
Since $S\cap\V_{\succc0}=\emptyset$, the second condition must be true.
That is, $n\neq0$ and there is a set $T\in \Posi'(\AAA)$ such that for all $t\in T$ there exists an $s\in S\cup\{0\}$ such that $t\leq s$.
Since $T\in \Posi'(\AAA)$, there is some function $\lala\in \Lamp_n(\times_{k=1}^n A_k)$ such that $T=\{\lala(\uu)\uu\colon\uu\in\times_{k=1}^n A_k\}$.
Let $h\colon T\to S\cup\{0\}$ be a function that maps each element $t$ of $T$ to a corresponding element $h(t)\in S\cup\{0\}$ such that $t\leq h(t)$. 
Now consider the set $H\coloneqq\{h(t)-t\colon t\in T\}$.
By the definition of $h$, we have that $h(t)\geq t$ for all $t\in T$. 
Hence, for every element $g\in H$, we have that $g\geq0$.
So if we let $H_0\coloneqq H\setminus\{0\}$, then $H_0\subseteq \V_{>0}$.
Since $A_k\in\Q$ is finite for every $k\in\{1,\dots,n\}$, we know that $|\times_{k=1}^n A_k|$ is finite.
From this it follows that $T$ is finite, which in turn implies that $H$ and $H_0$ are finite.
So without loss of generality, we have that $H_0=\{h_1,\dots,h_j\}$ with $j\in\N\cup\{0\}$.
Consider the sequence $A_1,\dots,A_n,\{h_1\},\dots,\{h_j\}$ in $\A\cup\V_{>0}^s$ and, for all $\vv\in(\times_{k=1}^n A_k)\times(\times_{k=1}^j \{h_j\})$, let $\mumu(\vv)\coloneqq(\mu_1(\vv),\dots,\mu_{n+j}(\vv))$, where for all $k\in\{1,\dots,n+j\}$,
\[
\mu_k(\vv)\coloneqq
\begin{cases}
\lambda_k(\uu_\vv)& \text{if }k\leq n,\\
1 & \text{if } k>n \text{ and } h_{k-n}=h(\lala(\uu_\vv)\uu_\vv)-\lala(\uu_\vv)\uu_\vv,\\
0 & \text{otherwise,}
\end{cases}
\]
with $\uu_\vv\coloneqq(v_1,\dots,v_n)\in \times_{k=1}^n A_k$.
Then $\mumu(\vv)\in \R^{n+j,+}$ because $\lala(\uu_{\vv})\in\R^{n,+}$.
The resulting operator $\mumu$ on $(\times_{k=1}^n A_k)\times(\times_{k=1}^j \{h_j\})$ is therefore an element of $\Lamp_{n+j}((\times_{k=1}^n A_k)\times(\times_{k=1}^j \{h_j\}))$.
Since $A_1,\dots,A_n,\{h_1\},\dots,\{h_j\}\in\A\cup\V_{>0}^s$, this implies that $B\coloneqq\{\mumu(\vv)\vv \colon \vv\in (\times_{k=1}^n A_k)\times(\times_{k=1}^j \{h_j\})\}\in \Posi(\A\cup\V_{>0}^s)$.
Consider now any element $b\in B$.
Then there is some $\vv\in (\times_{k=1}^n A_k)\times(\times_{k=1}^j \{h_j\})$ such that $b=\mumu(\vv)\vv$.
Since $\uu_{\vv}\in\times_{k=1}^n A_k$, we know that $\lala(\uu_{\vv})\uu_{\vv}\in T$ and therefore, that $h_{\vv}\coloneqq h(\lala(\uu_\vv)\uu_\vv)-\lala(\uu_\vv)\uu_\vv\in H$.
We now consider two cases: $h_{\vv}=0$ and $h_{\vv}\in H_{0}$.
If $h_{\vv}=0$, then $\mu_k(\vv)=0$ for all $k\in\{n+1,\dots,n+j\}$ and therefore $\sum_{k=n+1}^{n+j}\mu_k(\vv)v_k=0=h_v$.
If $h_{\vv}\in H_0$, then there is exactly one $k^*\in\{n+1,\dots,n+j\}$ such that $h_{\vv}=h_{k^*-n}$, so $\mu_{k^*}(\vv)=1$ and $\mu_k(\vv)=0$ for all $k\in\{n+1,\dots,n+j\}\setminus\{k^*\}$, and therefore $\sum_{k=n+1}^{n+j} \mu_k(\vv)v_k=v_{k^*}=h_{k^*-n}=h_{\vv}$.
Hence, in both cases, $\sum_{k=n+1}^{n+j}\mu_k(\vv) v_k=h_{\vv}$, which implies that
\begin{multline*}
b=\mumu(\vv)\vv=\sum_{k=1}^{n+j}\mu_k(\vv) v_k=\sum_{k=1}^{n}\lambda_k(\uu_\vv) v_k+h_{\vv}\\=\lala(\uu_\vv)\uu_\vv+h(\lala(\uu_\vv)\uu_\vv)-\lala(\uu_\vv)\uu_\vv=h(\lala(\uu_\vv)\uu_\vv)\in S\cup\{0\}.
\end{multline*}
Since this is true for every $b\in B$, it follows that $B\subseteq S\cup\{0\}$ and therefore that $B\setminus \V_{\leq0}\subseteq S$.
\qed
\end{proof}

\begin{proofof}{\cref{th:testnatext}}
From \cref{th:testnatextPre} it follows that we only have to prove, for $n\neq 0$, the following equivalence:
\begin{multline*}
\big((\exists T\in \Posi'(\AAA))(\forall t\in T)(\exists s\in S\cup\{0\})\;t\leq s\big)\\ \Leftrightarrow \big((\forall \uu\in\times_{j=1}^n A_j) (\exists \lala\in \R^{n,+})( \exists s\in S\cup\{0\})\; \lala \uu \leq s\big).
\end{multline*}
For the implication to the right, assume that there is some $T\in \Posi'(\AAA)$ for which the condition holds.
By definition of $\Posi'(\AAA)$, there is a corresponding function $\mumu\in\Lamp_n(\times_{j=1}^n A_j)$ such that $T=\{\mumu(\uu) \uu \colon \uu \in \times_{j=1}^n A_j\}$.
Consider any $\uu \in \times_{j=1}^n A_j$. Then there is some $t\in T$ such that $t=\mumu(\uu)\uu$.
Also, since $T$ satisfies the condition on the left, there is some $s\in S\cup\{0\}$ such that $\mumu(\uu)\uu=t\leq s$.
So if we let $\lala\coloneqq\mumu(\uu)$ then the righthandside follows.

For the implication to the left, assume that, for any $\uu\in \times_{j=1}^n A_j$, we have some $\lala_\uu\in \R^{n,+}$ and $s_\uu \in S\cup\{0\}$ such that $\lala_\uu \uu \leq s_{\uu}$.
Now let $\mumu\in \Lamp_n(\times_{j=1}^n A_j)$ be defined by $\mumu(\uu)\coloneqq\lala_\uu$ for any $\uu\in \times_{j=1}^n A_j$.
Then we have that $T\coloneqq\{\mumu(\uu)\uu\colon\uu\in\times_{j=1}^n A_j\}\in \Posi'(\AAA)$.
Furthermore, for any $t\in T$, there is a corresponding $\uu\in\times_{j=1}^n A_j$ such that $t=\mumu(\uu)\uu=\lala_\uu \uu \leq s_\uu\in S\cup\{0\}$.
\end{proofof}




\begin{proofof}{\cref{prop:simpindiv}}
Since $\A_1$ and $\A_2$ are equivalent, we have by definition of equivalence that $\Ka(\A_1)=\Ka(\A_2)$.
\begin{multline*}
\Ka((\A \setminus \A_1) \cup \A_2)=\{K\in\Ka\colon(\A \setminus \A_1)\cup \A_2 \subseteq K\}\\=\{K\in\Ka\colon(\A \setminus \A_1)\subseteq K, \A_2 \subseteq K\}\\=\{K\in\Ka\colon(\A \setminus \A_1)\subseteq K\}\cap\{K\in\Ka\colon \A_2 \subseteq K\}
\end{multline*}
and since $\{K\in\Ka\colon \A_2 \subseteq K\}=\Ka(\A_2)=\Ka(\A_1)=\{K\in\Ka\colon \A_1 \subseteq K\}$, it follows that 
\begin{multline*}
\Ka((\A \setminus \A_1) \cup \A_2)=\{K\in\Ka\colon(\A \setminus \A_1)\subseteq K\}\cap\{K\in\Ka\colon \A_1 \subseteq K\}\\=\{K\in\Ka\colon(\A \setminus \A_1)\cup \A_1 \subseteq K\}.
\end{multline*}
Since $\A_1\subseteq \A$, we have that $(\A \setminus \A_1)\cup \A_1=\A$ and thus $\Ka((\A \setminus \A_1) \cup \A_2)=\{K\in\Ka\colon \A\subseteq K\}=\Ka(\A)$.
\end{proofof}

\begin{proofof}{\cref{prop:wegMetDieNul}}
For equivalence of the assessments it suffices, by definition of equivalence, to show that $A\in K \Leftrightarrow A\setminus\{0\}\in K$.
Any coherent $K$ that contains a set $A\in \Q$ will also contain $A\setminus \{0\}$ by axiom $\mathrm{K}_0$ and vice versa by axiom $\mathrm{K}_4$.
\end{proofof}

\begin{proofof}{\cref{th:simp1}}
By the definition of equivalence of these two assessments it suffices to prove for any coherent set of desirable option sets $K$ that
\[A \in K \Leftrightarrow (A\setminus \{u\}) \in K.\]
The implication to the left follows immediately from axiom $\mathrm{K}_4$.
So it remains to prove the implication to the right.
Consider any $A\in K$.
We will consider two cases: $\mu>0$ and $\mu=0$.

First we consider the case where $\mu>0$.
Since $0\leq \mu v -  u$ either $\mu v -  u =0$ or $(\mu v -  u) \in\V_{>0}$.
If $(\mu v -  u) \in\V_{>0}$, then let $p\coloneqq \mu v -  u$, and otherwise let $p$ be any option in $\V_{>0}$.
Let
\[
\lala\colon A\times \{p\}\to\R^{2,+}\colon (w,t)\mapsto 
\begin{cases}
(\mu^{-1},0) &\text{if } w=u\text{ and }\mu v -  u =0\\
(\mu^{-1},\mu^{-1}) &\text{if } w=u\text{ and } \mu v - u > 0\\
(1,0) & \text{otherwise.} 
\end{cases}
\]
Then, 
\begin{multline*}
\lala(u,p) (u,p)=\\
\begin{cases}
\mu^{-1} u + 0\cdot p & \text{if }\mu v-u=0\\
\mu^{-1} u + \mu^{-1} p &\text{if }\mu v-u>0
\end{cases}
=
\begin{cases}
\mu^{-1} \mu v & \text{if }\mu v-u=0\\
\mu^{-1} u + \mu^{-1} (\mu v-u) &\text{if }\mu v-u>0
\end{cases}\\
=v
\end{multline*}
and $\lala(w,p)(w,p)=w$ for all $w\in A\setminus \{u\}$.
Hence,
\[
\{\lala(\uu)\uu\colon \uu \in A\times \{p\}\}=(A\setminus\{u\})\cup\{v\}=A\setminus\{u\},
\]
where the last equality follows because $v\in A$ and $v\neq u$.
Since $A\in K$ by assumption and $\{p\}\in \V_{>0}^s\subseteq K$ because of $\mathrm{K}_2$, it therefore follows from $\mathrm{K}_3$ that $A\setminus\{u\}\in K$.

Next we consider the case $\mu=0$.
Then $u\leq\mu v=0$.
We again consider two cases: $u=0$ and $u\neq 0$.
If $u=0$, it follows from $\mathrm{K}_1$ that $A\setminus\{u\}=A\setminus\{0\}\in K$.
So it remains to consider the case $u\neq0$.
Since $u\leq 0$, we then have that $u<0$, so $-u>0$ and therefore $p\coloneqq -u\in \V_{>0}$.
Let
\[
\lala\colon A\times \{p\}\to\R^{2,+}\colon (w,t)\mapsto 
\begin{cases}
(1,1) & \text{if }w=u\\
(1,0) & \text{otherwise.} 
\end{cases}
\]
Then,
$\lala(u,p) (u,p)= u +p=u-u= 0$ and $\lala(w,p)(w,p)=w$ for all $w\in A\setminus \{u\}$.
Hence,
\[
\{\lala(\uu)\uu\colon \uu \in A\times \{p\}\}=(A\setminus\{u\})\cup\{0\}.
\]
Since $A\in K$ by assumption and $\{p\}\in \V_{>0}^s\subseteq K$ because of $\mathrm{K}_2$, it therefore follows from $\mathrm{K}_3$ that $(A\setminus\{u\})\cup\{0\}\in K$.
Axiom $\mathrm{K}_1$ therefore implies that $(A\setminus\{u\})\setminus\{0\}=((A\setminus\{u\})\cup\{0\})\setminus\{0\}\in K$, and so it follows from $\mathrm{K}_4$ that $A\setminus\{u\}\in K$.
\end{proofof}

To facilitate the statement of our next three results, we introduce the following notation for the second and fourth quadrant of $\R^2$, excluding the origin: \[Q_2\coloneqq\{(x,y)\in \R^2\setminus \{(0,0)\}\colon x\leq 0,y\geq0\}\] and \[Q_4\coloneqq\{(x,y)\in \R^2\setminus\{(0,0)\} \colon x\geq 0,y\leq0\}.\]

\begin{lemma}\label{lem:Q4}
Consider any $u=(u_1,u_2)$ and $v=(v_1,v_2)$ in $Q_2\cup Q_4$ such that $u\not\leq \mu_{v}v$ for all $\mu_{v}\geq0$ and $v\not\leq\mu_{u} u$ for all $\mu_u\geq 0$. Then $\{u,v\}\cap Q_4\neq \emptyset$.
\end{lemma}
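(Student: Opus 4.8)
The plan is to reduce the statement to a claim purely about $Q_2$ and then argue that claim by contradiction. Concretely, I would show first that it suffices to prove: for any two options $u,v\in Q_2$, at least one of the relations $u\leq\mu_v v$ (for some $\mu_v\geq0$) or $v\leq\mu_u u$ (for some $\mu_u\geq0$) holds. Once this is established, the two non-domination hypotheses of the lemma forbid $u$ and $v$ from both lying in $Q_2$; since $Q_2$ and $Q_4$ are disjoint and $u,v\in Q_2\cup Q_4$, at least one of $u,v$ must then lie in $Q_4$, which is exactly the conclusion.

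To prove the reduced claim I would fix $u=(u_1,u_2)$ and $v=(v_1,v_2)$ in $Q_2$, so that $u_1,v_1\leq0$, $u_2,v_2\geq0$, and neither option is the origin, and assume towards a contradiction that $u\not\leq\mu v$ and $v\not\leq\mu u$ for every $\mu\geq0$. The first step is to feed $\mu=0$ into both assumptions: this gives $u\not\leq(0,0)$ and $v\not\leq(0,0)$, and since the first coordinates are already $\leq0$, this forces $u_2>0$ and $v_2>0$. The second step is to dispose of the remaining boundary positions: if $v_1=0$, then $\mu v=(0,\mu v_2)$, and the choice $\mu\coloneqq u_2/v_2\geq0$ yields $\mu v=(0,u_2)\geq(u_1,u_2)=u$, contradicting $u\not\leq\mu v$; hence $v_1<0$, and symmetrically $u_1<0$.

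That leaves the generic case $u_1,v_1<0$ and $u_2,v_2>0$, which I expect to be routine. For $\mu\geq0$, dividing the two coordinate inequalities defining $u\leq\mu v$ by $v_1<0$ and by $v_2>0$ respectively rewrites $u\leq\mu v$ as $u_2/v_2\leq\mu\leq u_1/v_1$; since both ratios are positive, such a $\mu$ exists precisely when $u_2/v_2\leq u_1/v_1$. Hence the assumption $u\not\leq\mu v$ for all $\mu\geq0$ is equivalent to $u_2/v_2>u_1/v_1$, and by the symmetric computation $v\not\leq\mu u$ for all $\mu\geq0$ is equivalent to $v_2/u_2>v_1/u_1$, i.e.\ to $u_2/v_2<u_1/v_1$ after taking reciprocals of positive numbers. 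These two statements are incompatible, giving the desired contradiction.

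The main obstacle is not conceptual but is the sign- and case-bookkeeping: the coordinate inequalities can only be collapsed into two-sided bounds on the single scalar $\mu$ after the degenerate positions where a coordinate of $u$ or $v$ vanishes (equivalently, where $u$ or $v$ lies on one of the two bounding rays of $Q_2$) have been peeled off, and one must get the direction of each division right according to the sign of $v_1$ and $v_2$. Once those cases are handled, the argument reduces to the trivial fact that the two real numbers $u_1/v_1$ and $u_2/v_2$ are comparable.
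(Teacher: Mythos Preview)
Your proposal is correct and follows essentially the same route as the paper: assume for contradiction that both $u$ and $v$ lie in $Q_2$, and then produce a scalar $\mu\geq 0$ with $u\leq\mu v$ or $v\leq\mu u$. The paper organizes the case split slightly differently---it assumes without loss of generality that $u_1v_2\leq u_2v_1$ and then distinguishes only $v_2>0$ versus $v_2=0$---but the underlying idea (comparing the two ratios and peeling off degenerate boundary cases) is the same.
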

\begin{proof}
Assume \emph{ex absurdo} that $u,v\notin Q_4$.
Since $u,v\in Q_2\cup Q_4$ and $Q_2\cap Q_4=\emptyset$, this implies that $u,v\in Q_2$, so $u_1\leq 0$, $u_2 \geq 0$,$v_1\leq 0$,$v_2 \geq 0$, $(u_1,u_2)\neq (0,0)$ and $(v_1,v_2)\neq (0,0)$.
Assume without loss of generality that $u_1 v_2 \leq u_2 v_1$ (if not, then simply reverse the roles of $u$ and $v$).
We consider two cases $v_2> 0$ and $v_2=0$.

If $v_2>0$, it follows from $u_1 v_2 \leq u_2 v_1$ that $u_1\leq \frac{u_2}{v_2} v_1$.
Hence, if we let $\mu_v\coloneqq\frac{u_2}{v_2}\geq 0$, then $\mu_v v=\frac{u_2}{v_2}(v_1,v_2)=(\frac{u_2 v_1}{v_2}, u_2)\geq (u_1, u_2)=u$; a contradiction.

So it remains to consider the case $v_2=0$.
In that case, it follows from $u_1 v_2 \leq u_2 v_1$ that $0\leq u_2 v_1$ and from $(v_1,v_2)\neq(0,0)$ and $v_1\leq 0$ that $v_1<0$.
Since $u_2 \geq 0$, $v_1< 0$ and $0\leq u_2 v_1$, we see that $u_2=0$.
So $u=(u_1,0)$ and $v=(v_1,0)$.
But then either $u\leq v$ or $v\leq u$.
So if we let $\mu_u=\mu_v=1$, then either
\(
 u\leq \mu_v v\) or \(
 v\leq \mu_u u
\); again a contradiction.
\end{proof}

\begin{lemma}\label{lem:Q2enQ4}
Consider any $u=(u_1,u_2)$ and $v=(v_1,v_2)$ in $\R^2\setminus\{(0,0)\}$ such that \(
 u\not\leq \mu_v v\) for all $\mu_v\geq0$ and \(
 v\not\leq \mu_u u
\) 
for all $\mu_u\geq 0$.
Then either $u\in Q_2$ and $v\in Q_4$, or, $u\in Q_4$ and $v\in Q_2$.
\end{lemma}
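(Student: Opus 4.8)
The plan is to first squeeze the easy information out of the hypotheses, then locate $u$ and $v$ up to a ``quadrant'', and finally apply \cref{lem:Q4} twice---the second time after a coordinate swap---to pin down the split.

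First I would instantiate $\mu_v=0$ and $\mu_u=0$ in the two hypotheses, which gives $u\not\leq 0$ and $v\not\leq 0$; so each of $u$ and $v$ has at least one strictly positive component. Next I would show that $u$ cannot lie in the open first quadrant (and symmetrically for $v$): if $u_1>0$ and $u_2>0$, then, since $v\not\leq 0$, the scalar $\mu\coloneqq\max\{v_1/u_1,v_2/u_2\}$ is nonnegative and satisfies $\mu u_i\geq v_i$ for $i\in\{1,2\}$, so $v\leq\mu u$ with $\mu\geq 0$, contradicting the hypothesis. Hence $u_1\leq 0$ or $u_2\leq 0$, and combining this with $u\not\leq 0$ forces either $u_1\leq 0$ and $u_2>0$ (so $u\in Q_2$) or $u_2\leq 0$ and $u_1>0$ (so $u\in Q_4$). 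The same argument applies to $v$, so $u,v\in Q_2\cup Q_4$.

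At this point \cref{lem:Q4} applies directly and yields $\{u,v\}\cap Q_4\neq\emptyset$. To obtain the complementary conclusion I would use the coordinate-swap map $\sigma\colon(x,y)\mapsto(y,x)$: it is an order isomorphism, it maps $Q_2$ onto $Q_4$ and $Q_4$ onto $Q_2$, and since $\sigma(\mu w)=\mu\,\sigma(w)$ it preserves both incomparability hypotheses; applying \cref{lem:Q4} to $\sigma u$ and $\sigma v$ therefore gives $\{\sigma u,\sigma v\}\cap Q_4\neq\emptyset$, i.e.\ $\{u,v\}\cap Q_2\neq\emptyset$. As $Q_2\cap Q_4=\emptyset$, exactly one of $u,v$ lies in $Q_2$ and the other in $Q_4$, which is the claim.

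The step I expect to be the most delicate is showing $u,v\in Q_2\cup Q_4$: one must resist the temptation to place $u$ and $v$ in the \emph{open} second and fourth quadrants, because boundary cases with a zero component genuinely occur (for instance $u=(1,0)$, $v=(0,2)$ satisfies the hypotheses). The correct and only thing that can be excluded is the open first quadrant, which is exactly what the scaling argument above accomplishes; everything else then follows from \cref{lem:Q4} and the coordinate symmetry.
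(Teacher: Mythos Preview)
Your proposal is correct and follows essentially the same route as the paper: first show $u,v\in Q_2\cup Q_4$, then apply \cref{lem:Q4} directly and once more after the coordinate swap $(x,y)\mapsto(y,x)$, and conclude using $Q_2\cap Q_4=\emptyset$. The only cosmetic difference is in the first step: the paper argues by contradiction on $u\notin Q_2\cup Q_4$ (splitting into the open first and open third quadrants, using $\mu_u=\max(|v_1|/u_1,|v_2|/u_2)$ for the former and $\mu_v=0$ for the latter), whereas you first instantiate $\mu_v=0$ to get $u\not\leq 0$ and then rule out the open first quadrant with $\mu=\max(v_1/u_1,v_2/u_2)$; both arrive at the same place.
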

\begin{proof}
We first prove that $u\in Q_2\cup Q_4$.
Assume \emph{ex absurdo} that $u\notin Q_2\cup Q_4$.
Since $u\neq(0,0)$, there are two possibilities: $u_1>0$ and $u_2>0$, or, $u_1<0$ and $u_2<0$.
If $u_1>0$ and $u_2>0$, then for $\mu_u\coloneqq\max(\frac{|v_1|}{u_1},\frac{|v_2|}{u_2})$, we see that $v\leq \mu_u u$; a contradiction.
If $u_1<0$ and $u_2<0$, then for $\mu_v\coloneqq0$ we see that $u\leq \mu_v v$; again a contradiction.
Hence, indeed, $u\in Q_2\cup Q_4$.
A completely analogous argument---just swap the roles of $u$ and $v$---yields that also $v\in Q_2\cup Q_4$.
So it follows from \cref{lem:Q4} that $\{u,v\}\cap Q_4\neq \emptyset$.

Next, we prove that $\{u,v\}\cap Q_2\neq \emptyset$.
Consider the vectors $u'\coloneqq(u_2,u_1)$ and $v'\coloneqq(v_2,v_1)$.
Then since $u\not\leq \mu_v  v$ for all $\mu_v\geq 0$, we see that also $u'\not\leq \mu_{v'} v'$ for all $\mu_{v'}\geq 0$.
Secondly, since $v\not\leq \mu_u  u$ for all $\mu_u\geq 0$, we see that also $v'\not\leq \mu_{u'} u'$ for all $\mu_{u'}\geq 0$.
It therefore follows from \cref{lem:Q4} that, indeed, $\{u,v\}\cap Q_2\neq \emptyset$.
So we conclude that $\{u,v\}\cap Q_4\neq \emptyset$ and $\{u,v\}\cap Q_2\neq \emptyset$.
Since $Q_2\cap Q_4=\emptyset$, this implies that either $u\in Q_2$ and $v\in Q_4$, or, $u\in Q_4$ and $v\in Q_2$. 
\qed
\end{proof}

\begin{proofof}{\cref{prop:simp1inR2}}
If $A$ contains at most two options, then we can choose $B=A$.
Otherwise, assume \emph{ex absurdo} that $A$ consists of $n\geq 3$ options and cannot be simplified any further by \cref{th:simp1}.
Since $A$ then consists of at least three options, we can fix three distinct options in $A$: $u,v$ and $w$.
We consider two cases: when at least one these three options equals $(0,0)$ and when none of them equals $(0,0)$.

In the first case, we can assume without loss of generality that $u=(0,0)$.
But then we can remove $u$ by \cref{th:simp1} because $u\neq v$ and, for $\mu=0$, $u\leq \mu v$.
This contradicts our assumption that $A$ cannot be simplified any further.

So it remains to consider the second case, where none of the options $u,v,w$ equals $(0,0)$.
Since $A$ cannot be simplified any further by \cref{th:simp1}, and since $u\neq v$, it must be that $u\not\leq \mu_v v$ for all $\mu_v\geq 0$ and $v\not\leq \mu_u u$ for all $\mu_u\geq 0$.
Since $u\neq(0,0)$ and $v\neq (0,0)$, it therefore follows from Lemma 24 that
\begin{equation}\label{eq:1}
(u\in Q_2\text{ and }v\in Q_4)\text{ or }(u\in Q_4\text{ and }v\in Q_2).
\end{equation}
A completely analogous argument yields that
\begin{equation}\label{eq:2}
(u\in Q_2\text{ and }w\in Q_4)\text{ or }(u\in Q_4\text{ and }w\in Q_2)
\end{equation}
and that
\begin{equation}\label{eq:3}
(v\in Q_2\text{ and }w\in Q_4)\text{ or }(v\in Q_4\text{ and }w\in Q_2).
\end{equation}
If $u\in Q_2$, then since $Q_2\cap Q_4=\emptyset$, it follows from \cref{eq:1} that $v\in Q_4$ and from \cref{eq:2} that $w\in Q_4$, contradicting \cref{eq:3}.
If $u\in Q_4$, then since $Q_2\cap Q_4=\emptyset$, it follows from \cref{eq:1} that $v\in Q_2$ and from \cref{eq:2} that $w\in Q_2$, again contradicting \cref{eq:3}.
\end{proofof}

\begin{proofof}{\cref{th:simp2}}
We prove the two inclusions $\Ka(\A)\subseteq\Ka(\A\setminus\{A\})$ and $\Ka(\A\setminus\{A\})\subseteq\Ka(\A)$ separately.
$\Ka(\A)\subseteq\Ka(\A\setminus\{A\})$ follows directly from the definition.
Now we prove the other inclusion.
Consider any $K\in\Ka(\A\setminus\{A\})$.
Then on the one hand, since $A\in\Ex(\A\setminus\{A\})$, it follows from \cref{eq:defEx} that $A\in K$.
On the other hand, $K\in\Ka(\A\setminus\{A\})$ also implies that $K\in \Ka$ and $\A\setminus\{A\}\subseteq K$.
Since $\A\setminus\{A\}\subseteq K$ and $A\in K$, we find that $\A \subseteq K$.
Since $K\in\Ka$, this implies that $K\in\Ka(\A)$.
Since this is true for every $K\in\Ka(\A\setminus\{A\})$, it follows that $\Ka(\A\setminus\{A\})\subseteq \Ka(\A)$.
\end{proofof}


\begin{proofof}{\cref{prop:hunt}}
\textsc{IsFeasible}$(\uu,s)$=True is by definition equivalent to
$(\exists \lala\in\R^{n,+})\lala \uu \leq s$.
As such it suffices to prove that
\begin{multline*}
(\exists \lala\in\R^{n,+})\lala \uu \leq s\\
\Leftrightarrow (\exists \mumu \in \R^{n+1})\big((\textstyle\sum_{j=1}^n \mu_j u_j \leq \mu_{n+1} s)\wedge (\mu_{n+1}\geq 1)\\\wedge((\forall j\in \{1,\dots,n\})\mu_j\geq0)\wedge (\textstyle\sum_{j=1}^n \mu_j\geq 1)\big).
\end{multline*}
First we prove the implication to the right.
Choose 
\[
\mumu=
\begin{cases}
(\lambda_1, \dots , \lambda_n,1) &\text{if }\sum_{j=1}^n \lambda_j\geq 1,\\
\left(\frac{\lambda_1}{\sum_{j=1}^n \lambda_j}, \dots , \frac{\lambda_n}{\sum_{j=1}^n \lambda_j},\frac{1}{\sum_{j=1}^n \lambda_j}\right) &\text{otherwise,}
\end{cases}
\] then the righthandside is true.
Next we prove the implication to the left.
Choose $\lala=(\frac{\mu_1}{\mu_{n+1}},\dots,\frac{\mu_n}{\mu_{n+1}} )$, then the lefthandside is true.
\end{proofof}

}{}

\end{document}